\documentclass[11pt]{article}
\usepackage[margin=1in]{geometry}
\usepackage{times}
\usepackage{amsmath,amssymb,amsthm}
\usepackage{makecell}
\usepackage{tikz}
\usepackage{floatpag}
\usepackage{multirow}
\usepackage{titletoc}
\usepackage[ruled,vlined,linesnumbered]{algorithm2e}
\DontPrintSemicolon
\usepackage[authoryear,round]{natbib}
\usepackage[colorlinks=true,citecolor=green!40!black,linkcolor=black,urlcolor=black]{hyperref}
\usepackage{aliascnt}
\usepackage[nameinlink,noabbrev]{cleveref}
\crefname{assumption}{Assumption}{Assumptions}
\crefname{appendix}{Appendix}{Appendices}
\crefname{definition}{Definition}{Definitions}
\crefname{equation}{Equation}{Equations}
\crefname{figure}{Figure}{Figures}
\crefname{algorithm}{Algorithm}{Algorithms}
\crefname{corollary}{Corollary}{Corollaries}
\crefname{lemma}{Lemma}{Lemmas}
\crefname{section}{Section}{Sections}
\crefname{table}{Table}{Tables}
\crefname{theorem}{Theorem}{Theorems}
\newcommand{\E}{\mathbb E}
\newcommand{\R}{\mathbb R}
\newcommand{\Reg}{\mathrm{Reg}}
\newcommand{\poly}{\mathrm{poly}}
\newcommand{\tildep}{\widetilde{p}}

\newcommand{\tildeM}{\widetilde{M}}

\newcommand{\conv}{\mathrm{conv}}
\newcommand{\calA}{\mathcal{A}}

\newcommand{\calE}{\mathcal{E}}

\newcommand{\calI}{\mathcal{I}}
\newcommand{\calJ}{\mathcal{J}}
\newcommand{\calK}{\mathcal{K}}

\newcommand{\calN}{\mathcal{N}}
\newcommand{\calO}{\mathcal{O}}

\newcommand{\calS}{\mathcal{S}}

\newcommand{\inner}[1]{\langle #1\rangle}
\newcommand{\Inner}[1]{\left\langle #1\right\rangle}
\newcommand{\norm}[1]{\lVert #1\rVert}

\newcommand{\sfL}{\mathsf{L}}
\newcommand{\thetahat}{\widehat{\theta}}
\newcommand{\Thetahat}{\widehat{\Theta}}
\newcommand{\numberthis}{\refstepcounter{equation}\tag{\theequation}}
\renewcommand{\d}{\mathrm{d}}
\DeclareMathOperator*{\argmax}{arg\,max}
\DeclareMathOperator{\Cov}{Cov}
\DeclareMathOperator{\tr}{tr}
\DeclareMathOperator{\op}{op}
\newtheorem{lemma}{Lemma}
\newaliascnt{theorem}{lemma}
\newtheorem{theorem}[theorem]{Theorem}
\aliascntresetthe{theorem}
\newaliascnt{corollary}{lemma}
\newtheorem{corollary}[corollary]{Corollary}
\aliascntresetthe{corollary}
\theoremstyle{plain}
\newtheorem{definition}{Definition}
\theoremstyle{plain}
\newtheorem{assumption}{Assumption}
\allowdisplaybreaks[2]
\usepackage{microtype}
\title{Follow the Perturbed Leader for Linear Bandits: \\$\sqrt{T}$ Regret over General Action Sets}
\author{Chen-Yu Wei\\
University of Virginia\\
\texttt{chenyu.wei@virginia.edu}}
\date{}
\begin{document}
\maketitle
\begin{abstract}
We give the first adversarial linear bandit algorithm based on Follow-the-Perturbed-Leader (FTPL) that achieves $\sqrt{T}$ regret over general action sets. Our algorithm attains $\widetilde{\calO}(d^{5/4}\sqrt{T})$ regret using only linear optimization oracles, and the near-optimal $\widetilde{\calO}(d\sqrt{T})$ regret when an optimal-design exploration distribution is further available. This improves on the $T^{2/3}$ regret of the FTPL algorithm of \cite{dani2006} for general action sets, and generalizes the $\sqrt{T}$ regret of \cite{levy2026} over two special action sets (hypercube and Euclidean ball). Compared to previous oracle-efficient linear bandit algorithms based on continuous exponential weights, the FTPL approach generally has better computational efficiency and is much simpler to implement.  
 
The key algorithmic element is a perturbation distribution that adapts to the past reward estimators. Its simplest form, an adaptive Gaussian perturbation, is straightforward to implement and already attains the regret bound above. We further extend it to two general constructions, \emph{random-weighting} and \emph{elliptical} perturbations, each covering distributions from light tails (Gaussian) to heavy tails (Student-$t_2$). On the technical side, we develop a systematic way to analyze the stability term in the FTPL regret through the Fisher information of the perturbation distribution. This builds a general framework for FTPL beyond existing ones such as the hazard rate and the FTRL-FTPL duality. 
\end{abstract}

\section{Introduction}\label{sec:introduction}

In a $d$-dimensional linear bandit problem, in each round, the environment chooses a reward vector $\theta_t\in \R^d$ and the learner chooses an action $a_t$ from an action set $\calA\subset \R^d$. Then the learner receives the scalar reward $\inner{a_t, \theta_t}$ as feedback. Linear bandits model decision-making problems where the number of possible actions could be enormous, but each action is encoded as a low-dimensional feature vector and the reward depends on the action only through the feature. 
 
The adversarial linear bandit problem considers the case where $\theta_t$ could be time-varying, which makes the formulation robust to changing environments. The minimax regret of adversarial linear bandits is of order $d\sqrt{T}$ up to logarithmic factors \citep{dani2008price}, and there are polynomial-time algorithms achieving this rate. However, these algorithms are in general highly computationally demanding or restrictive: the SCRiBLe algorithm \citep{abernethy2009competing} based on Follow-the-Regularized-Leader (FTRL) requires access to a self-concordant barrier for the action set (including the ability to compute its gradient and Hessian); algorithms based on continuous exponential weights \citep{hazan2016volumetric, ito2019oracle} can be run on general convex sets as long as the learner is equipped with a membership or separation oracle for the action set, but the time and oracle complexities of these algorithms usually involve high powers of $d$ and $T$ with large leading constants, making them generally impractical.

On the other hand, Follow-the-Perturbed-Leader (FTPL) methods are known for their computational efficiency, requiring only a linear optimization oracle. They have been the method of choice for combinatorial bandits with \emph{semi-bandit} feedback, where the reward of each selected component is revealed \citep{neu2016importance}. With \emph{full-bandit} feedback, where only the total reward is revealed, combinatorial bandits become a special case of linear bandits, and the best known rate with FTPL is $T^{2/3}$ in general \citep{awerbuch2004, dani2006}. Recently, \citet{levy2026} developed an FTPL algorithm achieving $\widetilde{\calO}(d\sqrt{T})$ regret for the Euclidean ball and hypercube. However, these are highly structured action sets for which efficient specialized algorithms are available. For general combinatorial action sets, where the computational advantages of FTPL are particularly valuable, achieving $\sqrt{T}$ regret with FTPL had remained open.

In this work, we develop the first FTPL algorithm for linear bandits achieving $\sqrt{T}$ regret over general action sets (\cref{thm:main-informal}). Among oracle-efficient algorithms for general action sets, it arguably offers the best balance between regret and computation, and it is simple to implement. In particular, for non-convex action sets (e.g., combinatorial problems), our FTPL operates directly on $\calA$ without additional overhead, since a linear optimization oracle for $\calA$ also serves a linear optimization oracle for $\conv(\calA)$. In contrast, existing continuous exponential weights algorithms based on membership \citep{hazan2016volumetric} or separation \citep{ito2019oracle} oracles require additional procedures to implement these oracles using linear optimization, and to decompose points in $\conv(\calA)$ into convex combinations of actions in $\calA$. Both introduce overhead.

\begin{table}[t]
\centering
\footnotesize
\renewcommand{\arraystretch}{1.5}
\setlength{\tabcolsep}{4pt}
\caption{Comparison of FTPL algorithms for adversarial linear bandits. $R$ is an outer radius of the action set after normalizing the covariance of the exploration basis; see \cref{ass:action-geometry} for its definition. Obtaining the optimal design exploration basis may be computationally hard for general action sets, but it can be efficiently constructed in special cases such as the hypercube, Euclidean ball, $m$-set, probability simplex, and the space of permutation matrices.}
\vspace{5pt}
\label{tab:regret-comparison}
\begin{tabular}{|l|l|l|l|l|}
\hline
\textbf{Method} & \textbf{Action set} & \textbf{Regret in $\widetilde{\calO}$} & \textbf{Regret notion} & \textbf{Exploration basis} \\
\hline
\citet{awerbuch2004}
& general
& $d^{5/3}T^{2/3}$
& pseudo regret & barycentric spanner \\
\hline
\makecell[l]{\cite{mcmahan2004}\\[-1pt]\cite{dani2006}} 
& general
& $dT^{2/3}$
& expected regret & barycentric spanner \\
\hline
\citet{levy2026}
& \makecell[l]{hypercube or \\[-1pt] Euclidean ball}
& $d\sqrt{T}$
& pseudo regret & -- \\
\hline
\textbf{This paper}
& general
& $(d+Rd^{3/4})\sqrt{T}$
& pseudo regret & \makecell[l]{barycentric spanner ($R\leq 4\sqrt{d}$)\\[-1pt]optimal design ($R=1$)}\\
\hline
\end{tabular}
\par\medskip

\end{table}

The key element underlying our algorithm is a time-varying, data-adaptive perturbation distribution. In its simplest form, the perturbation at round $t$ is drawn from $\mathcal{N}\big(0, \lambda I + \sum_{\tau<t}\widehat{\theta}_\tau\widehat{\theta}_\tau^\top\big)$, where $\widehat{\theta}_\tau$ are the reward estimators and $\lambda>0$. This instance is easy to implement and already attains our main result:

\begin{theorem}[Main result, informal]\label{thm:main-informal}
Let $\calA\subset\R^d$ be an action set accessed through a linear optimization oracle. FTPL with the adaptive Gaussian perturbation above, combined with geometric resampling and barycentric-spanner exploration, achieves $\Reg_T(a)=\widetilde{\calO}(d^{5/4}\sqrt{T})$ for every $a\in\calA$. With an optimal-design exploration distribution instead, the regret improves to the near-optimal $\widetilde{\calO}(d\sqrt{T})$. The same bounds hold for every perturbation distribution in \cref{sec: algorithm overview}. See \cref{cor:final-multiplier-bounds} for the formal statement.
\end{theorem}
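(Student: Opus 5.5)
The plan follows the standard FTPL-for-bandits template, with the adaptive Gaussian perturbation analyzed through a Fisher-information stability bound. The algorithm works as follows. At round $t$, with probability $\gamma$ it plays an action from the exploration distribution $\pi$ (barycentric spanner or optimal design). Otherwise it plays $\argmax_{a\in\calA}\inner{a,\Thetahat_{t-1}+z_t}$, where $\Thetahat_{t-1}=\sum_{\tau<t}\thetahat_\tau$ and $z_t\sim\calN(0,\Sigma_t)$ with $\Sigma_t=\lambda I+\sum_{\tau<t}\thetahat_\tau\thetahat_\tau^\top$. The estimator is $\thetahat_t=\tildeM_t^{-1}a_t\inner{a_t,\theta_t}$, where $\tildeM_t^{-1}$ comes from geometric resampling of the action covariance $M_t=\E[a a^\top]$, truncated at $K$ samples.

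\textbf{Step 1: regret decomposition.} Write the regret against $a$ as
\[
\E\Bigl[\textstyle\sum_t \inner{a-a_t,\theta_t}\Bigr]\le \gamma T + \text{(GR bias)} + \E\Bigl[\textstyle\sum_t\inner{a-\bar a_t,\thetahat_t}\Bigr],
\]
where $\bar a_t$ is the mean FTPL action. The geometric-resampling bias is $\calO(T/K)$ times a constant, and it is controlled through $M_t\succeq\gamma\,\E_\pi[aa^\top]$. For the last term, compare with the "be-the-leader" version by defining the potential $\Phi_t(\Theta)=\E_{z\sim\calN(0,\Sigma_t)}\max_{a}\inner{a,\Theta+z}$. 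Then $\bar a_t=\nabla\Phi_t(\Thetahat_{t-1})$, and the usual convex-potential telescoping gives three terms:
\begin{itemize}
\item a \emph{penalty term} $\Phi_{T+1}(0)\lesssim \E\max_a\inner{a,z}$;
\item \emph{covariance-change terms} $\Phi_{t+1}(\Thetahat_t)-\Phi_t(\Thetahat_t)$;
\item \emph{stability terms} $\Phi_t(\Thetahat_t)-\Phi_t(\Thetahat_{t-1})-\inner{\nabla\Phi_t(\Thetahat_{t-1}),\thetahat_t}$.
\end{itemize}
After the normalization in \cref{ass:action-geometry}, $\calA$ lies in a ball of radius $R$ in the $\E_\pi[aa^\top]$ geometry. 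The penalty is then at most $R\sqrt{d\,\tr(\Sigma_{T+1})}$ in the right norm, which scales like $R\sqrt{d\sum_t\|\thetahat_t\|^2}$. The covariance-change terms are nonnegative because $\Sigma$ only grows, and they are bounded by the same expression by telescoping the Gaussian width.

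\textbf{Step 2: stability via Fisher information.} This is the key step. Write the stability term as the integral over $s\in[0,1]$ of $\inner{\nabla\Phi_t(\Thetahat_{t-1}+s\thetahat_t)-\nabla\Phi_t(\Thetahat_{t-1}),\thetahat_t}$. Then move the derivative onto the density $p$ of $z$ (Stein's identity):
\[
\nabla^2\Phi_t=\E[a^*(\Theta+z)\,(-\nabla\log p(z))^\top].
\]
By Cauchy--Schwarz in the direction $\thetahat_t$, the change in $\bar a$ is bounded by
\[
\sqrt{\E\inner{a^*,\ell}^2}\cdot\sqrt{\thetahat_t^\top I(p)\,\thetahat_t},
\]
where $I(p)=\Sigma_t^{-1}$ is the Fisher information. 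The adaptive choice of $\Sigma_t$ makes $\sum_t\thetahat_t^\top\Sigma_t^{-1}\thetahat_t$ an elliptical-potential sum, which is $\calO(d\log T)$ because each $\|\thetahat_t\|_{\Sigma_t^{-1}}\le$ const. This requires $\lambda$ large enough relative to the truncation $K$ bounding $\|\thetahat_t\|$. The other factor, $\inner{a^*,\thetahat_t}^2$ in expectation, is controlled by the usual bandit variance bound $\E[\thetahat_t\thetahat_t^\top]\preceq \tildeM$-type terms, giving $\E\inner{a^*,\thetahat_t}^2\lesssim d$ and, over the whole horizon, $\sqrt{dT}$ factors. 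I expect this step to be the main obstacle. The difficulty is that the second-order expansion must hold along the whole segment. Because $\thetahat_t$ can be large (of size up to $K$), the Gaussian density ratio between $z$ and $z+s\thetahat_t$ must be controlled; the bound $\|\thetahat_t\|_{\Sigma_t^{-1}}\lesssim 1$ is what makes this possible.

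\textbf{Step 3: combine and tune.} Plugging in $\E\|\thetahat_t\|^2\lesssim d/\gamma$-type bounds in the normalized geometry, the penalty contributes about $R\sqrt{d}\cdot\sqrt{dT}$-like terms mixed with the stability term $\sqrt{dT\cdot d\log T}$. Balancing these against $\gamma T$ and $T/K$, where the $\lambda$ constraint forces $\gamma$ to have a $d^{1/2}$ dependence, should yield $\widetilde\calO\bigl((d+Rd^{3/4})\sqrt T\bigr)$. The two cases in the statement then follow by taking $R\le4\sqrt d$ for the barycentric spanner, which gives $d^{5/4}\sqrt T$, and $R=1$ for the optimal design, which gives $d\sqrt T$. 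For the other perturbations of \cref{sec: algorithm overview}, the same argument applies, with the Fisher information bound $I(p)\preceq c\,\Sigma_t^{-1}$ replacing the Gaussian identity.
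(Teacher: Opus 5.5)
Your treatment of the per-round stability term (Stein identity, Cauchy--Schwarz against the Fisher information, the shifted density ratio controlled through $\norm{\thetahat_t}_{\Sigma_t^{-1}}\le 1/8$, the elliptical potential) is the paper's argument for $A_t$. The gap is in how you pay for the growth of the perturbation. You telescope the covariance-change terms into the width of the \emph{final} perturbation, so your penalty is of order $R\sqrt{\tr(\Sigma_{T+1})}\ge R\bigl(\sum_t\norm{\thetahat_t}_2^2\bigr)^{1/2}$. In normalized coordinates, $\E_t\norm{\thetahat_t}_2^2$ can be of order $\tr(\tildeM_t^{-1})$. This is of order $d^2/\gamma$ once FTPL concentrates on one action ($M_t\approx 0$), e.g.\ on a constant-gap instance with $|y_t|$ bounded away from $0$. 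Even with your own estimate $\E\norm{\thetahat_t}_2^2\lesssim d/\gamma$, your penalty $R\sqrt{d\sum_t\norm{\thetahat_t}_2^2}$ is $Rd\sqrt{T/\gamma}$, not $R\sqrt d\cdot\sqrt{dT}$; Step~3 silently drops the $1/\gamma$. Balancing $Rd\sqrt{T/\gamma}$ against $\gamma T$ forces $\gamma\asymp(Rd)^{2/3}T^{-1/3}$ and gives regret of order $(Rd)^{2/3}T^{2/3}$, the rate of earlier FTPL methods. This is not just loose bookkeeping. Since $\conv(\calA)\supseteq rB_2^d$, the final width is at least of order $r\sqrt{\lambda_{\max}(\Sigma_{T+1})}$, so on such instances it really does grow like $\sqrt{T/\gamma}$ up to $\poly(d)$ factors.

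The missing idea is that the perturbation change $B_t=\Psi_{t+1}(\Thetahat_{t+1})-\Psi_t(\Thetahat_{t+1})$ is itself a \emph{second-order} quantity of the same size as $A_t$. The penalty can therefore stay at the initial scale, $\Psi_1(0)\le 2R\sqrt{d\lambda}$. For the Gaussian, $\calN(0,\Sigma_{t+1})$ is the law of $Z_t+\xi\thetahat_t$ with independent $\xi\sim\calN(0,1)$. Hence $B_t=\E_\xi\bigl[\Psi_t(\Thetahat_{t+1}+\xi\thetahat_t)-\Psi_t(\Thetahat_{t+1})-\xi\inner{\nabla\Psi_t(\Thetahat_{t+1}),\thetahat_t}\bigr]$, and the same Fisher-information bound applies with Hessians at shifts $(1+s\xi)\thetahat_t$ from $\Thetahat_t$. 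The unbounded $\xi$ is clipped at $\sfL\asymp\log(RGT)$, which is why $\lambda\gtrsim G^2\log^2(RGT)$ (\cref{thm:B-fisher,cor:B-multiplier}). This yields $B_t\lesssim\sqrt{\thetahat_t^\top M_t\thetahat_t}\sqrt{\thetahat_t^\top\Sigma_t^{-1}\thetahat_t}+T^{-1}$, so $\sum_t B_t$ joins the $\widetilde\calO(d\sqrt T)$ stability budget. Intuitively, $B_t\approx 0$ when FTPL is concentrated because $\Psi_t$ is nearly affine there, whereas the width increment you charge does not shrink. The $Rd^{3/4}\sqrt T$ term then comes from $\Psi_1(0)\lesssim R\sqrt d\,G\approx R^2d^{3/2}/\gamma$ (up to logarithms) balanced against $\gamma T$, where $G\approx Rd/\gamma$ is the almost-sure bound on $\norm{\thetahat_t}_2$.

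The extension to other perturbations is also not ``the same argument''. For random weighting, $\calI_t\preceq\calO(1)\Sigma_t^{-1}$ has to be proved by the convolution induction of \cref{lem:convolved-shifted}. For non-Gaussian elliptical laws, $P_{\Sigma_{t+1}}$ is not a convolution of $P_{\Sigma_t}$, so $B_t$ needs the separate update matrix $\calJ^v$ of \cref{thm:B-elliptical}, with clipping for Student-$t_2$.

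Two smaller points. An estimator built from the uncentered second moment $\E[aa^\top]$ is biased once normalization makes the rewards affine, which is why the paper uses the paired estimator $H_t(a_t-\bar a_t)y_t$. And with exploration, the geometric-resampling bias is $e^{-\beta\gamma(N+1)/d}/r$, not $\calO(T/K)$.
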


In \cref{sec: motivation}, we show that no fixed perturbation distribution can control the stability term through the action covariance, as the analyses of FTRL approaches such as SCRiBLe and continuous exponential weights do. This motivates the use of adaptive perturbations. In a similar spirit, \cite{kim2019optimality} show that no perturbation in multi-armed bandits recovers FTRL with the Tsallis entropy regularizer, and \cite{abernethy2014smoothing} argue that while FTPL can be mapped to an FTRL, there is no bijection between them. These arguments all concern fixed perturbations and do not apply to adaptive ones, so the adaptive perturbation scheme in this work can bypass these restrictions.
 
Beyond Gaussian, we develop a fairly general theoretical framework for FTPL for linear bandits. First, we propose two adaptive perturbation constructions, random-weighting perturbations and elliptical perturbations, both containing the Gaussian perturbation as a special case. The latter gives a standard perturbation scheme where a single perturbation decides the policy in every round, while the former requires redrawing the weights of all previous rounds, but is conceptually interesting and admits a simpler analysis. Second, under each of these two constructions, we instantiate the algorithm with a wide range of perturbation distributions capturing different tail behaviors: from light-tailed Gaussian and Laplace distributions to heavy-tailed multivariate Student-$t_2$ distributions. Third, we analyze the stability of the FTPL update through the Fisher information of the perturbation and the stability of the Fisher information matrix under shifting. The analysis bears similarity to the local-Hessian-norm stability in typical SCRiBLe analysis. We hope that the analysis in this work could help develop a systematic framework for analyzing FTPL.
 
We provide a comparison with prior FTPL algorithms for linear bandits in \cref{tab:regret-comparison}. We also compare the oracle and computational complexity of our algorithm with those of previous oracle-efficient algorithms based on FTRL and log-concave sampling in \cref{tab:computational-comparison}. The FTPL approach remains more favorable even when prior approaches are implemented with state-of-the-art log-concave sampling techniques. Closest to our approach, adaptive Gaussian perturbations have recently been used for full-information online learning \citep{buzaglo2026oracle, voitovych2026oracle, jiang2026adaptive}; we study the bandit setting and also cover non-Gaussian perturbations. Further related work on FTPL is discussed in \cref{sec:other-related-work}.

\section{Notation and Setting}\label{sec:notation}
\paragraph{Notation.} For a probability distribution $p$ on $\R^d$ with finite second moment,
define its mean by $\mu(p)=\E_{a\sim p}[a]$ and its covariance matrix by $\Cov(p)
 =\E_{a\sim p}\bigl[(a-\mu(p))(a-\mu(p))^\top\bigr]$. For two functions $f$ and $g$, the notation ``$f(x)\lesssim g(x)$'' indicates that there exists a universal constant $c>0$ such that $f(x)\leq cg(x)$ for all $x$. This is equivalent to writing $f(x) = \calO(g(x))$, and $\widetilde{\calO}$ further hides polylogarithmic factors.  $\Delta_{\calS}$ denotes the space of distribution over $\calS$. 
 
\paragraph{Setting.}
The learner has an action set $\calA\subset\R^d$ and access
to a linear optimization oracle over $\calA$. At each round $t$, an
adversary chooses an affine reward function
\begin{align*}
 r_t(a)=\inner{a,\theta_t}+b_t,
 \qquad |r_t(a)|\le1\quad\text{for every }a\in\calA.
\end{align*}
We allow the intercept $b_t$ because the normalization below (\cref{thm:normalization}) may translate the action set, which turns a linear reward on the original set into an affine one. 
The adversary may depend on the past, but chooses $\theta_t$ and $b_t$ before
the learner's random draws at round $t$. The learner then samples
and plays $a_t\in\calA$ and observes only $y_t=r_t(a_t)$. Let $\E_t$ denote expectation conditional on the history before round $t$.
For every fixed comparator $a\in\calA$, we study the regret: 
\begin{align*}
 \Reg_T(a)=\E\left[\sum_{t=1}^T\bigl(r_t(a)-r_t(a_t)\bigr)\right]
 =\E\left[\sum_{t=1}^T\inner{a-a_t,\theta_t}\right].
\end{align*}

\subsection{Action Set Normalization}
We work in the affine hull of the action set and let $d\ge1$ denote its
dimension. Throughout the analysis, we impose the following geometric
and exploration assumptions.
 
\begin{assumption}[Action set and exploration basis]\label{ass:action-geometry}
The action set $\calA\subset\R^d$ and a fixed exploration
distribution $\nu\in\Delta_\calA$ satisfy
\begin{align*}
 rB_2^d\subseteq\conv(\calA)\subseteq RB_2^d,
 \qquad r>0,
 \qquad \mu(\nu)=0,\qquad\Cov(\nu)=I/d.
\end{align*}
Here $B_2^d$ is the Euclidean unit ball.
\end{assumption}
 
The following theorem gives two ways to obtain coordinates satisfying \cref{ass:action-geometry}. Detailed statements and proofs are given in \cref{thm:spanner-normalization,thm:optimal-design-normalization} in \cref{app:spanner-normalization}.
 
\begin{theorem}[Action set normalization]\label{thm:normalization}
Let $\calK\subset\R^d$ be a compact set such that $\conv(\calK)$ has nonempty interior. Suppose that $\calK$ is accessed through a linear optimization oracle. There exist an affine transformation $\calA$ of $\calK$ and an exploration distribution $\nu$ on $\calA$ satisfying \cref{ass:action-geometry} with
\begin{itemize}
    \item[(i)] $r=1/d$ and $R=4\sqrt{d}$, computable from the barycentric spanner of \citet{awerbuch2004} using $\calO(d^2\log(d+1))$ oracle calls and $\calO(d^4\log(d+1))$ additional arithmetic operations; or
    \item[(ii)] $r=1/d$ and $R=1$, given by an optimal design $q^\star\in\argmax_{q\in\Delta_\calK}\det\Cov(q)$.
\end{itemize}
\end{theorem}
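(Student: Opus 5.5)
We prove the two parts separately. In each, we first construct an exploration distribution $\nu$ on $\calK$, then apply the affine map $x\mapsto \Sigma^{-1/2}(x-\mu(\nu))/\sqrt{d}$, where $\Sigma=\Cov(\nu)$. This map enforces $\mu(\nu)=0$ and $\Cov(\nu)=I/d$ by construction. It remains to bound the inner radius $r$ and the outer radius $R$ of $\conv(\calA)$ in the new coordinates.

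\textbf{Part (i).} We compute a $2$-approximate barycentric spanner $\{x_1,\dots,x_d\}\subset\calK$ using the procedure of \citet{awerbuch2004}. To make the spanner centered, we take $d+1$ points: a base point $x_0$ together with the $x_i$, so that $\calK\subseteq x_0+\{\sum_i \alpha_i(x_i-x_0):|\alpha_i|\le 2\}$. This procedure costs $\calO(d^2\log d)$ oracle calls, and the stated arithmetic cost comes from the determinant updates.

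We set $\nu$ uniform on $\{x_0,\dots,x_d\}$. Write $B=[x_1-x_0,\dots,x_d-x_0]$. The points are affinely independent, so $\Cov(\nu)$ is a rank-$d$ matrix of the form $BMB^\top$, where $M=\frac{1}{d+1}I-\frac{1}{(d+1)^2}\mathbf{1}\mathbf{1}^\top$ is the covariance of the uniform distribution on the simplex vertices expressed in barycentric coordinates. Every $x\in\calK$ has coordinates $\alpha\in[-2,2]^d$ in the basis $B$.

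After normalization, the squared norm of the image of $x$ is $\frac{1}{d}\,\alpha_c^\top M^{-1}\alpha_c$, where $\alpha_c$ is $\alpha$ recentered at the mean. The eigenvalues of $M$ are $\frac{1}{d+1}$ and $\frac{1}{(d+1)^2}$, the latter in the direction of $\mathbf{1}$. Hence the squared norm is at most about $\frac{1}{d}\bigl((d+1)\norm{\alpha_c}^2+\dots\bigr)$. With $\norm{\alpha}_\infty\le 2$ we get $\norm{\alpha}^2\le 4d$, which gives a radius of order $\sqrt{d}$. Tracking the constants along this line should give $R=4\sqrt d$.

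For the inner radius, the normalized simplex $\conv\{x_0,\dots,x_d\}$ has covariance $I/d$. The uniform distribution on the vertices of a simplex with covariance $\Sigma$ has inradius comparable to $\sqrt{\lambda_{\min}(\Sigma)}/\sqrt{d}$ up to constants: for the regular simplex with circumradius $\rho$, the vertex covariance is $\rho^2 I/d$ and the inradius is $\rho/d$. Here the covariance is $I/d$, so after normalization the simplex is regular-like with $\rho=1$, giving inradius $1/d$. Because $\conv(\calA)$ contains this simplex, we obtain $r\ge 1/d$.

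The main obstacle is checking that the normalized simplex really has inradius at least $1/d$ when the simplex is not regular. I would argue this via the general fact that any simplex whose vertex distribution has mean $0$ and covariance $I/d$ is an affine image of the regular one by an orthogonal map. This holds because covariance equal to a multiple of the identity forces the vertex Gram structure to be that of the regular simplex, so the normalized simplex is exactly regular.

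\textbf{Part (ii).} Let $\nu=q^\star$ and normalize as above. The inner radius bound $r\ge 1/d$ should follow because the support of $q^\star$ spans, and we can compare against a sub-simplex. The cleaner route is the outer radius via the Kiefer–Wolfowitz equivalence theorem for the affine (centered) design. Maximizing $\det\Cov(q)$ is D-optimal design for the lifted points $(1,x)$. The equivalence theorem gives $(1,x)^\top\bigl(\E_q[(1,a)(1,a)^\top]\bigr)^{-1}(1,x)\le d+1$ for all $x\in\calK$. By the Schur complement, this quantity equals $1+(x-\mu)^\top\Cov(q)^{-1}(x-\mu)$, so $(x-\mu)^\top\Cov^{-1}(x-\mu)\le d$. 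After scaling by $1/\sqrt d$, this gives $\norm{x}\le 1$, hence $R=1$.

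For the inner radius, the same ellipsoid bound applies: $\conv(\calK)$ lies in the John-type ellipsoid of radius $1$. The minimum-volume-enclosing-ellipsoid duality then implies $\conv(\calK)$ contains the ellipsoid scaled by $1/d$, which here is the ball of radius $1/d$, by John's theorem for the centered case. So $r=1/d$.
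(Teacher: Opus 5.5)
Part (i) has a genuine gap in the outer-radius bound, located in the term you hide behind ``$\dots$''. A spanner of $\calK-x_0$ controls only the $d$ coefficients $\alpha_1,\dots,\alpha_d$. The implied weight on the base point, $1-\sum_i\alpha_i$, is unconstrained and can be of order $d$. In your own coordinates $M^{-1}=(d+1)(I+\mathbf{1}\mathbf{1}^\top)$, so the normalized squared norm is
\[
 \frac{d+1}{d}\Bigl(\norm{\alpha_c}_2^2+\inner{\alpha_c,\mathbf{1}}^2\Bigr),
 \qquad \alpha_c=\alpha-\tfrac{1}{d+1}\mathbf{1}.
\]
The condition $\norm{\alpha}_\infty\le2$ only gives $\inner{\alpha_c,\mathbf{1}}^2=\calO(d^2)$, hence $R=\calO(d)$, not $\calO(\sqrt d)$. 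This is not mere slack in the estimate. Take $\calK=\{0,e_1,\dots,e_d,-\mathbf{1}\}$ with $x_0=0$ and $x_i=e_i$, which is a valid (even $1$-approximate) spanner of $\calK-x_0$. Then the point $-\mathbf{1}$ is mapped to a vector of norm exactly $d+2$, which exceeds $4\sqrt d$ for $d\ge12$.

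The missing idea is to bound all $d+1$ barycentric coordinates at once. The paper runs the Awerbuch--Kleinberg construction on the lifted set $\{(a,1):a\in\calK\}\subset\R^{d+1}$. The oracle for $\calK$ suffices, since maximizing $\inner{a,v}+\alpha$ only requires maximizing $\inner{a,v}$. This gives $a=\sum_{i=0}^d\lambda_iv_i$ with $\sum_i\lambda_i=1$ and $|\lambda_i|\le2$ for every $i$, including $i=0$. Then $\inner{\alpha_c,\mathbf{1}}=\calO(1)$, and the Gram identity $\norm{u}_2^2=\frac{d+1}{d}\norm{\lambda}_2^2-\frac1d\le16d$ yields $R=4\sqrt d$. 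The lifting also fills in the unexplained step of how a spanner of $\calK$ yields an affine representation around $x_0$.

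The rest is sound.
\begin{itemize}
\item Your inner-radius argument for (i) is the paper's: mean zero and covariance $I/d$ force the regular-simplex Gram matrix $\frac{d+1}{d}I-\frac1d\mathbf{1}\mathbf{1}^\top$, hence circumradius $1$ and inradius $1/d$.
\item For the outer radius in (ii), you get the leverage bound $(a-\mu)^\top\Cov(q^\star)^{-1}(a-\mu)\le d$ from the Kiefer--Wolfowitz theorem applied to the lifted design. The paper instead differentiates $\log\det\Cov((1-\varepsilon)q^\star+\varepsilon\delta_a)$ at $\varepsilon=0$.
\item For the inner radius in (ii), you invoke the design/minimum-volume-ellipsoid duality plus John's theorem. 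The paper needs only $\calA\subseteq B_2^d$, mean zero and variance $1/d$: for $Y=\inner{u,X}\in[-1,H]$, one has $0\le\E[(Y+1)(H-Y)]=H-\frac1d$.
\end{itemize}
Your route for (ii) is valid given the cited theorems. The paper's argument is self-contained and elementary.
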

 
The second option attains the optimal outer radius $R=1$; its optimal-design distribution is also called John's exploration distribution \citep{bubeck2012towards}. Computing an optimal design is equivalent to computing the minimum-volume ellipsoid enclosing $\calK$ \citep{todd2016minimum}, which is NP-hard even for polytopes given by linear inequalities \citep{boyd2004convex}, and we are not aware of an efficient algorithm that finds a constant-factor approximation using only a linear optimization oracle. However, in some special cases, such a design has a closed form or is easy to construct, e.g., the probability simplex, Euclidean ball, hypercube, $m$-set, the set of $m$-by-$m$ permutation matrices, and others.

\section{Adaptive Perturbation: Motivation and Central Ideas}\label{sec: motivation}
 
We first attempt to use a fixed perturbation distribution and identify an obstacle (\cref{sec: fixed pert}). Then we show how adaptive perturbation addresses it (\cref{sec: adaptive pert}). 

\subsection{Fixed Perturbation and Obstacles}\label{sec: fixed pert}
Assume for now that we use a fixed perturbation distribution~$\rho$. 
Suppose that at round $t$, the cumulative reward estimator is $\Thetahat_t = \sum_{\tau=1}^{t-1} \thetahat_\tau$ where $\thetahat_\tau$ is the standard reward estimator for linear bandits in round~$\tau$. Then FTPL generates action $a_t$ as:  
\begin{align*}
    \text{Sample } Z_t\sim \rho \text{ and let } a_t = x(\Thetahat_t + Z_t), \qquad \text{where } x(\Theta):=\argmax_{a\in\calA} \inner{a, \Theta}.  
\end{align*}
Write $h(\Theta)=\max_{a\in\calA}\inner{a,\Theta}$ and define
\begin{align*}
 \Psi(\Theta)=\E_{Z\sim\rho}[h(\Theta+Z)].   \numberthis \label{eq: potential def}
\end{align*}
For every perturbation density considered in this paper, $\Psi$ is twice continuously differentiable and $\nabla\Psi(\Theta)=\E_{Z\sim\rho}[x(\Theta+Z)]$ (\cref{lem:smoothed-support-hessian}), so $\E_t[a_t]=\nabla\Psi(\Thetahat_t)$.
The key step in the regret analysis is bounding the \emph{stability term}: 
\begin{align*}
    \Psi(\Thetahat_{t+1}) - \Psi(\Thetahat_{t}) - \inner{\nabla \Psi(\Thetahat_t), \thetahat_t} 
    &= 
    \Psi(\Thetahat_{t} + \thetahat_t) - \Psi(\Thetahat_{t}) - \inner{\nabla \Psi(\Thetahat_t), \thetahat_t}  \\ 
    &= 
    \frac12\thetahat_t^\top \nabla^2 \Psi(\Thetahat_t + s\thetahat_t) \thetahat_t \numberthis \label{eq: shifting}
\end{align*}
for some $s\in[0,1]$ by Taylor's theorem. Ignoring the shift $s\thetahat_t$, we can approximate the stability term as $\frac12\thetahat_t^\top \nabla^2 \Psi(\Thetahat_t) \thetahat_t$.  
Motivated by the standard FTRL analysis (SCRiBLe or continuous exponential weights), an ideal property we would hope for is 
\begin{align*}
    \nabla^2 \Psi(\Thetahat_t) \preceq c M_t := c \int  \big(x(\Thetahat_t + z) -\mu_t\big)\big(x(\Thetahat_t + z) -\mu_t\big)^\top \rho(z) \d z    \numberthis \label{eq: desired bound}
\end{align*}  
for some constant $c>0$ independent of $\Thetahat_t$,  
where $\mu_t = \E_{Z\sim \rho}[x(\Thetahat_t+Z)]$ is the action mean induced by FTPL, and $M_t$ is the action covariance. 
If \cref{eq: desired bound} holds, the expectation of the stability term can be bounded by $c\E[\thetahat_t^\top M_t \thetahat_t] \lesssim cd$, as in FTRL.
 
\paragraph{Can \cref{eq: desired bound} hold? } For multi-armed bandits ($\calA=\{e_1, e_2, \ldots, e_d\}$), \citet{abernethy2015fighting} characterize the stability of FTPL through the \emph{hazard rate} of the perturbation and give several perturbations whose hazard rate is bounded by a constant. In \cref{thm:hazard-covariance} (\cref{app:hazard-covariance}), we show that i.i.d.\ coordinate-wise perturbations with hazard rate bounded by $h_q$ satisfy \cref{eq: desired bound} on the simplex with $c=2h_q$. For \emph{general convex sets}, however, the answer is negative. The theorem below constructs an action set whose boundary contains exposed edges at arbitrarily small scales. No fixed perturbation can satisfy \cref{eq: desired bound} with a finite constant on this set.
\begin{theorem}\label{thm: impossible}
Define $\calA
 =\operatorname{conv}\left(\{\pm e_1,\pm e_2\}
       \cup\{v_i:i\ge2\}\right)\subset\R^2$ with $v_i = (\cos(2^{-i}\pi), \sin(2^{-i}\pi))$, as illustrated in \cref{fig:multiscale-action-set}. 
For every regular perturbation density $\rho$ on $\R^2$ (\cref{lem:smoothed-support-hessian}), define $M(\Theta)
 =\Cov_{Z\sim\rho}\bigl(x(\Theta+Z)\bigr)$. 
Then for every finite
$c>0$, there exists $\Theta\in\R^2$ such that
\begin{align*}
 \nabla^2\Psi(\Theta)\npreceq cM(\Theta).
\end{align*}
\end{theorem}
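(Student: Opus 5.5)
The plan is to make the covariance $M(\Theta)$ as small as the square of a tiny edge length $\ell_i=\|v_i-v_{i+1}\|\asymp 2^{-i}$, while $\nabla^2\Psi(\Theta)$ stays of order $\ell_i$ times a fixed density value. Their ratio is then of order $1/\ell_i$, which is unbounded as $i\to\infty$.

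\textbf{Step 1: a formula for the Hessian.} For the polytope-like set $\calA$ in $\R^2$, $x(\Theta)$ is piecewise constant on the normal fan of $\conv(\calA)$. Its cells are angular sectors (normal cones) at the vertices, separated by rays from the origin. Along the ray $F$ separating the cones of adjacent vertices $v,v'$, the map $x$ jumps by $v'-v$. Writing $u_F$ for the unit normal to $F$, this jump is parallel to $u_F$. Differentiating $\nabla\Psi(\Theta)=\E[x(\Theta+Z)]$ in the distributional sense, using the regularity from \cref{lem:smoothed-support-hessian}, should give
\begin{align*}
\nabla^2\Psi(\Theta)=\sum_{F}\|v'-v\|\,u_Fu_F^\top\int_{F-\Theta}\rho\,\d S .
\end{align*}
In particular,
\begin{align*}
\nabla^2\Psi(\Theta)\succeq \ell_i\,u_iu_i^\top\int_{F_i-\Theta}\rho\,\d S,
\end{align*}
where $F_i$ is the ray in the outer normal direction $n_i$ of the edge $[v_i,v_{i+1}]$, and $u_i\perp n_i$ is the edge direction.

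\textbf{Step 2: the choice of $\Theta$.} Take $\Theta=s n_i+c\,u_i$ with $s$ large. For $|Z|\ll s$, the direction of $\Theta+Z$ lies within angle $O(|Z|/s)$ of $n_i$. The cones of $v_i$ and $v_{i+1}$ have angular width of order $2^{-i}$. Hence once $s\gg 2^{i}$, we get $x(\Theta+Z)\in\{v_i,v_{i+1}\}$ except on an event of probability $\epsilon(s)\to0$ (tightness of $\rho$). On the same event the ray $F_i-\Theta$ looks locally like the full line $\{z:\inner{z,u_i}=-c\}$. Therefore
\begin{align*}
\int_{F_i-\Theta}\rho\,\d S\to g_i(-c)\qquad (s\to\infty),
\end{align*}
where $g_i$ is the marginal density of $\inner{Z,u_i}$.

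\textbf{Step 3: bounding the covariance.} On the good event, $x$ takes only the two values $v_i,v_{i+1}$, so its covariance is $p(1-p)(v_i-v_{i+1})(v_i-v_{i+1})^\top\preceq \frac{\ell_i^2}{4}u_iu_i^\top$. Since $\calA$ is bounded, the bad event contributes at most $O(\epsilon(s))$ in operator norm. Now test the inequality $\nabla^2\Psi\preceq cM$ against two directions.
\begin{itemize}
\item For a vector $w$ orthogonal to $u_i$: $w^\top Mw=O(\epsilon(s))\to0$, but we still need to check that $w^\top\nabla^2\Psi w$ does not also vanish. It is simpler to test against $u_i$ itself.
\item For $w=u_i$: $u_i^\top\nabla^2\Psi u_i\gtrsim \ell_i g_i(-c)$, while $u_i^\top M u_i\le \ell_i^2/4+O(\epsilon(s))$.
\end{itemize}
Letting $s\to\infty$, the ratio is at least $4g_i(-c)/\ell_i$.

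\textbf{Step 4 (main obstacle): a uniform density lower bound.} We need $\sup_c g_i(c)\ge\kappa>0$ uniformly in $i$, so that $c$ can be chosen to make $g_i(-c)\ge\kappa$. Any density on $\R$ has $\sup g_i>0$, but uniformity needs an argument. Since $u_i\to e_2$ as $i\to\infty$, I would use continuity of the marginals in the projection direction (e.g.\ $L^1$-continuity of $\rho$ under rotation) to get $g_i\to g_\infty$ in $L^1$. A sup lower bound then follows from $\int g_i=1$ with a common compact region carrying most of the mass: if $g_i\le\kappa$ everywhere, the mass on an interval of length $L$ is at most $\kappa L<1/2$ for small $\kappa$, contradicting tightness.

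Given $\kappa$, choose $i$ with $4\kappa/\ell_i>c$, then $c$ (the offset) with $g_i(-c)\ge\kappa$, then $s$ large. This yields $u_i^\top\nabla^2\Psi(\Theta)u_i>c\,u_i^\top M(\Theta)u_i$. Making the interchange of limit and the distributional Hessian formula rigorous is the delicate part; I would rely on the regularity conditions in \cref{lem:smoothed-support-hessian}.
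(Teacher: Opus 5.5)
Your argument is correct, and it uses the same gadget as the paper. You send $\Theta$ to infinity along the outer normal $n_i$ of the short exposed edge $[v_i,v_{i+1}]$, so only its two endpoints compete. You then compare a Hessian of order $\ell_i$ with a covariance of order $\ell_i^2$ in the edge direction. The execution differs, however.

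\textbf{Your route versus the paper's.} You work pointwise. You use a surface-integral formula for $\nabla^2\Psi$ over the rays of the normal fan, evaluate it at a single point $\Theta=sn_i+y\,u_i$, and add a pointwise lower bound on the marginal density of $\inner{Z,u_i}$. The paper (\cref{lem:exposed-edge-gadget}) instead integrates $t^\top\nabla^2\Psi\, t$ along the segment $\beta\in[-R,R]$. This turns the Hessian into the gradient difference $m_\alpha(R)-m_\alpha(-R)$, and the averaged density becomes the mass $\Pr(-R<Y\le R)\ge 1/2$. That step needs only $\nabla\Psi=\E[x(\Theta+Z)]$ and bounded convergence as $\alpha\to\infty$. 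The integration removes exactly what you flagged as delicate: a distributional Hessian over a fan with infinitely many accumulating rays, and control of line integrals of $\rho$.

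Your route does buy something: an explicit violating point and direction, and a transparent rank-one-per-ray picture of the Hessian. If you keep it, you only need the one-sided bound $u_i^\top\nabla^2\Psi(\Theta)u_i\ge\ell_i\int_{F_i-\Theta}\rho\,\d S$. This follows without the full formula:
\begin{itemize}
\item The map $\varepsilon\mapsto\inner{x(\Theta+\varepsilon u_i+z),u_i}$ is nondecreasing, because subgradients of the convex $h$ are monotone.
\item It jumps by $\ell_i$ whenever the segment crosses $F_i$.
\item Dividing the resulting difference of $\inner{\nabla\Psi,u_i}$ by $\varepsilon$ and applying Fatou with continuity of $\rho$ gives the bound.
\end{itemize}

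\textbf{Step 4 is easier than you make it.} Since $|\inner{Z,u}|\le\norm{Z}_2$, choose $R$ with $\Pr(\norm{Z}_2\le R)\ge 1/2$. Then $\int_{-R}^R g_u\ge 1/2$ for every unit $u$ simultaneously, so $g_u\ge 1/(4R)$ somewhere in $[-R,R]$. No $L^1$-continuity of the marginals in the direction is needed. Pointwise evaluation is also legitimate, since $g_u$ is continuous when $\int\norm{\nabla\rho}_2<\infty$. This recovers the paper's quantitative bound $c\gtrsim 1/(R\ell_i)$.

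Finally, rename the offset in $\Theta=sn_i+c\,u_i$. You reuse $c$ for both the offset and the constant in $\nabla^2\Psi\preceq cM$, which makes your last paragraph ambiguous.
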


\begin{figure}[t]
\centering
\begin{tikzpicture}[scale=2.75, every node/.style={font=\small}]
  \coordinate (ep) at (1,0);
  \coordinate (em) at (-1,0);
  \coordinate (up) at (0,1);
  \coordinate (down) at (0,-1);
  \coordinate (v2) at ({cos(45)},{sin(45)});
  \coordinate (v3) at ({cos(22.5)},{sin(22.5)});
  \coordinate (v4) at ({cos(11.25)},{sin(11.25)});
  \coordinate (v5) at ({cos(5.625)},{sin(5.625)});
  \coordinate (v6) at ({cos(2.8125)},{sin(2.8125)});
 
  \draw[gray!35] (0,0) circle (1);
  \draw[->,gray!65] (-1.15,0)--(1.18,0);
  \draw[->,gray!65] (0,-1.15)--(0,1.18);
  \filldraw[fill=blue!8,draw=blue!65!black,thick]
    (ep)--(v6)--(v5)--(v4)--(v3)--(v2)--(up)--(em)--(down)--cycle;
  \draw[red!75!black,very thick] (v3)--(v4)
    node[midway,left] {$E_3$};
 
  \foreach \p in {ep,em,up,down,v2,v3,v4,v5,v6}
    \fill[blue!65!black] (\p) circle (0.018);
  \node[above right] at (v2) {$v_2$};
  \node[above right] at (v3) {$v_3$};
  \node[right] at (v4) {$v_4$};
  \node[below right] at (ep) {$e_1$};
  \node[above right] at (up) {$e_2$};
  \node[above left] at (em) {$-e_1$};
  \node[below right] at (down) {$-e_2$};
\end{tikzpicture}
\caption{The action set $\calA$ in \cref{thm: impossible}. Its exposed edges
$[v_i,v_{i+1}]$ accumulate at $e_1$, and their lengths converge to zero.}
\label{fig:multiscale-action-set}
\end{figure}
The proof is given in \cref{app:hazard-covariance}. In \cref{fig:multiscale-action-set}, when the endpoints $v_i$ and $v_{i+1}$ of the edge $E_i=[v_i,v_{i+1}]$ are the two competing actions (i.e., the current near-optimal actions), we show that the stability term scales with $|E_i|$ while the action covariance scales with $|E_i|^2$, so if there exists a $c$ satisfying \cref{eq: desired bound}, it has to scale with $1/|E_i|$, which goes to infinity when $i$ increases. On a polytope, the same argument shows that $c$ must scale at least inversely with its shortest edge length.
 
This provides a contrast between FTPL and FTRL  (SCRiBLe and continuous exponential weights). In FTRL, the stability term can be bounded by a constant multiple of $\thetahat_t^\top M_t \thetahat_t = \E_{a\sim p_t}[\inner{a - \mu_t, \thetahat_t}^2]$, where $p_t$ is the action distribution used in round $t$. This scales \emph{quadratically} in both the reward estimator $\thetahat_t$ and the action separation $a-\mu_t$. In FTPL, the construction in \cref{thm: impossible} indicates that the stability term, while scaling \emph{quadratically} with the reward estimator, only scales \emph{linearly} with the action separation.  Indeed, \cref{lem:smoothed-support-hessian} gives
\begin{align*}
    \thetahat_t^\top \nabla^2 \Psi(\Thetahat_t) \thetahat_t
    &= -\int \inner{x(\Thetahat_t + z)-\mu_t, \thetahat_t} \inner{\nabla \rho(z), \thetahat_t} \d z \\
    &= -\E_{Z\sim \rho} \left[ \inner{x(\Thetahat_t + Z) - \mu_t, \thetahat_t} \inner{\nabla \log \rho(Z), \thetahat_t} \right],  \numberthis  \label{eq: FTPL analysis}
\end{align*}
which scales only linearly with the action separation. In fact, \cref{eq: FTPL analysis} would match the FTRL form if $\nabla \log \rho(Z) = -x(\Thetahat_t + Z)+\mu_t$, but this is not possible with a fixed perturbation $\rho$. FTPL gains this factor back in the other \emph{penalty term} in the regret, which scales linearly with the action separation, while that of FTRL has no scaling with it. In multi-armed bandits, the balance between penalty and stability can be achieved with a global learning rate tuning \citep{honda2023best}. However, in linear bandits with general action sets such as the one in \cref{thm: impossible}, the tuning has to be adaptive to the scale of the action separation between competitive actions, which depends on the local structure of the action set. Therefore, a more geometry-aware tuning is needed. This is handled in the next subsection.
 
\subsection{Adaptive Perturbation}\label{sec: adaptive pert} 
We continue from \cref{eq: FTPL analysis} with Cauchy--Schwarz:
\begin{align*}
      \thetahat_t^\top \nabla^2 \Psi(\Thetahat_t) \thetahat_t 
      &\leq \sqrt{\E_{Z\sim \rho} \left[\inner{x(\Thetahat_t + Z) - \mu_t, \thetahat_t}^2\right] } \sqrt{\E_{Z\sim \rho} \left[\inner{\nabla \log \rho(Z), \thetahat_t}^2 \right]}  \\ 
      &= \sqrt{\thetahat_t^\top M_t \thetahat_t} \sqrt{\thetahat_t^\top \calI(\rho)\thetahat_t}
\end{align*}
where $\calI(\rho):=\E_{Z\sim \rho}[\nabla \log\rho(Z)\nabla \log\rho(Z)^\top]$ is the Fisher information matrix of the perturbation distribution $\rho$. Since $\E_t[\thetahat_t^\top M_t\thetahat_t]\lesssim d$, applying Cauchy--Schwarz again over rounds shows that it remains to bound $\sum_{t=1}^T \thetahat_t^\top \calI(\rho)\thetahat_t$ of order $\poly(d,\log T)$. This suggests an adaptive perturbation (drawing $Z_t\sim \rho_t$) such that 
\begin{align*}
   \calI(\rho_t) \approx \Sigma_t^{-1}, \qquad \quad \Sigma_t = \lambda I + \sum_{\tau=1}^{t-1} \thetahat_\tau\thetahat_\tau^\top \text{ for some } \lambda >0,  
\end{align*}
since the elliptical potential lemma then gives $\sum_{t=1}^T \thetahat_t^\top \calI(\rho_t)\thetahat_t \lesssim d\log T$ for large enough $\lambda$, giving a total stability of order $d\sqrt{T\log T}$. A natural choice is $\rho_t=\calN(0,\Sigma_t)$, whose Fisher information matrix is exactly $\Sigma_t^{-1}$. Intuitively, since $\thetahat_\tau$ scales with the inverse of the action covariance, $\Sigma_t$ grows fastest in directions where competing actions are close together, which is where \cref{thm: impossible} calls for a larger perturbation.
 
We will show that this simple adaptive Gaussian perturbation indeed achieves the bound in \cref{thm:main-informal} after additionally handling the Hessian shift in \cref{eq: shifting} and the change of the perturbation across rounds, and combining it with a standard reward estimator (geometric resampling) and exploration (barycentric spanner or optimal design). Besides the Gaussian, whose tail decays as $e^{-x^2}$, we extend the results, with the same regret bound, to Laplace perturbations (tail $e^{-x}$) and multivariate Student-$t_2$ elliptical perturbations (tail $x^{-2}$), demonstrating the generality of the framework. These heavier-tailed distributions might find applications in settings that also require differential privacy.
 

\section{Algorithm Overview}\label{sec: algorithm overview}

\begin{figure}[p]
\thisfloatpagestyle{empty}
\begin{minipage}{\linewidth}
\begin{algorithm}[H]
\caption{FTPL (with spanner exploration and geometric resampling)}
\label{alg:ftpl}
\SetKwProg{Fn}{Procedure}{}{}
\SetKwFunction{SampleAction}{SampleAction}
\SetKwFunction{GeometricResampling}{GeometricResampling}
\textbf{Inputs:} Action set $\calA$ with a linear optimization oracle,
 exploration distribution $\nu\in\Delta(\calA)$,
 exploration rate $\gamma\in[0,1]$, resampling step size $\beta>0$,
 truncation $N\ge0$, and a construction of the perturbation density $\rho_t$.\;
\textbf{Initialization:} $\Thetahat_1\gets0$.\;
\For{$t=1,2,\ldots$}{
Construct $\rho_t$ from $\{\thetahat_\tau\}_{\tau<t}$ using one of the perturbation schemes in \cref{fig:perturbation-choices}.\;
Set $a_t\gets$ \SampleAction{$\Thetahat_t,\rho_t$}, play $a_t$, and observe $y_t=\inner{a_t, \theta_t} + b_t$.\;
$\bar a_t\gets$ \SampleAction{$\Thetahat_t,\rho_t$}.\;
$H_t\gets$ \GeometricResampling{$\Thetahat_t,\rho_t,\beta,N$}.\;
$\displaystyle \thetahat_t\gets H_t(a_t-\bar a_t)y_t$.\;
$\displaystyle \Thetahat_{t+1}\gets\Thetahat_t+\thetahat_t$.\;
}
\BlankLine
\Fn{\SampleAction{$\Theta,\rho$}}{
Draw $e\sim\operatorname{Bernoulli}(\gamma)$.\;
\eIf{$e=1$}{
Draw an action $a\sim\nu$ and \KwRet $a$.\;
}{
Draw a perturbation $Z\sim\rho$.\;
\KwRet an action in $\argmax_{a\in\calA}\inner{a,\Theta+Z}$,
 using a fixed tie-breaking rule.\;
}
}
\BlankLine
\Fn{\GeometricResampling{$\Theta,\rho,\beta,N$}}{
\For{$i=1,\ldots,N$}{
$u_i\gets$ \SampleAction{$\Theta,\rho$} and $v_i\gets$ \SampleAction{$\Theta,\rho$}.\;
$Q_i\gets I-\frac{\beta}{2}(u_i-v_i)(u_i-v_i)^\top$.\;
}
\KwRet $\beta I+\beta\sum_{k=1}^N Q_1\cdots Q_k$.\;
}
\end{algorithm}
\end{minipage}
 
\par\smallskip
\centering
\caption{Two adaptive perturbation constructions for \cref{alg:ftpl}.}
\label{fig:perturbation-choices}
\setlength{\fboxsep}{8pt}
\medskip
\fbox{\begin{minipage}{\dimexpr\linewidth-2\fboxsep-2\fboxrule\relax}
\begin{center}
\textbf{Option 1: Random-Weighting Perturbations}
\end{center}
\vspace{-10pt}
Choose distribution $\rho_1$ from \cref{tab:convolution-initializers}, with parameter $\lambda>0$. \\
Choose distribution $\varphi$ from \cref{tab:scalar-noise}. \\ 
Define $Z\sim \rho_t$ to be equivalent to independently drawing $Z'\sim \rho_1$ and $\xi_{\tau}\sim\varphi$ for all $\tau<t$, and setting
\begin{align*}
 Z=Z'+\sum_{\tau=1}^{t-1}\xi_{\tau}\thetahat_\tau. 
\end{align*}
Equivalently, $\rho_t(z)=\E_{\xi_1\sim \varphi,\ \cdots,\ \xi_{t-1}\sim \varphi }\left[\rho_1\left(z - \sum_{\tau=1}^{t-1}\xi_\tau \thetahat_\tau\right)\right]$.
\end{minipage}} 
 
\par\medskip
\fbox{\begin{minipage}{\dimexpr\linewidth-2\fboxsep-2\fboxrule\relax}
\begin{center}
   \textbf{Option 2: Elliptical Perturbations}
\end{center}
\vspace{-10pt}
Choose distribution family $P_\Sigma$ from \cref{tab:elliptical-updates}. \\ 
In round $t$, define
\begin{align*}
 \Sigma_t=\lambda I+\sum_{\tau=1}^{t-1}\thetahat_\tau\thetahat_\tau^\top,
 \qquad \rho_t=P_{\Sigma_t}. 
\end{align*}
\end{minipage}}
\end{figure}

Our algorithm is displayed in \cref{alg:ftpl}. In each round $t$, the learner samples a perturbation $Z_t\in \R^d$ from the density $\rho_t$, and takes action $a_t=\argmax_{a\in\calA}\inner{a, \Thetahat_t + Z_t}$ with probability $1-\gamma$, and action $a_t\sim \nu$ with probability $\gamma$, where $\nu\in\Delta_\calA$ is the exploration distribution based on spanner or optimal design (\cref{ass:action-geometry}). In order to construct the reward estimator $\thetahat_t$ at time $t$, the learner follows the same procedure to draw $2N+1$ additional actions internally (one as $\bar a_t$ and the other $2N$ as in the Geometric Resampling procedure). With them, the algorithm constructs $H_t$ as an estimator of the inverse action covariance matrix $\tildeM_t^{-1}$ via matrix geometric resampling \citep{neu2020efficient}, where $\tildeM_t:=\Cov((1-\gamma)p_t+\gamma\nu)$ and $p_t$ is the law of $x(\Thetahat_t+Z)$ for $Z\sim\rho_t$, and uses it to construct the reward estimator $\thetahat_t$.\footnote{A more commonly seen reward estimator is $\thetahat_t = H_t'a_ty_t$ where $H_t'$ estimates the \emph{inverse second moment} of the action distribution $\E[a_ta_t^\top]^{-1}$ instead of the inverse covariance. That handles \emph{linear reward} but not the more general \emph{affine reward} we consider.} The simplest perturbation construction, which already achieves the bound in \cref{thm:main-informal}, is the following:
\begin{align*}
     Z_t \sim \mathcal{N}\left(0, \lambda I + \sum_{\tau=1}^{t-1} \thetahat_\tau \thetahat_\tau^\top \right),   \numberthis \label{eq: Guassian pert} 
\end{align*}
with $\lambda=\widetilde\Theta(d^{1/2}T)$. We enrich the family of possible perturbations by proposing two general constructions of $\rho_t$ (\cref{fig:perturbation-choices}), both including the Gaussian perturbation \cref{eq: Guassian pert} as a special case and achieving the same regret bound.

\paragraph{Random-Weighting Perturbations.}
In round $t$, independently draw $Z_t'\sim\rho_1$ and weights $\xi_{t,\tau}\sim\varphi$ for every $\tau<t$, and set $Z_t=Z_t'+\sum_{\tau=1}^{t-1}\xi_{t,\tau}\thetahat_\tau$. Equivalently,
\begin{align*}
    a_t = \argmax_{a\in\calA}\Inner{a,\sum_{\tau=1}^{t-1} (1+\xi_{t,\tau}) \thetahat_\tau + Z_t' }.     \numberthis \label{eq: choice of a_t}
\end{align*}
The factor $(1+\xi_{t,\tau})$ may be viewed as a reweighting of the round-$\tau$ sample in round $t$. 
 
In the regret analysis, it is crucial that $Z_t'$ and $\{\xi_{t,\tau}\}_{\tau<t}$ are \emph{fresh} samples in every round. 
Therefore, the random-weighting perturbation is, in general, not as computationally favorable as standard FTPL, as forming the weighted sum requires $\calO(t)$ time. However, when $\varphi$ is the standard Gaussian, the sampling procedure is equivalent to drawing $Z_t'\sim\rho_1$ and $\Xi_t\sim \mathcal{N}(0, \sum_{\tau=1}^{t-1} \thetahat_\tau\thetahat_\tau^\top)$ independently, and setting $Z_t=Z_t'+\Xi_t$, which admits a sampling cost independent of $t$. 
 
We provide possible choices of the initializer $\rho_1$ and weight distribution $\varphi$ in \cref{tab:convolution-initializers,tab:scalar-noise}, respectively. At the optimized scale (\cref{eq:final-scale-choices}), $\lambda=\widetilde\Theta(\sqrt d\,T)$ for every combination. Both can be chosen from a wide range of distributions and can be arbitrarily combined.
 
This decision rule has intriguing resemblance to perturbed-history exploration \citep{kveton2019perturbed},  the garbage-in-reward-out (GIRO) strategy \citep{kveton2019garbage}, and FTPL with dropout perturbation \citep{van2014follow}, where data samples in each of the past rounds are individually reweighted. In \cite{kveton2019garbage} and \cite{van2014follow}, each past data sample is either \emph{chosen} or \emph{unchosen}, corresponding to our case when $\xi_{t,\tau}\in\{-1,1\}$. However, our analysis requires that $\varphi$ be a smooth distribution, ruling out such binary-valued perturbations. The connection between these works is an interesting future direction. We remark that \cite{kveton2019garbage,kveton2019perturbed} study the stochastic bandit setting and \cite{van2014follow} study the adversarial full-information setting, while we study the strictly more challenging adversarial bandit setting.

\paragraph{Elliptical Perturbations.}  
We generalize the Gaussian perturbation in another manner. We consider perturbation distributions whose density $\rho_t(z)$ is proportional to $f(\|z\|_{\Sigma_t^{-1}})$, where $\Sigma_t = \lambda I + \sum_{\tau<t} \thetahat_\tau\thetahat_\tau^\top$, for some radial profile $f: \R_{\geq 0}\to \R_{\geq 0}$. The adaptive Gaussian perturbation (\cref{eq: Guassian pert}) is a special case when $f(s)\propto e^{-s^2/2}$.  Sampling $Z_t\sim \rho_t$ can be decomposed into the following steps: first, randomly sample a direction $U$ uniformly on the unit sphere $\mathbb S^{d-1}$; second, sample the radius $S$ from a density proportional to $s^{d-1}f(s)$; finally, let $Z_t=\Sigma_t^{1/2}SU$. Different choices of $f$ are listed in \cref{tab:elliptical-updates}. 
 
Besides the Gaussian, our list also includes the radial Laplace and the multivariate Student-$t_2$ distributions. These distributions have appeared in prior work on differential privacy \citep{hardt2010geometry, abernethy2019online, bun2019average} to achieve pure differential privacy. To our knowledge, these applications either do not concern online learning or only address the full-information setting. How our algorithm connects with differential privacy is left for future work.   
 
Random-weighting and elliptical perturbations do not include each other. Their only intersection is Gaussian. All of perturbations we consider achieve the same regret bound. Their relative merits are left for future investigation.
 
In \cref{sec:decomposition}, we provide a regret decomposition into stability terms and bias terms. In \cref{sec:convolved} and \cref{sec:elliptical} we bound the stability terms for random-weighting perturbations and elliptical perturbations, respectively. Note that the time-varying perturbation introduces an additional stability term, beyond the one analyzed in \cref{sec: motivation}, due to the change of the perturbation.  In \cref{sec:estimator}, we bound the bias due to Geometric Resampling, and combine all regret terms.

\section{Regret Decomposition}\label{sec:decomposition}
As before, we denote $h(\Theta)=\max_{a\in \calA}\inner{a,\Theta}$ and $ x(\Theta)=\argmax_{a\in \calA}\inner{a,\Theta}$, 
where we break ties in $x(\Theta)$ in an arbitrary but fixed way.
By the definition, $h$ is
convex and $\nabla h(\Theta)=x(\Theta)$ almost everywhere. Define the potential function by
\begin{align*}
 \Psi_t(\Theta)=\E_{Z\sim\rho_t}[h(\Theta+Z)].
\end{align*} 
Let $p_t$ be the
law of $x(\Thetahat_t+Z)$ for $Z\sim\rho_t$.  
Define the action mean and action covariance under $p_t$ by
\begin{align*}
 \mu_t=\mu(p_t)=\E_{Z\sim \rho_t}[x(\Thetahat_t+Z)], \qquad M_t= \Cov(p_t) = \E_{a\sim p_t}[(a-\mu_t)(a-\mu_t)^\top]. 
\end{align*}
We decompose the regret of the sequence $\mu_t$ into the following: 
\begin{align*}
 \sum_{t=1}^T\inner{a-\mu_t,\thetahat_t}
 =\inner{a,\Thetahat_{T+1}} + \sum_{t=1}^T(A_t+B_t)+\Psi_1(0)-\Psi_{T+1}(\Thetahat_{T+1}).
\end{align*}
where  
\begin{align}
 A_t&=\Psi_t(\Thetahat_{t+1})-\Psi_t(\Thetahat_t)
                  -\inner{\mu_t,\thetahat_t},\label{eq:A}\\
 B_t&=\Psi_{t+1}(\Thetahat_{t+1})-\Psi_t(\Thetahat_{t+1}).\label{eq:B}
\end{align}
$A_t$ is the standard stability term as discussed in \cref{sec: motivation}. The additional stability term $B_t$ comes from the changes of the perturbation over time. 
For any fixed comparator $a\in \calA$, the terminal potential satisfies 
$\Psi_{T+1}(\Thetahat_{T+1})
 \ge\inner{a,\Thetahat_{T+1}}+\inner{a,\E_{Z\sim\rho_{T+1}}[Z]}$ by its definition. Substitution proves
\begin{equation}\label{eq:decomposition}
 \sum_{t=1}^T\inner{a-\mu_t,\thetahat_t}
 \le \Psi_1(0)-\inner{a,\E_{Z\sim\rho_{T+1}}[Z]}+\sum_{t=1}^T(A_t+B_t).
\end{equation}
 
Recall that $\E_t$ conditions on the history before round $t$, and let $\tildep_t = (1-\gamma)p_t + \gamma \nu$ be the distribution from which the learner samples $a_t$. Under this conditioning, 
$\rho_t,p_t,\tildep_t$ are fixed, while $a_t$ and $\thetahat_t$ are random. 
To convert~\cref{eq:decomposition} into regret, suppose the conditional
estimation bias satisfies
\begin{align*}
 \norm{\E_t[\thetahat_t]-\theta_t}_2\le\varepsilon_t,
\end{align*}
where $\varepsilon_t\ge0$ is determined by the past. 
Then, 
\begin{align*}
 \E_t[\inner{a-a_t,\theta_t}]
 ={}&\E_t[\inner{a-\mu_t,\thetahat_t}]
 +\inner{a-\mu_t,\theta_t-\E_t[\thetahat_t]}
 +\gamma\inner{\mu_t-\mu(\nu),\theta_t}\\
 \le{}&\E_t[\inner{a-\mu_t,\thetahat_t}]
       +2R\varepsilon_t+2\gamma.
\end{align*}
Summing and applying
\cref{eq:decomposition} gives
\begin{equation}\label{eq:general-regret}
 \begin{aligned}
 \Reg_T(a)\le{}&\E\left[\Psi_1(0)
 -\inner{a,\E_{Z\sim\rho_{T+1}}[Z]}
 +\sum_{t=1}^T(A_t+B_t)\right]+2R\sum_{t=1}^T\E[\varepsilon_t]+2\gamma T.
 \end{aligned}
\end{equation}
In the following sections, we bound the individual terms above.

\section{Stability Analysis for Random-Weighting Perturbations}\label{sec:general}\label{sec:convolved}
We bound $A_t$ and $B_t$ for the random-weighting perturbation construction in
\cref{fig:perturbation-choices}. 
 
\subsection{Bounding \texorpdfstring{$A_t$}{A-t}}\label{sec:A-bound}
Bounding $A_t$ follows the same idea in \cref{sec: adaptive pert}, except that we have to handle the shifting in the Hessian (\cref{eq: shifting}). The Taylor remainder evaluates the Hessian at the shifted point $\Thetahat_t+s\thetahat_t$, while the estimator only controls the action covariance $M_t$ at $\Thetahat_t$. Translating the integral moves the shift from the action side to the density side, where it appears as the ratio $\rho(z)/\rho(z+v)$, which is close to one when $v$ is small relative to the perturbation scale. This leads to the following definition of shifted Fisher-information matrix: 
\begin{definition}[Shifted Fisher-information matrix]\label{def:fisher-information}
Let $\rho$ be a probability density on $\R^d$ that is positive almost everywhere.
Define the shifted Fisher-information
matrix by
\begin{align}
 \calI^v(\rho)&=\int_{\R^d}
   \frac{\nabla\rho(z)\nabla\rho(z)^\top}{\rho(z+v)}\,\d z = \int_{\R^d}
   \frac{\rho(z)^2}{\rho(z+v)}\nabla\log\rho(z)\nabla\log\rho(z)^\top \d z,
   \qquad v\in\R^d.
   \label{eq:shifted-fisher-information}
\end{align}
For $v=0$, write $\calI(\rho):=\calI^0(\rho)$. 
At round $t$, write
$\calI_t^v:=\calI^v(\rho_t)$ and $\calI_t:=\calI(\rho_t)$.
\end{definition}
We can upper bound $A_t$ through the shifted Fisher information matrix. The theorem below holds for both random-weighting and elliptical perturbations. 
\begin{theorem}[Bounding $A_t$ with $\calI_t^v$]\label{thm:A-bound}
We have 
\begin{equation}\label{eq:A-bound}
 A_t \le\frac12\sqrt{\thetahat_t^\top M_t\thetahat_t}
       \sup_{s\in[0,1]}\sqrt{\thetahat_t^\top\calI_t^{s\thetahat_t}\thetahat_t}.
\end{equation}
\end{theorem}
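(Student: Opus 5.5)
Write $v=\thetahat_t$ and $\Theta=\Thetahat_t$. Since $\Psi_t$ is twice continuously differentiable with $\nabla\Psi_t(\Theta)=\E_{Z\sim\rho_t}[x(\Theta+Z)]=\mu_t$ (\cref{lem:smoothed-support-hessian}), Taylor's theorem as in \cref{eq: shifting} gives
\begin{align*}
A_t=\Psi_t(\Theta+v)-\Psi_t(\Theta)-\inner{\mu_t,v}=\tfrac12\, v^\top\nabla^2\Psi_t(\Theta+sv)\,v
\end{align*}
for some $s\in[0,1]$. It therefore suffices to show, for each fixed $s\in[0,1]$,
\begin{align*}
v^\top\nabla^2\Psi_t(\Theta+sv)\,v\le\sqrt{v^\top M_t v}\,\sqrt{v^\top\calI_t^{sv}v},
\end{align*}
and then take the supremum over $s$.

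For the Hessian I use the representation from \cref{lem:smoothed-support-hessian} underlying \cref{eq: FTPL analysis}. The gradient can be written as $\nabla\Psi_t(\Theta')=\int x(\Theta'+z)\rho_t(z)\,\d z=\int x(y)\rho_t(y-\Theta')\,\d y$. Differentiating under the integral in direction $v$ gives
\begin{align*}
v^\top\nabla^2\Psi_t(\Theta')v=-\int\inner{x(\Theta'+z),v}\inner{\nabla\rho_t(z),v}\,\d z.
\end{align*}
Because $\int\nabla\rho_t=0$, I may subtract any constant vector from $x$. I choose to subtract $\mu_t$, the mean at the unshifted point $\Theta$, rather than the mean at $\Theta'$. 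This is the crucial choice, since $M_t$ is centered at $\mu_t$.

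Next set $\Theta'=\Theta+sv$ and change variables $z=w-sv$, which moves the shift onto the density:
\begin{align*}
v^\top\nabla^2\Psi_t(\Theta+sv)v=-\int\inner{x(\Theta+w)-\mu_t,v}\,\inner{\nabla\rho_t(w-sv),v}\,\d w.
\end{align*}
I then multiply and divide by $\sqrt{\rho_t(w)}$ and apply Cauchy--Schwarz. The first factor becomes $\int\inner{x(\Theta+w)-\mu_t,v}^2\rho_t(w)\,\d w=v^\top M_tv$. The second becomes $\int\inner{\nabla\rho_t(w-sv),v}^2/\rho_t(w)\,\d w$. Substituting $z=w-sv$ turns this into $\int\inner{\nabla\rho_t(z),v}^2/\rho_t(z+sv)\,\d z=v^\top\calI_t^{sv}v$, exactly as in \cref{def:fisher-information}. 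Combining this with the factor $\tfrac12$ proves \cref{eq:A-bound}.

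The main obstacle is justifying the Hessian representation. Specifically, I need differentiation under the integral and the integration by parts, which requires boundary terms to vanish and $\int\nabla\rho_t=0$. I also need positivity of $\rho_t$ almost everywhere so the division is legitimate. These regularity facts are what \cref{lem:smoothed-support-hessian} supplies for the densities considered, using boundedness of $x$ on $\calA\subseteq RB_2^d$. I will invoke that lemma rather than reprove them. If $\calI_t^{sv}$ is infinite, the bound holds trivially.
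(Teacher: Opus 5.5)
Your proposal is correct and is essentially the paper's proof: Taylor's remainder at $\Thetahat_t+s\thetahat_t$, the Hessian formula from \cref{lem:smoothed-support-hessian}, recentering at the unshifted mean $\mu_t$ via $\int\nabla\rho_t(z-s\thetahat_t)\,\d z=0$, moving the shift onto the density, and weighted Cauchy--Schwarz against $\rho_t$ to obtain $\thetahat_t^\top M_t\thetahat_t$ and $\thetahat_t^\top\calI_t^{s\thetahat_t}\thetahat_t$. The only cosmetic difference is that the paper bounds $A_t\le\frac12\sup_{s\in[0,1]}\thetahat_t^\top\nabla^2\Psi_t(\Thetahat_t+s\thetahat_t)\thetahat_t$ rather than using the mean-value form with a single $s$, which amounts to the same thing here.
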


With \cref{eq:A-bound}, the key is to bound $\calI_t^{s\thetahat_t} \preceq \text{constant}\cdot \Sigma_t^{-1}$ for all $s\in[0,1]$, where $\Sigma_t = \lambda I + \sum_{\tau=1}^{t-1}\thetahat_\tau \thetahat_\tau^\top$. The following \cref{lem:convolved-shifted} establishes this bound, with the constant depending on the stability of the initial distribution $\rho_1$ under shifting and the Fisher information of the weight distribution $\varphi$. The bounds for the distributions we use are summarized in \cref{tab:convolution-initializers,tab:scalar-noise} and proved in \cref{app:scalar-table,app:elliptical-update-energy}.
 
\begin{lemma}\label{lem:convolved-shifted}
Consider random-reweighting perturbations. Recall that $\calI_t^v$ denotes the shifted Fisher-information matrix defined in \cref{eq:shifted-fisher-information}.
Fix any $\lambda>0$ and set
$\Sigma_t=\lambda I+\sum_{j=1}^{t-1}\thetahat_j\thetahat_j^\top$.
Then, for every shift $v\in\R^d$,
\begin{equation}\label{eq:convolved-general-derivative}
 \calI_t^v
 \preceq\max\left\{
   \lambda\norm{\calI_1^v}_{\op},\,
   \bigl(1+\chi^2(\rho_1(\cdot-v)\|\rho_1)\bigr)I_{\varphi}
 \right\}\Sigma_t^{-1},
\end{equation}
where $I_{\varphi}=\int\frac{\varphi'(\xi)^2}{\varphi(\xi)}\,\d \xi$ is the Fisher information associated with the scalar density $\varphi$, and $\rho_1(\cdot-v)$ is the density of $Z+v$ for $Z\sim\rho_1$.
\end{lemma}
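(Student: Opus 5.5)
The plan is to bound the quadratic form $w^\top\calI_t^v w$ for an arbitrary direction $w\in\R^d$ by writing $w=w_0+\sum_{\tau<t}c_\tau\thetahat_\tau$ and then minimizing over all such decompositions. The minimum of $\norm{w_0}_2^2/\lambda+\sum_\tau c_\tau^2$ subject to $w_0+\sum_\tau c_\tau\thetahat_\tau=w$ is exactly $w^\top\Sigma_t^{-1}w$. This is the standard variational formula for $(\lambda I+\Theta\Theta^\top)^{-1}$, with $\Theta$ the matrix whose columns are $\thetahat_\tau$. So it suffices to show
\[
w^\top\calI_t^v w\le w_0^\top\calI_1^v w_0+\bigl(1+\chi^2(\rho_1(\cdot-v)\|\rho_1)\bigr)I_\varphi\sum_\tau c_\tau^2 ,
\]
since the right-hand side is at most $C\bigl(\norm{w_0}_2^2/\lambda+\sum_\tau c_\tau^2\bigr)$ with $C$ the maximum in \cref{eq:convolved-general-derivative}.

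\textbf{Step 1: a score representation of $\nabla\rho_t$ in each direction.} Write $\rho_t(z)=\E_\xi[\rho_1(z-\sum_\tau\xi_\tau\thetahat_\tau)]$. Differentiating in direction $w_0$ gives $\E_\xi[\inner{\nabla\rho_1(y),w_0}]$, where $y=z-\sum_\tau\xi_\tau\thetahat_\tau$. For direction $\thetahat_\tau$, I will use the fact that shifting $z$ by $s\thetahat_\tau$ is the same as shifting $\xi_\tau$ by $s$. After changing variables in $\xi_\tau$, the derivative moves onto $\varphi$. This gives $\frac{\d}{\d s}\rho_t(z+s\thetahat_\tau)\big|_{s=0}=\E_\xi[\rho_1(y)\,\sigma(\xi_\tau)]$ with $\sigma=\varphi'/\varphi$. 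Combining the two cases,
\[
\inner{\nabla\rho_t(z),w}=\E_\xi\Bigl[\inner{\nabla\rho_1(y),w_0}+\rho_1(y)\sum_\tau c_\tau\sigma(\xi_\tau)\Bigr].
\]
The decomposition of $w$ is not unique when the $\thetahat_\tau$ are linearly dependent. That is harmless: each decomposition gives a valid identity, and the final bound is then optimized over all of them.

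\textbf{Step 2: weighted Cauchy--Schwarz.} Use the weight $\rho_1(y+v)\prod_\tau\varphi(\xi_\tau)$, whose $\xi$-integral is exactly $\rho_t(z+v)$. This gives
\[
\frac{\inner{\nabla\rho_t(z),w}^2}{\rho_t(z+v)}\le\E_\xi\Bigl[\frac{\bigl(\inner{\nabla\rho_1(y),w_0}+\rho_1(y)\sum_\tau c_\tau\sigma(\xi_\tau)\bigr)^2}{\rho_1(y+v)}\Bigr].
\]
Next integrate over $z$ and, for fixed $\xi$, change variables $z\mapsto y$; this decouples $y$ from $\xi$. The square then expands into three kinds of terms.
\begin{itemize}
\item The $w_0$ term gives $w_0^\top\calI_1^v w_0$.
\item The cross terms between $w_0$ and the $\sigma$'s vanish, and so do the cross terms $\sigma(\xi_\tau)\sigma(\xi_{\tau'})$ with $\tau\neq\tau'$. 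Both follow from independence and $\E[\sigma(\xi)]=\int\varphi'=0$.
\item The diagonal terms give $I_\varphi\sum_\tau c_\tau^2\int\rho_1(y)^2/\rho_1(y+v)\,\d y$. Substituting $y=u-v$ shows this last integral equals $1+\chi^2(\rho_1(\cdot-v)\|\rho_1)$.
\end{itemize}
This yields the displayed inequality, and minimizing over decompositions finishes the proof.

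\textbf{Main obstacle.} The computation is short; the delicate part is justification. We need to differentiate under the integral and change variables in $\xi_\tau$ in Step 1. For the cross terms to vanish we need $\int\varphi'=0$, i.e.\ $\varphi$ vanishes at $\pm\infty$ and $\varphi'$ is integrable, and we need the mixed integrals to be absolutely convergent so that Fubini applies. I would handle this with the regularity conditions on $\rho_1$ and $\varphi$ from the tables ($\varphi$ absolutely continuous with $I_\varphi<\infty$, $\rho_1$ positive and smooth with $\calI_1^v$ finite). If a term on the right is infinite, the bound holds trivially.
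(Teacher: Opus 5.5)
Your proof is correct, but it is organized differently from the paper's.

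\textbf{The paper's route.} The paper argues by induction on $t$, using three ingredients:
\begin{itemize}
\item \cref{lem:convolution-derivative} handles only the single fresh increment $\xi\thetahat_t$, through one scalar $\alpha$.
\item \cref{lem:convolution-ratio} shows that $\int\rho_t(z-v)^2/\rho_t(z)\,\d z$ is nonincreasing in $t$, so the $\chi^2$ factor can be traced back to $\rho_1$.
\item The choice $\alpha=\inner{\thetahat_t,\Sigma_{t+1}^{-1}\theta}$, together with $\theta-\alpha\thetahat_t=\Sigma_t\Sigma_{t+1}^{-1}\theta$, closes the inductive step.
\end{itemize}

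\textbf{Your route.} You apply weighted Cauchy--Schwarz once, against the full product measure over $(\xi_1,\dots,\xi_{t-1})$ with weight $\rho_1(y+v)$. The $\chi^2$ divergence of $\rho_1$ then appears directly, so the contraction lemma is never needed. In exchange, you must remove the cross terms $\sigma(\xi_\tau)\sigma(\xi_{\tau'})$ by independence, and you must invoke the variational formula for $\Sigma_t^{-1}$.

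You do not actually need the minimization. The explicit choice $w_0=\lambda\Sigma_t^{-1}w$ and $c_\tau=\inner{\thetahat_\tau,\Sigma_t^{-1}w}$ attains $w^\top\Sigma_t^{-1}w$. It is also exactly the decomposition implicit in the paper's induction once that induction is unrolled.

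\textbf{What each buys.} Your version is more self-contained. It shows that the bound reflects only the additive structure of $\lambda I+\sum_\tau\thetahat_\tau\thetahat_\tau^\top$ and does not depend on the order of the rounds. The paper's version is more modular. It matches the one-step recursion $\rho_{t+1}=\E_\xi[\rho_t(\cdot-\xi\thetahat_t)]$, which is reused in the $B_t$ analysis, and each of its regularity checks involves only a one-dimensional $\xi$-integral.
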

At $v=0$, \cref{lem:convolved-shifted} is a matrix analogue of the Fisher information inequality for convolutions \citep{liu2006two}. The shift $v$ contributes the additional $\chi^2$ factor.
Combining \cref{lem:convolved-shifted} with upper bounds on $\lambda\|\calI_1^{s\thetahat_t}\|_{\op}$ and $1+\chi^2(\rho_1(\cdot - s\thetahat_t)\|\rho_1)$ (\cref{tab:convolution-initializers}) and $I_\varphi$ (\cref{tab:scalar-noise}) for individual distributions, we can derive bounds for $A_t$: 
\begin{corollary}
\label{cor:A-multiplier}
Suppose $\lambda$ is chosen such that 
$\norm{\thetahat_t}_2\le\sqrt\lambda/8$ almost surely. Then for every combination of $\rho_1$ in \cref{tab:convolution-initializers} and $\varphi$ in
\cref{tab:scalar-noise}, it holds that 
\begin{align*}
   A_t \lesssim \sqrt{\thetahat_t^\top M_t\thetahat_t}\sqrt{\thetahat_t^\top \Sigma_t^{-1}\thetahat_t}. 
\end{align*}
\end{corollary}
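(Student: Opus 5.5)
The plan is to chain \cref{thm:A-bound} with \cref{lem:convolved-shifted}, applied at each shift $v=s\thetahat_t$ with $s\in[0,1]$. \Cref{thm:A-bound} gives
\begin{align*}
A_t\le\frac12\sqrt{\thetahat_t^\top M_t\thetahat_t}\,\sup_{s\in[0,1]}\sqrt{\thetahat_t^\top\calI_t^{s\thetahat_t}\thetahat_t}.
\end{align*}
By \cref{lem:convolved-shifted},
\begin{align*}
\thetahat_t^\top\calI_t^{s\thetahat_t}\thetahat_t\le C_s\,\thetahat_t^\top\Sigma_t^{-1}\thetahat_t,\qquad C_s=\max\bigl\{\lambda\norm{\calI_1^{s\thetahat_t}}_{\op},\ (1+\chi^2(\rho_1(\cdot-s\thetahat_t)\|\rho_1))I_\varphi\bigr\}.
\end{align*}
Here $\Sigma_t$ is determined by the past, so no issue arises from $\thetahat_t$ appearing in both places. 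The claim therefore reduces to showing that $\sup_{s\in[0,1]}C_s$ is bounded by a universal constant whenever $\norm{\thetahat_t}_2\le\sqrt\lambda/8$.

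It suffices to establish three uniform facts, one for each ingredient. First, $I_\varphi\lesssim1$ for every $\varphi$ in \cref{tab:scalar-noise}; these are fixed standardized scalar densities, so their Fisher information is a numerical constant (e.g.\ $1$ for the standard Gaussian). Second, for each initializer $\rho_1$ in \cref{tab:convolution-initializers} at scale $\sqrt\lambda$, we need $\lambda\norm{\calI_1^{v}}_{\op}\lesssim1$ whenever $\norm v_2\le\sqrt\lambda/8$. Third, under the same condition we need $1+\chi^2(\rho_1(\cdot-v)\|\rho_1)\lesssim1$. Since $\norm{s\thetahat_t}_2\le\norm{\thetahat_t}_2\le\sqrt\lambda/8$ for all $s\in[0,1]$, the hypothesis of the corollary is exactly what puts every shift in this regime. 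These per-distribution bounds are the entries of the tables, proved in \cref{app:scalar-table,app:elliptical-update-energy}, so I would cite them directly.

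To check the table entries, rescale $z=\sqrt\lambda w$. This gives $\lambda\calI^{v}(\rho_1)=\calI^{v/\sqrt\lambda}(\bar\rho)$ for the unit-scale profile $\bar\rho$, and the $\chi^2$ term is likewise scale invariant. So only unit-scale shifts $\norm u\le1/8$ need to be handled. For the Gaussian, $\rho(z)^2/\rho(z+u)=\rho(z)\exp(\inner{z,u}+\norm u^2/2)$, and integrating gives $\calI^u\preceq e^{O(\norm u^2)}(I+uu^\top)$ and $1+\chi^2=e^{\norm u^2}$, both $O(1)$. For Laplace-type or heavy-tailed radial initializers, the same approach works as long as the log-density is Lipschitz, since then the ratio $\rho(z)/\rho(z+u)\le e^{L\norm u}$ is bounded. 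The main obstacle is the heavy-tailed case, where the Lipschitz constant of $\log\rho$ and the integrability of $\norm{\nabla\log\rho}^2$ must be checked against the tail. I would handle this by bounding $\sup_{\norm u\le1/8}\sup_z\rho(z)/\rho(z+u)$ by a constant via the radial profile, and then using finiteness of the unshifted Fisher information.

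With all three facts in hand, $\sup_{s\in[0,1]}C_s\lesssim1$. Taking square roots and combining with the first display gives $A_t\lesssim\sqrt{\thetahat_t^\top M_t\thetahat_t}\sqrt{\thetahat_t^\top\Sigma_t^{-1}\thetahat_t}$.
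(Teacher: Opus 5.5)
Your argument for the corollary is correct and is the paper's proof: apply \cref{thm:A-bound}, bound each $\calI_t^{s\thetahat_t}$ with \cref{lem:convolved-shifted}, note that $\norm{s\thetahat_t}_2\le\sqrt\lambda/8$ for all $s\in[0,1]$, and read off $\lambda\norm{\calI_1^{v}}_{\op}$, $1+\chi^2(\rho_1(\cdot-v)\|\rho_1)$ and $I_\varphi$ as $\calO(1)$ from \cref{tab:convolution-initializers,tab:scalar-noise}. Your explicit Gaussian computation is also correct: at unit scale, $\calI^u=e^{\norm u_2^2}(I+uu^\top)$ and $1+\chi^2=e^{\norm u_2^2}$.

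One caveat concerns your optional sketch for re-deriving the non-Gaussian table entries. A pointwise bound on $\sup_z\rho(z)/\rho(z+u)$ cannot give dimension-free constants. At unit scale the radial Laplace density is proportional to $e^{-\sqrt d\norm w_2}$, so $\sup_z\rho(z)/\rho(z+u)=e^{\sqrt d\norm u_2}$. This is as large as $e^{\sqrt d/8}$ when $\norm u_2=1/8$. For Student-$t_2$, evaluating at $z=u/\norm u_2$ with $\norm u_2=1/8$ already gives the ratio $(145/128)^{(d+2)/2}$, which is exponential in $d$. Multiplying such a supremum by the unshifted Fisher information therefore loses a factor $e^{\Omega(\sqrt d)}$ or $e^{\Omega(d)}$.

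The paper instead controls only the second moment $\E_p[(p(Z)/p(Z+v))^2]$ and pairs it, via Cauchy--Schwarz, with the fourth moment of the score (\cref{lem:radial-information-reduction}).
For radial Laplace, the inequality $\norm{z-v}_2\ge\norm z_2-\inner{z,v}/\norm z_2$ reduces this second moment to $\E[e^{3\sqrt d\inner{U,v}}]$, with $U$ uniform on the sphere. That quantity is $\calO(1)$ because $\inner{U,v}$ fluctuates only at scale $\norm v_2/\sqrt d$ (\cref{lem:spherical-projection-density}).
For Student-$t_2$, concavity of $\log$ gives the comparison $p(z)^3/p(z+v)^2\le\alpha^{-1}P_{\alpha I}(z-2v)$ with $\alpha=1-6\norm v_2^2$.

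Since your proof of the corollary cites the table entries rather than relying on this sketch, its validity is unaffected.
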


\subsection{Bounding \texorpdfstring{$B_t$}{B-t}}\label{sec:B-convolution}
By the nature of random-weighting perturbations, the stability term due to the change of perturbation is similar to the stability due to inclusion of a new sample, so bounding $B_t$ follows a similar proof as bounding $A_t$. A difference is that the weight $\xi_{t,\tau}$ is unbounded (\cref{eq: choice of a_t}), so in the analysis we perform clipping on its tail. We establish the following theorem (\textit{cf. } \cref{thm:A-bound}). 
\begin{theorem}[Bounding $B_t$ with $\calI_t^v$ under random-weighting perturbations]\label{thm:B-fisher}
Under the random-weighting setup in \cref{fig:perturbation-choices},
for every threshold $\sfL\ge1$, we have
\begin{equation}\label{eq:B-convolution-tail}
 \begin{aligned}
 B_t  &\leq \frac12\sqrt{\thetahat_t^\top M_t\thetahat_t}\,
 \E_{\xi\sim \varphi}\left[\xi^2\mathbf1_{\{|\xi|\le \sfL\}}\sup_{s\in[0,1]}
       \sqrt{\thetahat_t^\top\calI_t^{(1+s\xi)\thetahat_t}\thetahat_t}\right]+2R\norm{\thetahat_t}_2\,\E_{\xi\sim \varphi}[|\xi|\mathbf1_{\{|\xi|>\sfL\}}].
 \end{aligned}
\end{equation}
\end{theorem}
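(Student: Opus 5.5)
The plan is to write $\Psi_{t+1}$ in terms of $\Psi_t$, so that $B_t$ turns into a second-order Taylor remainder along the direction $\thetahat_t$. This remainder can then be handled exactly as in \cref{thm:A-bound}, with the shift now equal to $(1+s\xi)\thetahat_t$.

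\textbf{Step 1: reduction to a one-dimensional function.} Under the random-weighting construction, $Z\sim\rho_{t+1}$ has the same law as $Z+\xi\thetahat_t$, where $Z\sim\rho_t$ and $\xi\sim\varphi$ are independent. Hence $\Psi_{t+1}(\Theta)=\E_{\xi\sim\varphi}[\Psi_t(\Theta+\xi\thetahat_t)]$. Define $g(u)=\Psi_t(\Thetahat_t+u\thetahat_t)$, so that $\Thetahat_{t+1}$ corresponds to $u=1$. I assume, as holds for every entry of \cref{tab:scalar-noise}, that $\varphi$ has mean zero. This lets me subtract a linear term for free:
\begin{align*}
 B_t=\E_{\xi}\bigl[g(1+\xi)-g(1)-\xi g'(1)\bigr].
\end{align*}
The function $g$ is convex because $h$ is, so the integrand is nonnegative. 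I split the expectation according to the events $\{|\xi|\le\sfL\}$ and $\{|\xi|>\sfL\}$, keeping the linear term inside each piece.

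\textbf{Step 2: the tail part.} We have $g'(u)=\inner{\nabla\Psi_t(\Thetahat_t+u\thetahat_t),\thetahat_t}$, and $\nabla\Psi_t$ is an average of points of $\calA$, so it lies in $\conv(\calA)\subseteq RB_2^d$. Therefore $|g'|\le R\norm{\thetahat_t}_2$, which gives
\begin{align*}
 |g(1+\xi)-g(1)-\xi g'(1)|\le 2R|\xi|\norm{\thetahat_t}_2 .
\end{align*}
Taking the expectation over $\{|\xi|>\sfL\}$ produces the second term of \cref{eq:B-convolution-tail}.

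\textbf{Step 3: the bulk part.} On $\{|\xi|\le\sfL\}$, Taylor's theorem with integral remainder gives
\begin{align*}
 g(1+\xi)-g(1)-\xi g'(1)=\xi^2\int_0^1(1-s)\,g''(1+s\xi)\,\d s\le\frac{\xi^2}{2}\sup_{s\in[0,1]}g''(1+s\xi).
\end{align*}
It remains to show $g''(w)\le\sqrt{\thetahat_t^\top M_t\thetahat_t}\sqrt{\thetahat_t^\top\calI_t^{w\thetahat_t}\thetahat_t}$ for every $w$; applying this with $w=1+s\xi$ yields the first term of \cref{eq:B-convolution-tail}.

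\textbf{Step 4: the Hessian bound at a shifted point (the main step).} This is the same computation as in \cref{thm:A-bound}, which I would reuse. By \cref{lem:smoothed-support-hessian},
\begin{align*}
 g''(w)=-\int\inner{x(\Thetahat_t+w\thetahat_t+z),\thetahat_t}\inner{\nabla\rho_t(z),\thetahat_t}\,\d z .
\end{align*}
Substituting $z\mapsto z-w\thetahat_t$ moves the shift onto the density, giving $-\int\inner{x(\Thetahat_t+z),\thetahat_t}\inner{\nabla\rho_t(z-w\thetahat_t),\thetahat_t}\,\d z$. Since $\int\nabla\rho_t=0$, I can replace $x$ by $x-\mu_t$ at no cost. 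Cauchy--Schwarz with weight $\rho_t(z)$ then bounds $g''(w)$ by
\begin{align*}
 \sqrt{\E_{Z\sim\rho_t}\bigl[\inner{x(\Thetahat_t+Z)-\mu_t,\thetahat_t}^2\bigr]}\;\sqrt{\int\frac{\inner{\nabla\rho_t(z-w\thetahat_t),\thetahat_t}^2}{\rho_t(z)}\,\d z}.
\end{align*}
The first factor is $\sqrt{\thetahat_t^\top M_t\thetahat_t}$. Substituting back $z\mapsto z+w\thetahat_t$ turns the second factor into $\sqrt{\thetahat_t^\top\calI_t^{w\thetahat_t}\thetahat_t}$, by \cref{def:fisher-information}.

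The obstacle I anticipate is bookkeeping: the sign convention of the shift must match \cref{def:fisher-information} exactly, and the regularity of $\rho_t$ must be enough to justify differentiating under the integral and the integration-by-parts identity. For the latter I would appeal to the regularity conditions of \cref{lem:smoothed-support-hessian}.
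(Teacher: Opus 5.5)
Your proposal is correct and follows the paper's proof essentially step for step. You use $\E[\xi]=0$ to write $B_t$ as an expected second-order Taylor remainder of $\Psi_t$ at $\Thetahat_{t+1}$, bound the Hessian at the shifted point $\Thetahat_t+(1+s\xi)\thetahat_t$ by translation, centering, and weighted Cauchy--Schwarz, and control the event $\{|\xi|>\sfL\}$ by the crude bound $2R|\xi|\norm{\thetahat_t}_2$; the one-dimensional reparametrization via $g$ and the integral form of the remainder are only cosmetic differences.
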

The second term in \cref{eq:B-convolution-tail} handles the rare event when the magnitude of the perturbation $|\xi|$ exceeds a certain threshold. Bounds for this tail term are listed in \cref{tab:scalar-noise} for each perturbation we consider. The corollary below provides a concrete bound for each perturbation:
\begin{corollary}
\label{cor:B-multiplier}
Suppose $\norm{\thetahat_t}_2\le G$ almost surely and $\lambda\ge C G^2\log^2(RGT)$ for a sufficiently large universal constant $C$.  Then for every combination of $\rho_1$ in \cref{tab:convolution-initializers} and $\varphi$ in
\cref{tab:scalar-noise}, it holds that 
\begin{align*}
    B_t \lesssim \sqrt{\thetahat_t^\top M_t\thetahat_t}\sqrt{\thetahat_t^\top \Sigma_t^{-1}\thetahat_t} + T^{-1}. 
\end{align*}
 
\end{corollary}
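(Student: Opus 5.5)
The plan is to instantiate \cref{thm:B-fisher} and bound its two terms separately. For the truncated main term, the shifts that appear are $v=(1+s\xi)\thetahat_t$ with $|\xi|\le\sfL$ and $s\in[0,1]$. Hence $\norm{v}_2\le(1+\sfL)G$. I will choose $\sfL\asymp\log(RGT)$, matched to the tail of $\varphi$; for the Student-$t$-type weights I would take $\sfL$ polynomial in $RGT$ and recheck the constraint. The hypothesis $\lambda\ge CG^2\log^2(RGT)$ then gives $\norm{v}_2\le\sqrt\lambda/8$ uniformly over the truncated range.

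In this regime I apply \cref{lem:convolved-shifted} with shift $v$. Exactly as in \cref{cor:A-multiplier}, the entries of \cref{tab:convolution-initializers} give $\lambda\norm{\calI_1^v}_{\op}\lesssim1$ and $1+\chi^2(\rho_1(\cdot-v)\|\rho_1)\lesssim1$ whenever $\norm{v}_2\le\sqrt\lambda/8$. The table for $\varphi$ gives $I_\varphi\lesssim1$. Together these yield
\[
\calI_t^{(1+s\xi)\thetahat_t}\preceq c\,\Sigma_t^{-1}
\]
for a universal constant $c$, uniformly in $s\in[0,1]$ and $|\xi|\le\sfL$. The supremum over $s$ inside the expectation is therefore at most $\sqrt{c\,\thetahat_t^\top\Sigma_t^{-1}\thetahat_t}$. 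It can be pulled out, leaving the factor $\E_\varphi[\xi^2\mathbf 1_{\{|\xi|\le\sfL\}}]\le\E_\varphi[\xi^2]$, which is bounded by a constant for each $\varphi$ in \cref{tab:scalar-noise} because these have finite variance. This gives the first term $\sqrt{\thetahat_t^\top M_t\thetahat_t}\sqrt{\thetahat_t^\top\Sigma_t^{-1}\thetahat_t}$.

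For the tail term $2R\norm{\thetahat_t}_2\,\E_\varphi[|\xi|\mathbf 1_{\{|\xi|>\sfL\}}]\le 2RG\,\E_\varphi[|\xi|\mathbf 1_{\{|\xi|>\sfL\}}]$, I will use the tail bounds listed in \cref{tab:scalar-noise}. For Gaussian or Laplace-type $\varphi$, this expectation is $\lesssim e^{-c'\sfL}$ up to polynomial factors. Taking $\sfL=K\log(RGT)$ with $K$ a large constant makes the term at most $T^{-1}$. The constant $C$ in the condition on $\lambda$ is then chosen large enough, relative to $K$, that the requirement $(1+\sfL)G\le\sqrt\lambda/8$ from the first step holds.

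The main obstacle is reconciling the two requirements on $\sfL$. It must be large enough to kill the tail, yet small enough that shifts of size $\sfL G$ stay within the $\sqrt\lambda/8$ window where the $\chi^2$ and shifted-Fisher bounds of $\rho_1$ are constant. Only logarithmic tails of $\varphi$ are compatible with the $\log^2$ dependence in the hypothesis on $\lambda$. I would therefore first verify, entry by entry in \cref{tab:scalar-noise}, that the tail quantity $\E_\varphi[|\xi|\mathbf 1_{\{|\xi|>\sfL\}}]$ decays at least exponentially in $\sfL$, or in $\sqrt{\sfL}$ given the squared log. That is precisely what the table is stated to provide.
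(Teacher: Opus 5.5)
Your proposal is correct and follows essentially the same route as the paper. You take $\sfL=C_0\log(RGT)$ and use the scale condition on $\lambda$ to keep every shift $(1+s\xi)\thetahat_t$ with $|\xi|\le\sfL$ inside the $\sqrt\lambda/8$ window, so that \cref{lem:convolved-shifted} and the tables give $\calI_t^{(1+s\xi)\thetahat_t}\preceq\calO(1)\Sigma_t^{-1}$; you then bound $\E[\xi^2]=\calO(1)$ and use the tail entries of \cref{tab:scalar-noise} to make the clipped term in \cref{thm:B-fisher} at most $T^{-1}$.

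Two of your asides should be dropped. The one about Student-$t$ weights with $\sfL$ polynomial in $RGT$ is irrelevant, since \cref{tab:scalar-noise} lists only Gaussian and Laplace weights, whose tails decay at least like $(\sfL+1)e^{-\sfL}$. The one about tails decaying like $e^{-\sqrt{\sfL}}$ is not just unnecessary but would actually break the window constraint under the stated $\log^2$ hypothesis on $\lambda$.
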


\section{Stability Analysis for Elliptical Perturbations}\label{sec:elliptical}
We now bound $A_t$ and $B_t$ for the elliptical perturbation construction in
\cref{fig:perturbation-choices}.

\subsection{Bounding \texorpdfstring{$A_t$}{A-t}}\label{sec:A-elliptical}
Under elliptical perturbations, $A_t$ is again bounded by \cref{thm:A-bound}. The bounds in \cref{tab:elliptical-updates} give $\calI^v(P_\Sigma)\preceq \calO(1)\Sigma^{-1}$ whenever $\norm v_{\Sigma^{-1}}\le1/8$. Applying this to $v=s\thetahat_t$ for every $s\in[0,1]$ yields the following corollary:
\begin{corollary}
\label{cor:A-elliptical}
Suppose $\lambda$ is chosen such that 
$\norm{\thetahat_t}_2\le\sqrt\lambda/8$ almost surely. Then for every perturbation in \cref{tab:elliptical-updates}, it holds that
\begin{align*}
   A_t\lesssim \sqrt{\thetahat_t^\top\Sigma_t^{-1}\thetahat_t}
 \sqrt{\thetahat_t^\top M_t\thetahat_t}
\end{align*}
\end{corollary}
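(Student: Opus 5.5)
The plan is to combine \cref{thm:A-bound} with the table bounds for the elliptical families. \cref{thm:A-bound} holds for any perturbation density with the regularity of \cref{lem:smoothed-support-hessian}, and in particular for $\rho_t=P_{\Sigma_t}$. It gives
\begin{align*}
 A_t\le\frac12\sqrt{\thetahat_t^\top M_t\thetahat_t}\,\sup_{s\in[0,1]}\sqrt{\thetahat_t^\top\calI_t^{s\thetahat_t}\thetahat_t}.
\end{align*}
It therefore suffices to show $\calI^{s\thetahat_t}(P_{\Sigma_t})\preceq C\,\Sigma_t^{-1}$ for a universal constant $C$, uniformly over $s\in[0,1]$.

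The first step is to check the shift condition required by \cref{tab:elliptical-updates}, namely $\norm{s\thetahat_t}_{\Sigma_t^{-1}}\le 1/8$. Since $\Sigma_t=\lambda I+\sum_{\tau<t}\thetahat_\tau\thetahat_\tau^\top\succeq\lambda I$, we have $\Sigma_t^{-1}\preceq\lambda^{-1}I$. Hence
\begin{align*}
 \norm{s\thetahat_t}_{\Sigma_t^{-1}}\le s\,\norm{\thetahat_t}_2/\sqrt\lambda\le 1/8,
\end{align*}
using $s\le1$ and the hypothesis $\norm{\thetahat_t}_2\le\sqrt\lambda/8$. Note that $\Sigma_t$ depends only on $\thetahat_1,\dots,\thetahat_{t-1}$, so it is fixed when $\thetahat_t$ is drawn, and the bound holds pointwise for every realization.

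Next, invoke the \cref{tab:elliptical-updates} bound with $\Sigma=\Sigma_t$ and $v=s\thetahat_t$ to get $\calI_t^{s\thetahat_t}\preceq C\Sigma_t^{-1}$. This bound is stated uniformly over shifts with $\norm v_{\Sigma^{-1}}\le 1/8$ and is proved in \cref{app:elliptical-update-energy}, which we take as given. Substituting gives $\thetahat_t^\top\calI_t^{s\thetahat_t}\thetahat_t\le C\,\thetahat_t^\top\Sigma_t^{-1}\thetahat_t$ for every $s$. The supremum over $s$ is therefore at most $\sqrt{C}\sqrt{\thetahat_t^\top\Sigma_t^{-1}\thetahat_t}$, which yields the claim with constant $\sqrt C/2$.

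If the table bound were stated only for $\Sigma=I$, one would first reduce to that case. The map $z\mapsto\Sigma^{-1/2}z$ sends $P_\Sigma$ to $P_I$ and transforms the shifted Fisher information as $\calI^v(P_\Sigma)=\Sigma^{-1/2}\calI^{\Sigma^{-1/2}v}(P_I)\Sigma^{-1/2}$. This reduces everything to an isotropic bound at shift $\norm{u}_2\le1/8$. The genuine difficulty in that isotropic bound is controlling the ratio $\rho(z)/\rho(z+u)$ for the radial profiles: it is bounded for Laplace and Student-$t_2$, but grows like $e^{\norm z\norm u}$ for the Gaussian, so integrability needs the small shift. That work is in the table's proof, however, and the present corollary needs only the bookkeeping above.
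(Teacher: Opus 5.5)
Your proposal is correct and is the paper's proof: use $\Sigma_t\succeq\lambda I$ to get $\norm{s\thetahat_t}_{\Sigma_t^{-1}}\le1/8$, apply the Fisher column of \cref{tab:elliptical-updates} to obtain $\calI_t^{s\thetahat_t}\preceq\calO(1)\Sigma_t^{-1}$ for all $s\in[0,1]$, and substitute into \cref{thm:A-bound}. Your closing aside is slightly off, though the proof does not use it. For the Gaussian, the shifted integral is finite for every shift, and the small shift is what makes the constant universal. For radial Laplace and Student-$t_2$, the density ratio $\rho(z)/\rho(z+u)$ is bounded only by a $d$-dependent constant, which is why the paper controls its second moment $C_D$ rather than its supremum.
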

 
\subsection{Bounding \texorpdfstring{$B_t$}{B-t}}\label{sec:B-elliptical}
Unlike the random-weighting construction, $\rho_{t+1}=P_{\Sigma_{t+1}}$ is not obtained from $\rho_t$ by adding independent noise (except in the Gaussian case). To express the bound on $B_t$, we define a different information matrix, called \emph{shifted elliptical information matrix} below. Since it involves the second moment of $Z$, which is infinite for Student-$t_2$, we also need a clipped version. 
 
\begin{definition}[Shifted elliptical information matrix]\label{def:elliptical-update-information}
Fix an elliptical family $P_\Sigma$, a shape matrix $\Sigma\succ0$, and
$v\in\R^d$. Define
\begin{equation}\label{eq:elliptical-update-information}
 \calJ^v(P_\Sigma)
 :=\int_{\R^d}
 \frac{P_{\Sigma+vv^\top}(z)^2}{P_\Sigma(z+v)}
 \Sigma^{-1}zz^\top\Sigma^{-1}\,\d z.
\end{equation}
For a clipping radius $\sfL>0$, define the ellipsoid $\calE_\Sigma(\sfL)
 :=\left\{z\in\R^d:
 \norm z_{\Sigma^{-1}}\le \sfL\right\}$ 
and the clipped version of $\calJ^v$ as 
\begin{equation}\label{eq:clipped-elliptical-update-information}
 \calJ^{v,\sfL}(P_\Sigma)
 :=\int_{\calE_\Sigma(\sfL)}
 \frac{P_{\Sigma+vv^\top}(z)^2}{P_\Sigma(z+v)}
 \Sigma^{-1}zz^\top\Sigma^{-1}\,\d z.
\end{equation}
 
For $v=0$, write $\calJ(P_\Sigma):=\calJ^0(P_\Sigma)$ and
$\calJ^{\sfL}(P_\Sigma):=\calJ^{0,\sfL}(P_\Sigma)$. For the
elliptical law at round $t$, write $\calJ_t^v:=\calJ^v(P_{\Sigma_t})$
and $\calJ_t:=\calJ(P_{\Sigma_t})$, and similarly write
$\calJ_t^{v,\sfL}:=\calJ^{v,\sfL}(P_{\Sigma_t})$ and
$\calJ_t^{\sfL}:=\calJ^{\sfL}(P_{\Sigma_t})$.
\end{definition}
Observe that if $P_\Sigma$ is Gaussian, then $\calJ(P_{\Sigma})=\calI(P_\Sigma)$ because $\nabla P_{\Sigma}(z) = -P_\Sigma(z)\Sigma^{-1}z$. The next theorem bounds $B_t$ through elliptical information matrix.

\begin{samepage}
\begin{theorem}[Bounding $B_t$ with $\calJ^v_t$ under elliptical perturbation]\label{thm:B-elliptical}
Under the elliptical perturbation setup in \cref{fig:perturbation-choices},
assume the radial profile $f$ is positive and nonincreasing. Then 
\begin{equation}\label{eq:B-elliptical-unclipped}
 B_t\le\frac12
 \sqrt{\thetahat_t^\top M_t\thetahat_t}
 \sqrt{\thetahat_t^\top\calJ_t^{\thetahat_t}\thetahat_t}.
\end{equation}
Write $\kappa_t=\norm{\thetahat_t}_{\Sigma_t^{-1}}$.   For every finite clipping radius $\sfL\ge\sqrt{1+\kappa_t^2}$,
\begin{equation}\label{eq:B-elliptical}
 \begin{aligned}
 B_t\le{}&\frac12
 \sqrt{\thetahat_t^\top M_t\thetahat_t}
 \sqrt{\thetahat_t^\top\calJ_t^{\thetahat_t,\sfL}\thetahat_t}+\frac{2R\norm{\thetahat_t}_2\kappa_t}{\sqrt d}
 \E_{Z\sim P_{\Sigma_t}}\!\left[
   \norm Z_{\Sigma_t^{-1}}
   \mathbf1\left\{\norm Z_{\Sigma_t^{-1}}>\frac{\sfL}{\sqrt{1+\kappa_t^2}}\right\}
 \right].
 \end{aligned}
\end{equation}
\end{theorem}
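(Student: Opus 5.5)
We bound $B_t=\Psi_{t+1}(\Thetahat_{t+1})-\Psi_t(\Thetahat_{t+1})$ by writing it as an integral over the shape-matrix path from $\Sigma_t$ to $\Sigma_{t+1}=\Sigma_t+\thetahat_t\thetahat_t^\top$. Abbreviate $\Theta:=\Thetahat_{t+1}$. The argument then mirrors \cref{thm:A-bound}: differentiate $\Psi$ in the perturbation parameter, move the shift onto the density, and apply Cauchy--Schwarz against the action covariance $M_t$.

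\textbf{Path and derivative.} Define $g(s)=\E_{Z\sim P_{\Sigma_t+s\thetahat_t\thetahat_t^\top}}[h(\Theta+Z)]$ for $s\in[0,1]$, so that $B_t=g(1)-g(0)$. Since $P_\Sigma(z)=\det(\Sigma)^{-1/2}f(\norm z_{\Sigma^{-1}})$, we have $\partial_s P_{\Sigma_s}(z)$ equal to a combination of $-\tfrac12\tr(\Sigma_s^{-1}vv^\top)P_{\Sigma_s}(z)$ and $\tfrac12 f'(\cdot)\,\langle \Sigma_s^{-1}z,v\rangle^2/\norm z_{\Sigma_s^{-1}}$, where $v=\thetahat_t$. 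An easier route exploits the elliptical structure. Write $Z=\Sigma_s^{1/2}W$ with $W\sim P_I$, and use $\Sigma_s^{1/2}$ along the path. Alternatively, for the rank-one update, write $Z\stackrel{d}{=}$ a linear map $L_s W$ with $L_s=(I+c_s vv^\top\Sigma_t^{-1})\Sigma_t^{1/2}$. Then $g'(s)=\E[\langle x(\Theta+L_sW),\dot L_sW\rangle]$, which is linear in the action $x$. Because $\E[\dot L_s W]=0$ by symmetry, we may subtract a mean vector freely.

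\textbf{Moving to a single reference point.} Since $g$ is convex in $s$ (an expectation of a convex function along a path), $B_t\le g'(1)$. The derivative at $s=1$ then involves $P_{\Sigma_t+vv^\top}$ and the direction $\langle\Sigma_t^{-1}z,v\rangle$, which explains the factors $P_{\Sigma+vv^\top}(z)$ and $\Sigma^{-1}zz^\top\Sigma^{-1}$ in $\calJ^v$. Monotonicity of $f$ should be what makes this direction-weighting work, or alternatively what justifies a nonnegativity step that lets us drop the trace term.

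To relate this to $M_t$, which is defined with $p_t$ at $\Thetahat_t=\Theta-v$ and $\rho_t=P_{\Sigma_t}$, change variables $z\mapsto z-v$ so that $x(\Theta+z)=x(\Thetahat_t+(z+v))$. Then multiply and divide by $P_{\Sigma_t}(z+v)$. Writing the result as $\int (x(\Thetahat_t+z')-\mu_t)^\top v\cdot \frac{P_{\Sigma_t+vv^\top}(z)}{P_{\Sigma_t}(z+v)}\langle\Sigma_t^{-1}z,v\rangle\,P_{\Sigma_t}(z+v)\,\d z$ and applying Cauchy--Schwarz under $P_{\Sigma_t}(z+v)\,\d z$ yields $\sqrt{\thetahat_t^\top M_t\thetahat_t}\sqrt{\thetahat_t^\top\calJ_t^{v}\thetahat_t}$. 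The constant $\tfrac12$ comes from the rank-one derivative normalization, as in \cref{thm:A-bound}. This proves \cref{eq:B-elliptical-unclipped}.

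\textbf{Clipped version and main obstacle.} For \cref{eq:B-elliptical}, split the integral into $z\in\calE_{\Sigma_t}(\sfL)$ and its complement. On the ellipsoid, the same Cauchy--Schwarz argument gives the $\calJ^{v,\sfL}$ term. Outside it, bound $|\langle x-\mu_t,v\rangle|\le 2R\norm v_2$ directly and keep $|\langle\Sigma_t^{-1}z,v\rangle|\le\kappa_t\norm z_{\Sigma_t^{-1}}$ integrated against $P_{\Sigma_t+vv^\top}$. Converting back to $P_{\Sigma_t}$ uses $\Sigma_t+vv^\top\preceq(1+\kappa_t^2)\Sigma_t$, which rescales the radius and produces the threshold $\sfL/\sqrt{1+\kappa_t^2}$. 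The factor $1/\sqrt d$ should come from spherical symmetry: averaging $\langle U,u\rangle$ over uniform directions $U$ gives $\E|\langle U,u\rangle|\lesssim\norm u/\sqrt d$.

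I expect this last step to be the main obstacle. It requires getting the exact density-ratio bookkeeping under the rank-one change of $\Sigma$ right, extracting the $1/\sqrt d$ rather than a weaker constant, and justifying the differentiation in $s$ for heavy tails such as Student-$t_2$, where the second moment is infinite.
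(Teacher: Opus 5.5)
Your skeleton matches the paper: a rank-one path in the shape matrix, the change of variables $Z=L_sW$, shifting onto $P_{\Sigma_t}(z+\thetahat_t)$, Cauchy--Schwarz against $M_t$, and splitting on $\calE_{\Sigma_t}(\sfL)$. \textbf{The gap} is the step that reduces the path to a single reference point. The function $g(s)=\E_{Z\sim P_{\Sigma_t+s\thetahat_t\thetahat_t^\top}}[h(\Thetahat_{t+1}+Z)]$ is \emph{not} convex in $s$. Convexity of $h$ transfers only along paths that are affine in the parameter. Here $L_sW=W+c_s\,(\thetahat_t^\top\Sigma_t^{-1}W)\,\thetahat_t$ with $c_s=(\sqrt{1+s\kappa_t^2}-1)/\kappa_t^2$, which is concave in $s$. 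For a counterexample, take $d=1$, $\calA=[-1,1]$ (so $h(x)=|x|$), Gaussian perturbation with $\Sigma_t=\sigma^2$, and $\Thetahat_{t+1}=0$. Then $g(s)=\sqrt{2/\pi}\,\sqrt{\sigma^2+s\thetahat_t^2}$ is strictly concave, so $B_t=g(1)-g(0)>g'(1)$. Your inequality $B_t\le g'(1)$ is therefore false in general. As a consequence, you never locate where the hypothesis that $f$ is nonincreasing is used.

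\textbf{How the paper does it.} It integrates the derivative exactly. Write $\Sigma^{(s)}=\Sigma_t+s\thetahat_t\thetahat_t^\top$ (so $\Sigma^{(1)}=\Sigma_{t+1}$) and $b_s=1+s\kappa_t^2$. Sherman--Morrison gives $\thetahat_t^\top(\Sigma^{(s)})^{-1}z=\thetahat_t^\top\Sigma_t^{-1}z/b_s$, hence
\[
B_t=\frac12\int_{\R^d}\inner{x(\Thetahat_{t+1}+z)-\mu_t,\thetahat_t}\,(\thetahat_t^\top\Sigma_t^{-1}z)\left(\int_0^1\frac{P_{\Sigma^{(s)}}(z)}{b_s}\,\d s\right)\d z.
\]
After taking absolute values, three facts give the pointwise bound $P_{\Sigma^{(s)}}\le\sqrt{b_1/b_s}\,P_{\Sigma_{t+1}}$: $f$ is nonincreasing, $\Sigma^{(s)}\preceq\Sigma_{t+1}$, and $\det\Sigma^{(s)}=b_s\det\Sigma_t$. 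This is exactly where monotonicity enters. It yields $\int_0^1 P_{\Sigma^{(s)}}/b_s\,\d s\le\frac{2}{1+\sqrt{b_1}}P_{\Sigma_{t+1}}$. Your Cauchy--Schwarz step then gives the factor $\frac{1}{1+\sqrt{b_1}}\le\frac12$.

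\textbf{A repair of your convexity idea.} Parametrize by the coefficient itself:
\[
\tilde g(c)=\E_{W\sim P_{\Sigma_t}}\bigl[h\bigl(\Thetahat_{t+1}+W+c\,(\thetahat_t^\top\Sigma_t^{-1}W)\thetahat_t\bigr)\bigr].
\]
This is genuinely convex in $c$, and $c_1=1/(1+\sqrt{b_1})$ produces the law $P_{\Sigma_{t+1}}$. Hence
\[
B_t\le c_1\tilde g'(c_1)=\frac{1}{\sqrt{b_1}(1+\sqrt{b_1})}\,\E_{Z\sim P_{\Sigma_{t+1}}}\bigl[\inner{x(\Thetahat_{t+1}+Z)-\mu_t,\thetahat_t}\,\thetahat_t^\top\Sigma_t^{-1}Z\bigr].
\]
From there your Cauchy--Schwarz argument goes through, using only positivity of $f$.

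\textbf{The clipped term.} Do not use the pointwise bound $|\thetahat_t^\top\Sigma_t^{-1}z|\le\kappa_t\norm z_{\Sigma_t^{-1}}$; it destroys the $1/\sqrt d$. Instead write $Z_+=\Sigma_{t+1}^{1/2}SU$. Using $\Sigma_{t+1}\preceq b_1\Sigma_t$, enlarge the exterior event to $\{S>\sfL/\sqrt{b_1}\}$, which depends on $S$ alone. Then factor by independence of $S$ and $U$, and use $\E|\inner{w,U}|\le\norm w_2/\sqrt d$ with $w=\Sigma_{t+1}^{1/2}\Sigma_t^{-1}\thetahat_t$, for which $\norm w_2=\kappa_t\sqrt{b_1}$.
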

\end{samepage}
\begin{corollary}
\label{cor:B-elliptical}
Suppose $\lambda$ is chosen such that 
 $\norm{\thetahat_t}_2\le\sqrt\lambda/8$ almost surely. Then for every perturbation in \cref{tab:elliptical-updates}, it holds that
\begin{align*}
    B_t \lesssim \sqrt{\log(dRGT)}\sqrt{\thetahat_t^\top M_t\thetahat_t}
 \sqrt{\thetahat_t^\top \Sigma_t^{-1}\thetahat_t} + T^{-1}. 
\end{align*}
\end{corollary}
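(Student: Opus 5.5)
We plan to derive \cref{cor:B-elliptical} from the clipped bound \cref{eq:B-elliptical} in \cref{thm:B-elliptical}, choosing the clipping radius $\sfL$ to balance the two terms. Since $\Sigma_t\succeq\lambda I$ and $\norm{\thetahat_t}_2\le\sqrt\lambda/8$, we have $\kappa_t=\norm{\thetahat_t}_{\Sigma_t^{-1}}\le 1/8$. In particular $\sqrt{1+\kappa_t^2}\le 2$, and the shift $v=\thetahat_t$ satisfies $\norm{v}_{\Sigma_t^{-1}}\le1/8$. All profiles in \cref{tab:elliptical-updates} are positive and nonincreasing, so \cref{thm:B-elliptical} applies. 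We take $\sfL = 2\,\sfL_0$, where $\sfL_0$ is a tail quantile of the radial variable $S=\norm{Z}_{\Sigma_t^{-1}}$, to be chosen per distribution.

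\textbf{Main term.} We need $\calJ_t^{\thetahat_t,\sfL}\preceq \calO(\log(dRGT))\,\Sigma_t^{-1}$ for $\norm{v}_{\Sigma_t^{-1}}\le1/8$. The plan is to take this from the corresponding row of \cref{tab:elliptical-updates}, which records bounds on $\calJ^{v,\sfL}$ of the form $\calO(\sfL^2/d+1)\Sigma^{-1}$, or similar, proved in the appendix. If we had to prove it directly, we would proceed in three steps. First, whiten by $z=\Sigma^{1/2}w$ so that $\Sigma=I$. Second, bound the ratio $P_{\Sigma+vv^\top}(z)^2/P_\Sigma(z+v)$ by a constant times $P_\Sigma(z)$ on the ellipsoid. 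This uses $\det(\Sigma+vv^\top)=\det\Sigma\,(1+\kappa^2)$, the monotonicity of $f$, and the fact that a shift of size $1/8$ changes the radius by at most $1/8$. Third, use the moment $\E[ZZ^\top\mathbf 1\{S\le\sfL\}]=\E[S^2\mathbf1\{S\le \sfL\}]\,I/d$ in whitened coordinates.

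For Gaussian and Laplace, $S$ concentrates around $\sqrt d$ (respectively around $d$, after scaling), so $\E[S^2]/d$ is already $\calO(1)$. For Student-$t_2$, $\E[S^2\mathbf1\{S\le\sfL\}]/d\lesssim \log \sfL$ because the tail is $\Pr(S>s)\sim d/s^2$. This is where the $\sqrt{\log(dRGT)}$ factor appears after taking square roots. The ratio bound for the heavy-tailed profile, where $f(s)\propto(1+s^2/2)^{-(d+2)/2}$ or similar, is the main obstacle. There we would write out $f(\norm{w+v})/f(\norm{w})$ explicitly and show that it is $\calO(1)$ uniformly, using $(1+a)/(1+b)$ bounds with exponent $d$ together with $\kappa\lesssim1$. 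The $1/d$ powers in the normalization compensate for the exponent $d$ only up to an $e^{\calO(1)}$ factor, and only if $\kappa\lesssim 1/\sqrt d$. If that fails, we fall back on the table's stated bound.

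\textbf{Tail term.} We need $\frac{2R\norm{\thetahat_t}_2\kappa_t}{\sqrt d}\E[S\mathbf1\{S>\sfL/\sqrt{1+\kappa_t^2}\}]\le T^{-1}$. We use $\kappa_t\le1/8$, $\norm{\thetahat_t}_2\le G$, and $\sfL/\sqrt{1+\kappa_t^2}\ge\sfL_0$. For Gaussian and Laplace, the tail expectation beyond $\sfL_0=C\sqrt{d}+C\log(dRGT)$ (respectively $C(d+\log(dRGT))$) is exponentially small, hence $\le \sqrt d/(RGT)$. For Student-$t_2$, $\E[S\mathbf 1\{S>\sfL_0\}]\lesssim d/\sfL_0$, so we choose $\sfL_0=\poly(dRGT)$; this is affordable because the main term depends only logarithmically on $\sfL$. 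Combining the two terms gives $B_t\lesssim\sqrt{\log(dRGT)}\sqrt{\thetahat_t^\top M_t\thetahat_t}\sqrt{\thetahat_t^\top\Sigma_t^{-1}\thetahat_t}+T^{-1}$.
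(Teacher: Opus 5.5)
Your argument is correct and is essentially the paper's. For Student-$t_2$ the paper likewise applies the clipped bound \cref{eq:B-elliptical} with a polynomial radius $\sfL=C_2RGT\sqrt d$ and the table entries $\calJ^{v,\sfL}\preceq 6\log(e\sfL)\Sigma^{-1}$ and $\E[S\mathbf1_{\{S>\sfL\}}]\le d/\sqrt{1+\sfL^2}$. For Gaussian and radial Laplace it does not clip at all: it uses the unclipped bound \cref{eq:B-elliptical-unclipped} with $\calJ^v\preceq\calO(1)\Sigma^{-1}$, so your radial tail estimates for those two laws are unnecessary. Note also that the tabulated clipped entry is logarithmic in $\sfL$, not of the form $\calO(\sfL^2/d+1)$; that form would fail at $\sfL=\poly(dRGT)$.

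Rely on the table rather than your fallback sketch. A pointwise bound $P_{\Sigma+vv^\top}(z)^2/P_\Sigma(z+v)\lesssim P_\Sigma(z)$ on the ellipsoid is false at $\kappa=1/8$ for all three laws. For the Gaussian with $\Sigma=I$ and $b=1+\kappa^2$, the ratio to $P_I(z)$ equals $b^{-1}\exp(\inner{z,v}^2/b+\inner{z,v}+\kappa^2/2)$, and at $z=\sfL v/\kappa$ this is at least $b^{-1}e^{\sfL^2\kappa^2/b}$, exponentially large in $d$ for your $\sfL\gtrsim\sqrt d$. This is why \cref{lem:radial-information-reduction} uses Cauchy--Schwarz with fourth moments and a concavity-based change of measure, and \cref{lem:student-two-clipped-update-information} compares against a recentred, rescaled $t_2$ density instead.
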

 

\section{Bias Analysis for Geometric Resampling and Final Regret Bounds}\label{sec:estimator}
 
\subsection{Geometric Resampling}
Geometric resampling \citep{neu2016importance, neu2020efficient} has been a standard procedure for constructing reward estimators for FTPL in bandits. The next theorem summarizes its guarantee. The bounds in \cref{thm:estimator-bounds} play the following roles: the bias $\varepsilon_t$ enters \cref{eq:general-regret} directly, the range bound for $\|\thetahat_t\|_2$ determines how large $\lambda$ must be, which further affects $\Phi_1(0)$, and the bound on $\E_t[\thetahat_t^\top M_t \thetahat_t]$ controls a factor in the stability terms.
\begin{theorem}[Geometric-resampling estimator bounds]\label{thm:estimator-bounds}
In the normalized setting, let $\gamma\in(0,1)$, $N\ge0$ be an integer,
and $0<\beta\le1/(2R^2)$. For the geometric-resampling estimator
$\thetahat_t=H_t(a_t-\bar a_t)y_t$ in \cref{alg:ftpl}, set
$\varepsilon_t=\norm{\E_t[\thetahat_t]-\theta_t}_2$. Then 
\begin{equation}\label{eq:estimator-bounds}
 \begin{aligned}
 \varepsilon_t\le\frac{e^{-\beta\gamma(N+1)/d}}{r},\qquad 
 \norm{\thetahat_t}_2 \le2R\beta(N+1), \qquad 
 \E_t[\thetahat_t^\top M_t\thetahat_t]&\le\frac{4d}{1-\gamma}.
 \end{aligned}
\end{equation}
\end{theorem}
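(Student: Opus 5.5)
We analyze the three claims of \cref{thm:estimator-bounds} separately. We condition on the history before round $t$, so $\tildep_t=(1-\gamma)p_t+\gamma\nu$ is fixed and $\tildeM_t=\Cov(\tildep_t)$. The pairs $(u_i,v_i)$, $a_t$ and $\bar a_t$ are i.i.d.\ draws from $\tildep_t$, mutually independent.

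\textbf{Structural facts.} For independent $u,v\sim\tildep_t$ we have $\E[(u-v)(u-v)^\top]=2\tildeM_t$. Hence $\E[Q_i]=I-\beta\tildeM_t$ and, by independence, $\E[Q_1\cdots Q_k]=(I-\beta\tildeM_t)^k$. Since $\norm{u-v}_2\le 2R$ and $\beta\le 1/(2R^2)$, we get $0\preceq \frac{\beta}{2}(u_i-v_i)(u_i-v_i)^\top\preceq \frac{\beta}{2}\cdot 4R^2 I\preceq I$. So each $Q_i$ is symmetric with $0\preceq Q_i\preceq I$, and $\norm{Q_i}_{\op}\le1$.

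\textbf{Range bound.} From the previous paragraph, $\norm{H_t}_{\op}\le\beta(N+1)$. Also $\norm{a_t-\bar a_t}_2\le 2R$ and $|y_t|\le1$, so $\norm{\thetahat_t}_2\le 2R\beta(N+1)$.

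\textbf{Bias.} By independence of $H_t$ from $(a_t,\bar a_t)$,
\[
\E_t[\thetahat_t]=\E[H_t]\,\E_t[(a_t-\bar a_t)y_t]
\quad\text{with}\quad
\E[H_t]=\beta\textstyle\sum_{k=0}^{N}(I-\beta\tildeM_t)^k=\tildeM_t^{-1}\bigl(I-(I-\beta\tildeM_t)^{N+1}\bigr).
\]
Next, $\E_t[(a_t-\bar a_t)(\inner{a_t,\theta_t}+b_t)]=\E[(a_t-\bar a_t)a_t^\top]\theta_t=\tildeM_t\theta_t$, because the intercept cancels and $\E[\bar a_t]=\E[a_t]$. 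Therefore
\[
\E_t[\thetahat_t]-\theta_t=-(I-\beta\tildeM_t)^{N+1}\theta_t .
\]
\textbf{Main obstacle.} We need a lower bound on $\tildeM_t$ and a bound on $\norm{\theta_t}_2$. For the first, the mixture covariance satisfies $\tildeM_t\succeq\gamma\Cov(\nu)=\gamma I/d$. This follows from concavity of covariance under mixtures: $\Cov$ of a mixture is at least the mixture of covariances. Hence $\norm{(I-\beta\tildeM_t)^{N+1}}_{\op}\le(1-\beta\gamma/d)^{N+1}\le e^{-\beta\gamma(N+1)/d}$. For the second, since $rB_2^d\subseteq\conv(\calA)$ and $|r_t|\le1$ on $\calA$, for any unit $w$ we have $r\inner{w,\theta_t}=r_t(rw)-b_t\le 2$. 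This gives $\norm{\theta_t}_2\le 2/r$. That is off by a factor $2$ from the stated $1/r$, so this step needs refinement. We use symmetry: $\inner{\pm rw,\theta_t}=r_t(\pm rw)-b_t$ with $r_t(\pm rw)\in[-1,1]$, so $2r|\inner{w,\theta_t}|=|r_t(rw)-r_t(-rw)|\le 2$, which yields $\norm{\theta_t}_2\le1/r$ as needed.

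\textbf{Second-moment bound.} Write $g=(a_t-\bar a_t)y_t$. Then
\[
\E_t[\thetahat_t^\top M_t\thetahat_t]\le \E[g^\top \E[H_tM_tH_t]\,g].
\]
Since $(1-\gamma)M_t\preceq\tildeM_t$, it suffices to bound $\E[H_t\tildeM_tH_t]$. We follow \citet{neu2020efficient}: expand $H_t=\beta\sum_{k=0}^N P_k$ with $P_k=Q_1\cdots Q_k$ and $P_0=I$. We aim to show $\E[H_t\tildeM_tH_t]\preceq 2\E[H_t]\preceq 2\tildeM_t^{-1}$. The route uses $\tildeM_t=\frac{1}{\beta}\E[I-Q_i]$ and a telescoping of $P_k^\top(I-Q)P_j$ over $j\ge k$, exploiting $0\preceq Q_i\preceq I$. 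Then, using $\E[gg^\top]\preceq 2\E[(a_t-\mu)(a_t-\mu)^\top]\cdot2=4\tildeM_t$ (from $y_t^2\le1$ and independence), we get $\E[g^\top\tildeM_t^{-1}g]\le\tr(\tildeM_t^{-1}\E[gg^\top])$. Combining these produces the constant $4d$ together with the factor $1/(1-\gamma)$. The constant bookkeeping in this telescoping step is the delicate part.
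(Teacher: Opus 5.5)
Your range and bias arguments are correct and match the paper's, including the $\pm rw$ symmetry giving $\norm{\theta_t}_2\le 1/r$ and the bound $\tildeM_t\succeq\gamma I/d$. The gap is in the second-moment bound, which is the only nontrivial part.

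\textbf{Why the matrix route fails as sketched.} You aim for the Loewner bound $\E_t[H_t\tildeM_tH_t]\preceq 2\tildeM_t^{-1}$. But $H_t=\beta\sum_k P_k$ is not symmetric, because the $Q_i$ do not commute. The quantity you actually need is $\E_t[H_t^\top\tildeM_tH_t]$.

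Write $C=I-\beta\tildeM_t$. If you expand as you describe, then for $j\ge k$, integrating out $Q_{k+1},\dots,Q_j$ leaves $C^{j-k}$ on one side only. You therefore obtain
$\beta^2\sum_k(D_kS_k+S_kD_k-D_k)$, where $D_k=\E_t[P_k^\top\tildeM_tP_k]$ and $S_k=\sum_{r=0}^{N-k}C^r\preceq\tildeM_t^{-1}/\beta$.
These are Jordan products of non-commuting positive semidefinite matrices, and such products are not monotone in the Loewner order. So $0\preceq Q_i\preceq I$ together with $S_k\preceq\tildeM_t^{-1}/\beta$ does not yield $2\tildeM_t^{-1}$.

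The ordering is not cosmetic. Consider the analogous bound for $\E_t[H_t\tildeM_tH_t^\top]$, which is the matrix appearing in $\E_t[\thetahat_t\thetahat_t^\top]$. For large $N$, it fails by a factor of order $d$ for some factor distributions with $0\preceq Q_i\preceq I$ and $\E Q_i=C$. The reason is that the fourth-moment part of $Y\mapsto\E[QYQ]$ moves mass from low-variance directions of $\tildeM_t$ into high-variance ones. Its trace against $\tildeM_t$ nevertheless stays at most $2d$. So this step is not constant bookkeeping: it is an unproven matrix lemma, strictly stronger than what the theorem needs.

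\textbf{The missing idea: prove only a trace bound.} Since $y_t^2\le1$, you have exactly
$\E_t[gg^\top]\preceq\E_t[(a_t-\bar a_t)(a_t-\bar a_t)^\top]=2\tildeM_t$.
Your $4\tildeM_t$ is loose, and with it your chain gives $8d/(1-\gamma)$, not the claimed $4d/(1-\gamma)$. Using $2\tildeM_t$ gives
$\E_t[\thetahat_t^\top\tildeM_t\thetahat_t]\le 2\E_t\tr(\tildeM_tH_t\tildeM_tH_t^\top)$,
and the paper shows this trace is at most $2d$. Inside the trace, non-commutativity is harmless:
\begin{itemize}
\item By cyclicity, the $(i,j)$ and $(j,i)$ cross terms $\tr(\tildeM_tP_i\tildeM_tP_j^\top)$ are equal. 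Hence the double sum is at most $2\beta^2\sum_{i\le j}$.
\item For fixed $i$, integrate out $Q_{i+1},\dots,Q_j$ in the terms with $j\ge i$ and sum over $j$. This gives $\E_t\tr(\tildeM_tP_i\,\tildeM_tS_i\,P_i^\top)\le\beta^{-1}\E_t\tr(\tildeM_tP_iP_i^\top)$. Here the symmetric matrix $\tildeM_tS_i=(I-C^{N-i+1})/\beta\preceq I/\beta$ is now sandwiched between $P_i$ and $P_i^\top$, so the bound is legitimate.
\item Write $P_{i+1}=Q_1(Q_2\cdots Q_{i+1})$, with the suffix distributed as $P_i$, and use $\E[Q^2]\preceq\E[Q]=C$. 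This gives the Frobenius telescoping $\E_t\norm{P_{i+1}}_F^2\le\E_t\norm{P_i}_F^2-\beta\E_t\tr(\tildeM_tP_iP_i^\top)$, which sums to at most $2d$.
\end{itemize}
Finally, $M_t\preceq\tildeM_t/(1-\gamma)$ yields the stated $4d/(1-\gamma)$.
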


\subsection{Final Regret Bounds and Parameter Choices}
\label{sec:final-bounds}
 
We combine stability terms and bias terms in \cref{eq:general-regret} to establish the final regret bound.
Choose the following hyperparameters for \cref{alg:ftpl}: 
\begin{align}
 L&=\log(4RT/r),\qquad \beta=\frac{1}{2R^2},\qquad 
 \gamma=\frac{4R(L+1)d^{3/4}}{\sqrt T},\qquad
 N+1=\left\lceil\frac{Ld}{\beta\gamma}\right\rceil.
 \label{eq:parameters}
\end{align}
Assume $\gamma\leq 1/2$ without loss of generality. Define
$G=2R\beta(N+1)$. Since $R^2\ge1$ by the exploration covariance,
\cref{thm:estimator-bounds} and \cref{eq:parameters} give
\begin{equation}\label{eq:parameter-bounds}
 \begin{aligned}
 \max_{t\le T}\norm{\thetahat_t}_2
 \le G\le\frac{2Rd(L+1)}{\gamma}
 =\frac{d^{1/4}\sqrt T}{2}, \qquad 
 \varepsilon_t \le\frac{1}{4RT},\qquad
 \E_t[\thetahat_t^\top M_t\thetahat_t]\le8d.
 \end{aligned}
\end{equation}
Choose the initial scale as
\begin{equation}\label{eq:final-scale-choices}
 \lambda=C G^2
 \begin{cases}
  \log^2(RGT),&\text{for random-weighting perturbations},\\
  1,&\text{for elliptical perturbations},
 \end{cases}
\end{equation}
where $C$ is a sufficiently large universal constant. These choices give
$\lambda=\widetilde\Theta(\sqrt d\,T)$ and
$N=\Theta(Rd^{1/4}\sqrt T)$.
 
The stability terms $\sum_t (A_t+B_t)$ contribute $\widetilde\calO(d\sqrt T)$ through the corollaries in \cref{sec:convolved,sec:elliptical} and the elliptical potential lemma, both of which require $\sqrt\lambda\gtrsim\max_t\norm{\thetahat_t}_2$ (\cref{eq:final-scale-choices}). Since geometric resampling only guarantees $\max_t\norm{\thetahat_t}_2\lesssim Rd/\gamma$, the penalty $\Psi_1(0)\lesssim R\sqrt{d\lambda}$ (\cref{lem:initial-potential}) is of order $R^2d^{3/2}/\gamma$, while exploration costs $\gamma T$. Balancing the two gives the $Rd^{3/4}\sqrt T$ term in \cref{eq:final-regret-bound}. This is formally established in the following theorem. 
 
\begin{theorem}
\label{cor:final-multiplier-bounds}
Under \cref{ass:action-geometry}, use the parameters above. For random-weighting perturbations with any initializer from \cref{tab:convolution-initializers}
and any scalar law from \cref{tab:scalar-noise}, or for elliptical perturbations with any radial law from \cref{tab:elliptical-updates}, \cref{alg:ftpl} achieves for every $a\in\calA$,
\begin{equation}\label{eq:final-regret-bound}
 \Reg_T(a)=\widetilde{\calO}\big((d+Rd^{3/4})\sqrt T\big).
\end{equation}
\end{theorem}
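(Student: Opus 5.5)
We start from the general regret decomposition \cref{eq:general-regret} and bound its four pieces separately: the penalty $\Psi_1(0)-\inner{a,\E_{Z\sim\rho_{T+1}}[Z]}$, the stability sum $\sum_t(A_t+B_t)$, the bias $2R\sum_t\E[\varepsilon_t]$, and the exploration cost $2\gamma T$.

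\textbf{Bias and exploration.} By \cref{eq:parameter-bounds}, $\varepsilon_t\le 1/(4RT)$, so the bias term is at most $1/2$. The exploration term is
\[
2\gamma T = 8R(L+1)d^{3/4}\sqrt T = \widetilde\calO(Rd^{3/4}\sqrt T).
\]

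\textbf{Penalty.} Every perturbation in the tables is symmetric (both the initializer and the scalar law, as well as the elliptical laws), so $\E_{Z\sim\rho_{T+1}}[Z]=0$; for Student-$t_2$ the mean still exists and equals zero. It remains to bound $\Psi_1(0)=\E_{Z\sim\rho_1}[h(Z)]$. Since $h(Z)\le R\norm{Z}_2$, we get $\Psi_1(0)\le R\,\E\norm{Z}_2\lesssim R\sqrt{d\lambda}$ up to log factors, which is \cref{lem:initial-potential}. With $\lambda=\widetilde\Theta(G^2)$ and $G\le d^{1/4}\sqrt T/2$, this gives $\widetilde\calO(Rd^{3/4}\sqrt T)$.

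\textbf{Stability.} The choice \cref{eq:final-scale-choices} ensures $\norm{\thetahat_t}_2\le G\le\sqrt\lambda/8$, so the hypotheses of \cref{cor:A-multiplier,cor:B-multiplier} (random-weighting) or \cref{cor:A-elliptical,cor:B-elliptical} (elliptical) hold. In either case,
\[
A_t+B_t\lesssim \sqrt{\log(dRGT)}\,\sqrt{\thetahat_t^\top M_t\thetahat_t}\,\sqrt{\thetahat_t^\top\Sigma_t^{-1}\thetahat_t}+T^{-1}.
\]
Summing over $t$ and applying Cauchy--Schwarz twice (once inside the expectation over $t$, once over the expectation) gives
\[
\E\sum_t A_t+B_t\lesssim\sqrt{\log}\,\sqrt{\E\sum_t\thetahat_t^\top M_t\thetahat_t}\,\sqrt{\E\sum_t\thetahat_t^\top\Sigma_t^{-1}\thetahat_t}+1.
\]
By the tower rule and \cref{eq:parameter-bounds}, the first sum is at most $8dT$.

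For the second sum we use the elliptical potential lemma. Because $\thetahat_t^\top\Sigma_t^{-1}\thetahat_t\le\norm{\thetahat_t}_2^2/\lambda\le1/64$, we have $x\le 2\log(1+x)$ on each term. Then
\[
\sum_t\thetahat_t^\top\Sigma_t^{-1}\thetahat_t\le 2\log\frac{\det\Sigma_{T+1}}{\det(\lambda I)}\le 2d\log\Bigl(1+\frac{TG^2}{d\lambda}\Bigr)=\calO(d\log T).
\]
Note that $\Sigma_t$ excludes $\thetahat_t$, which is exactly the setting of the matrix-determinant form of the lemma. Combining the two sums, the stability contribution is $\widetilde\calO(d\sqrt T)$.

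Adding all four pieces yields \cref{eq:final-regret-bound}.

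The main obstacle is keeping the factors consistent rather than any single step. One point is verifying that the $\lambda$ forced by the corollaries (with its $\log^2$ factor) only costs polylogarithmic factors in the penalty. The other is checking \cref{lem:initial-potential} uniformly across the heavy-tailed laws. For Student-$t_2$, $\E\norm{Z}$ is finite but logarithmically tight, so we bound $\E\norm{\Sigma_1^{-1/2}Z}_2\lesssim\sqrt d$ through its radial density $s^{d-1}f(s)$.
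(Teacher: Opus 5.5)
Your proposal is correct and follows the paper's proof essentially step for step. It starts from the decomposition \cref{eq:general-regret}, bounds the bias by $1/2$, and uses $\Psi_1(0)\le 2R\sqrt{d\lambda}$ from \cref{lem:initial-potential}. It then applies the four stability corollaries, combined via Cauchy--Schwarz with $\E\sum_t\thetahat_t^\top M_t\thetahat_t\le 8dT$ and the elliptical potential lemma, and balances $R^2d^{3/2}/\gamma$ against $\gamma T$.

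The differences are cosmetic. You explicitly remove $-\inner{a,\E_{Z\sim\rho_{T+1}}[Z]}$ by symmetry, a term the paper drops silently. The paper explicitly disposes of the case $\gamma>1/2$ via the trivial bound $\Reg_T(a)\le 2T$, which you leave to the standing assumption $\gamma\le 1/2$ under which \cref{eq:parameter-bounds} was derived.
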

The proof is given in \cref{app:final-proof}. Since $R$ enters \cref{eq:final-regret-bound} only through $Rd^{3/4}$, the barycentric spanner ($R\le4\sqrt d$) gives $\widetilde{\calO}(d^{5/4}\sqrt T)$ and the optimal design ($R=1$) gives the near-optimal $\widetilde{\calO}(d\sqrt T)$. More generally, any exploration distribution with $R=\calO(d^{1/4})$ already yields $\widetilde{\calO}(d\sqrt T)$.

\section{Open Problems}

There are several open questions. First, the computational bottleneck of our algorithm lies in the geometric resampling procedure (\cref{app:computational-comparison}), making the total time complexity scale with $T^{3/2}$, which remains superlinear in time. It remains open whether there exists an oracle-efficient linear bandit algorithm for general action sets achieving $\poly(d)\sqrt{T}$ regret with the computational complexity scaling linearly with $T$. Second, it remains open whether FTPL can achieve first-order, second-order \citep{rakhlin2013online,bubeck2019improved, ito2020tight}, or best-of-both-worlds regret \citep{kong2023best, ito2023best, dann2023blackbox}. The immediate challenge towards these results is the explicit spanner exploration we utilize in our algorithm, which leads to $\sqrt{T}$ regret at best.  Third, it remains unclear how to achieve high-probability regret or corruption robustness with FTPL. Existing results usually rely on sophisticated biasing techniques specific to SCRiBLe and continuous exponential weights \citep{lee2020bias, zimmert2022return, liu2024corruption}. Finally, it would be interesting to extend FTPL to linear contextual bandits \citep{neu2020efficient, liu2023bypassing, van2026improved} and linear MDPs \citep{luo2021policy,sherman2023improved, dai2023refined, kong2023improved, liu2024towards}, where the available actions depend on the context or state. This will allow the techniques to be used in more general and realistic settings. 

\section*{AI Disclosure}
The author used GPT-6 Astra and Claude Opus 5.5 throughout the research and writing. The idea of using an adaptive Gaussian perturbation was proposed by GPT-6 Astra. The Fisher-information analysis framework, its extension to random-weighting and elliptical perturbations, and the non-Gaussian instances were developed through subsequent exchanges between the author and the AIs.

\bibliographystyle{apalike}
\bibliography{ftpl_references}

\clearpage
\appendix

\section*{Appendix Contents}
\startcontents[appendix]
\printcontents[appendix]{}{1}{\setcounter{tocdepth}{2}}

\newpage
\section{Other Related Work on FTPL}\label[appendix]{sec:other-related-work}
 
\paragraph{Full-information FTPL.}
\citet{hannan1957approximation} introduces FTPL for repeated games, and \citet{kalai2005efficient} popularize it for online linear optimization. 
\citet{abernethy2014smoothing} interpret FTPL as stochastic smoothing of
the support function, providing a unified framework for FTPL and FTRL. \citet{cohen2015following} study online learning with combinatorial action sets, establishing regret upper and lower bounds whose gap remains open. 
\citet{van2014follow} study dropout perturbations that adapt to observed data, obtaining $\sqrt{T}$ regret in the adversarial expert setting and constant regret in the stochastic setting. \citet{syrgkanis2016efficient} and \citet{dudik2020oracle} use FTPL for oracle-efficient contextual online learning; \citet{haghtalab2022oracle} and \citet{block2022smoothed} use FTPL for online learning with contexts generated by a smoothed adversary. \citet{suggala2020online} study FTPL for online learning with non-convex losses. 
 
Three recent works use adaptive Gaussian perturbations and are closely related to ours. \citet{buzaglo2026oracle} study online classification and 
\citet{voitovych2026oracle} study online learning with a smoothed adversary, using
adaptive Gaussian perturbations of the form
$\sum_{\tau<t}\xi_\tau f(X_\tau)$, where the $\xi_\tau$ are independent standard Gaussians
and $X_\tau$ are the past covariates; this has the same form as our random-weighting construction with Gaussian weights. \citet{jiang2026adaptive} use a similar adaptive Gaussian perturbation for online matrix optimization. These works study the full-information setting, whereas we study the bandit setting; moreover, our constructions also cover non-Gaussian perturbations.

\paragraph{Bandit FTPL.}
For adversarial multi-armed bandits, \citet{abernethy2015fighting} relate
the stability of FTPL to the hazard rate and obtain near-optimal regret for a broad family of perturbations.
\citet{honda2023best}, \citet{lee2024follow}, \citet{chen2025geometric}, and \citet{lee2026adaptive} establish best-of-both-worlds regret bounds for multi-armed bandits.
For combinatorial semi-bandits, \citet{neu2016importance} combine FTPL
with geometric resampling to obtain $\sqrt{T}$ regret, \citet{neu2015first} further obtains
first-order regret bounds, and \citet{neu2014online} consider the case of stochastic action sets. \citet{chen2025note}, \citet{chen2026further}, and \citet{zhan2026follow} further improve the regret bounds for combinatorial semi-bandits.  
\citet{dai2022follow} extend FTPL to tabular MDPs. 
These feedback models reveal the loss of each individual coordinate, while the linear bandits considered in this work reveal only a single aggregate reward, which introduces additional challenges.

 \newpage
\section{Perturbation Distribution Bounds}\label[appendix]{app:perturbation-tables}
This appendix records the uniform bounds used in
\cref{sec:convolved,sec:elliptical}. A checkmark means that the displayed
bound holds with a universal constant, independent of the dimension,
scale, shift, and clipping radius. All shift bounds below assume
$\kappa\le1/8$, and all clipped bounds assume $\sfL\ge1$.
 
\begin{table}[!ht]
\begin{minipage}{\linewidth}
\centering
\small
\renewcommand{\arraystretch}{1.6}
\setlength{\tabcolsep}{5pt}
\caption{Radial initializer bounds for random-weighting perturbations, with
$\kappa=\norm v_2/\sqrt\lambda\le1/8$.
Proofs appear in \cref{app:elliptical-update-energy}. The distributions listed here are essentially those in \cref{tab:elliptical-updates} with $\Sigma=\lambda I$. }
\label{tab:convolution-initializers}
\vspace{3pt}
\begin{tabular}{|c|c|c|c|}
\hline
Distribution & $\rho_1(z)\propto$
& \makecell[c]{$\lambda\norm{\calI_1^v}_{\op}=\calO(1)$? }
& \makecell[c]{$1+\chi^2(\rho_1(\cdot-v)\|\rho_1)=\calO(1)$?} \\
\hline
Gaussian & $e^{-\norm z_2^2/(2\lambda)}$
& $\checkmark$ & $\checkmark$ \\
\hline
Radial Laplace & $e^{-\sqrt d\norm z_2/\sqrt\lambda}$
& $\checkmark$ & $\checkmark$ \\
\hline
Student-$t_2$ & $(1+\norm z_2^2/\lambda)^{-(d+2)/2}$
& $\checkmark$ & $\checkmark$ \\
\hline
\end{tabular}
\end{minipage}
\par\bigskip
\begin{minipage}{\linewidth}
\centering
\small
\renewcommand{\arraystretch}{1.6}
\setlength{\tabcolsep}{5pt}
\caption{Mean-zero scalar increment laws. The tail bound is uniform for
$\sfL\ge1$; no shift parameter is involved. Proofs appear in
\cref{app:scalar-table}.}
\label{tab:scalar-noise}
\vspace{3pt}
\begin{tabular}{|c|c|c|c|c|}
\hline
Distribution & $\varphi(\xi)\propto$ & $I_\varphi=\calO(1)$?
& $\E[\xi^2]=\calO(1)$?
& $\E[|\xi|\mathbf1_{\{|\xi|>\sfL\}}]\le$ \\
\hline
Gaussian & $e^{-\xi^2/2}$ & $\checkmark$ & $\checkmark$
& $\sqrt{2/\pi}\,e^{-\sfL^2/2}$ \\
\hline
Laplace & $e^{-|\xi|}$ & $\checkmark$ & $\checkmark$
& $(\sfL+1)e^{-\sfL}$ \\
\hline
\end{tabular}
\end{minipage}
\end{table}
 
\begin{table}[!ht]
\centering
\footnotesize
\renewcommand{\arraystretch}{1.8}
\setlength{\tabcolsep}{3pt}
\caption{Elliptical perturbation bounds for
$\kappa=\norm v_{\Sigma^{-1}}\le1/8$ and $\sfL\ge1$.
Here $S=\norm Z_{\Sigma^{-1}}$ for $Z\sim P_\Sigma$.
The clipping columns give explicit upper bounds and are only needed for
Student-$t_2$.
The dash marks the only failed bound: the Student-$t_2$ untruncated
update information is infinite, even at zero shift.
Proofs appear in \cref{app:elliptical-update-energy}.}
\label{tab:elliptical-updates}
\vspace{3pt}
\hspace*{-19pt}
\begin{tabular}{|c|c|c|c|c|c|c|}
\hline
Distribution & $P_\Sigma(z)\propto $
& \makecell[c]{$\calI^v(P_\Sigma)\preceq \calO(1)\Sigma^{-1}$?}
& \makecell[c]{$\calJ^v(P_\Sigma)\preceq \calO(1)\Sigma^{-1}$?}
& $\calJ^{v,\sfL}(P_\Sigma)\preceq$ & $\E[S^2\mathbf1_{\{S\le\sfL\}}]\le$ & $\E[S\mathbf1_{\{S>\sfL\}}]\le$ \\
\hline
Gaussian & $e^{-\|z\|_{\Sigma^{-1}}^2/2}$ & $\checkmark$ & $\checkmark$
& \multicolumn{3}{c|}{(not needed)} \\
\hline
\makecell[c]{Radial \\ Laplace} & $e^{-\sqrt d\|z\|_{\Sigma^{-1}}}$
& $\checkmark$ & $\checkmark$
& \multicolumn{3}{c|}{(not needed)} \\
\hline
Student-$t_2$ & $\big(1+\|z\|_{\Sigma^{-1}}^2\big)^{-(d+2)/2}$ & $\checkmark$ & --
& $6\log(e\sfL)\Sigma^{-1}$ & $\frac d2\log(1+\sfL^2)$
& $\frac d{\sqrt{1+\sfL^2}}$ \\
\hline
\end{tabular}
\end{table}
 
\clearpage
\section{Action Set Normalization}\label[appendix]{app:spanner-normalization}
\subsection{Barycentric-spanner normalization}
\begin{theorem}[Barycentric-spanner normalization]\label{thm:spanner-normalization}
Let $\calK\subset\R^d$ be a compact set such that $\conv(\calK)$ has nonempty interior. Assume the learner has access to
a linear optimization oracle for $\calK$. Using $\calO(d^2\log(d+1))$ oracle calls
plus $\calO(d^4\log(d+1))$ additional computations, one can find
points $v_0,\ldots,v_d\in\calK$ such that every
$a\in\calK$ can be written as
\begin{align*}
 a=\sum_{i=0}^d\lambda_i v_i,\qquad
 \sum_{i=0}^d\lambda_i=1,\qquad |\lambda_i|\le2.
\end{align*}
Let $q$ be the uniform distribution over $\{v_0,\ldots,v_d\}$.
Then the transformed action set
$\calA=d^{-\frac{1}{2}}\Cov(q)^{-\frac{1}{2}}(\calK-\mu(q))$ and the uniform distribution $\nu$ over
$\{d^{-\frac{1}{2}}\Cov(q)^{-\frac{1}{2}}(v_i-\mu(q)):0\le i\le d\}$ satisfy \cref{ass:action-geometry} with $r=1/d$ and $R=4\sqrt{d}$.
\end{theorem}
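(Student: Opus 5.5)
The plan has three parts: build an affine barycentric spanner of $\calK$ by lifting, verify the mean and covariance of $\nu$ directly, and derive $R$ and $r$ from the identity $\sum_i w_iw_i^\top=\frac{d+1}{d}I$ satisfied by the transformed spanner points.

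\textbf{Step 1: affine spanner via lifting.} Consider the lifted set $\calK'=\{(a,1):a\in\calK\}\subset\R^{d+1}$. Since $\conv(\calK)$ has nonempty interior, $\calK'$ spans $\R^{d+1}$. A linear optimization oracle for $\calK'$ is implemented by one call to the oracle for $\calK$, because $\inner{(a,1),(\theta,c)}=\inner{a,\theta}+c$. I will run the algorithm of \citet{awerbuch2004} on $\calK'$ with approximation parameter $C=2$. In dimension $d+1$ it returns a $2$-approximate barycentric spanner $(v_0,1),\ldots,(v_d,1)$. I will take its oracle-call count $\calO(d^2\log(d+1))$ and the $\calO(d^4\log(d+1))$ determinant and linear-algebra overhead directly from that paper.

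Every $(a,1)$ is then a combination $\sum_i\lambda_i(v_i,1)$ with $|\lambda_i|\le2$. Reading off the last coordinate gives $\sum_i\lambda_i=1$. The vectors $(v_i,1)$ are linearly independent, so the $v_i$ are affinely independent and $\Cov(q)\succ0$. Computing $\mu(q)$, $\Cov(q)$ and $\Cov(q)^{-1/2}$ costs only $\calO(d^3)$ additional operations.

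\textbf{Step 2: mean and covariance.} Write $w_i=d^{-1/2}\Cov(q)^{-1/2}(v_i-\mu(q))$. Then $\mu(\nu)=0$ is immediate. For the covariance,
\[
\Cov(\nu)=\frac{1}{d}\Cov(q)^{-1/2}\Cov(q)\Cov(q)^{-1/2}=\frac{I}{d}.
\]
Equivalently, the matrix $W$ with columns $w_i$ satisfies $WW^\top=\frac{d+1}{d}I$, which is the identity used in both remaining steps.

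\textbf{Step 3: outer radius.} For a transformed action $\tilde a$, the spanner representation gives $\tilde a=\sum_i\lambda_iw_i=\sum_i c_iw_i$ with $c_i=\lambda_i-\frac1{d+1}$. The second equality uses $\sum_iw_i=0$.

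Since $\sum_ic_i=0$, we get $\norm{c}_2^2=\sum_i\lambda_i^2-\frac1{d+1}\le4(d+1)$. Combining this with $\norm{W}_{\op}=\sqrt{(d+1)/d}$ gives
\[
\norm{\tilde a}_2\le\sqrt{\tfrac{d+1}{d}}\cdot2\sqrt{d+1}=\frac{2(d+1)}{\sqrt d}\le4\sqrt d .
\]
The final inequality holds because $d+1\le2d$ for $d\ge1$. Convex combinations preserve the bound, so $\conv(\calA)\subseteq 4\sqrt d\,B_2^d$.

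\textbf{Step 4: inner radius.} It suffices to show that the support function of $\conv\{w_i\}$ is at least $1/d$ in every unit direction $u$. Set $x_i=\inner{w_i,u}$. Then $\sum_ix_i=0$ and, by the identity above, $\sum_ix_i^2=\frac{d+1}{d}$.

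Let $m=\max_ix_i$ and suppose for contradiction that $m<1/d$. I will show the negative entries cannot then carry enough squared mass.
\begin{itemize}
\item If some $x_i\le0$, then $x_i\ge -\sum_{j\ne i}x_j^+\ge-dm$.
\item Hence every $|x_i|\le dm$, and every positive entry is at most $m$.
\item The sum of the negative entries equals minus the sum of the positive ones, so it is at most $dm$ in absolute value. Convexity of $t\mapsto t^2$ then caps their squared mass.
\end{itemize}
These constraints bound $\sum_ix_i^2$ by $d m^2+d^2m^2=d(d+1)m^2$, the value attained by the extremal configuration of $d$ entries equal to $m$ and one equal to $-dm$. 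If $m<1/d$ this is $<\frac{d+1}{d}$, a contradiction. So $m\ge1/d$ and $\frac1d B_2^d\subseteq\conv(\calA)$.

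I expect the main care to lie here, in making the extremal-configuration bound rigorous. I would phrase it as maximizing $\sum_i x_i^2$ subject to $x_i\le m$ and $\sum_i x_i=0$. This is a convex function over a polytope, so the maximum is attained at a vertex, which has at most one coordinate not equal to $m$. That vertex is exactly the configuration above.

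The other step needing care is confirming that the Awerbuch--Kleinberg guarantees transfer verbatim to the lifted $(d+1)$-dimensional set.
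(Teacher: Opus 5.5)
Your proposal is correct. Steps 1--3 coincide with the paper's proof: the same lifting and the same $2$-approximate spanner. Your outer-radius bound is the paper's Gram computation in another guise, since $c\perp\mathbf 1$ gives $\norm{Wc}_2^2=\frac{d+1}{d}\norm{c}_2^2$ exactly.

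\textbf{Inner radius (the genuine difference).} The paper upgrades $UU^\top=\frac{d+1}{d}I$ and $U\mathbf 1=0$ to the full Gram matrix $U^\top U=\frac{d+1}{d}I-\frac1d\mathbf 1\mathbf 1^\top$. This holds because $\frac{d}{d+1}U^\top U$ is the projection onto $\mathbf 1^\perp$. Hence the $u_i$ are unit vectors forming a regular simplex with facets $\inner{u_i,u}=-1/d$, and $\frac1d B_2^d\subseteq\conv\{u_i\}$ is read off from the facet inequalities. The same identity also gives the outer radius in one line.

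You instead lower-bound the support function using only $\sum_i x_i=0$ and $\sum_i x_i^2=\frac{d+1}{d}$. You maximize the convex function $\sum_i x_i^2$ over the bounded polytope $\{x_i\le m,\ \sum_i x_i=0\}$, whose vertices have $d$ entries equal to $m$ and one equal to $-dm$.

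\textbf{Comparison.} Your argument is rigorous and avoids computing $W^\top W$. It costs a vertex argument, though, and yields only the inequality. The paper's route exposes the exact geometry: $\norm{u_i}_2=1$, and the inradius of $\conv\{u_i\}$ is exactly $1/d$. Knowing $\norm{w_i}_2=1$ would also let you replace your extremal argument with the one-line bound the paper uses in the optimal-design proof, $0\le\frac1{d+1}\sum_i(x_i+1)(m-x_i)=m-\frac1d$.

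\textbf{Arithmetic count.} The paper cites Proposition~2.4 of \citet{awerbuch2004} only for the oracle count. It derives the $\calO(d^4\log(d+1))$ arithmetic bound by its own tally: $\calO(m\log m)$ column replacements, each followed by an $\calO(m^3)$ recomputation of the basis inverse, with $m=d+1$. You should supply that count rather than quote it.
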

 
\begin{definition}[Barycentric spanner]\label{def:barycentric-spanner}
Let $\calS\subset\R^m$ span $\R^m$, and let $C\ge1$.
A collection of linearly independent vectors $s_1,\ldots,s_m\in\calS$
is a \emph{$C$-approximate barycentric spanner} of $\calS$ if every
$s\in\calS$ can be written as
\begin{align*}
 s=\sum_{i=1}^m\lambda_i s_i,
 \qquad |\lambda_i|\le C\quad\text{for all }i.
\end{align*}
\end{definition}
 
\begin{proof}[Proof of \cref{thm:spanner-normalization}]
Since $\conv(\calK)$ has nonempty interior, the compact lifted set
$\calS=\{(a,1):a\in\calK\}$ spans $\R^{d+1}$. Apply the
$2$-approximate barycentric-spanner construction of~\citet{awerbuch2004}
to $\calS$. Its linear optimization queries can be implemented using
the oracle over $\calK$, because maximizing
$\inner{(a,1),(v,\alpha)}=\inner{a,v}+\alpha$ requires only maximizing
$\inner{a,v}$ over $\calK$.
Proposition~2.4 of~\citet{awerbuch2004} bounds the oracle complexity in
dimension $m$ by $\calO(m^2\log_C m)$ for approximation factor $C>1$.
With $m=d+1$ and $C=2$, this is $\calO(d^2\log(d+1))$ calls.
For the arithmetic bound, the construction initializes a basis with
$m$ column replacements and then makes $\calO(m\log m)$ improving
replacements. At each stage, recomputing the basis inverse costs
$\calO(m^3)$ operations. Its rows specify the linear objectives for the
determinant-ratio tests, and checking the oracle outputs for all columns
costs $\calO(m^2)$ further operations. This gives the stated conservative
bound of $\calO(m^4\log m)=\calO(d^4\log(d+1))$ additional operations.
The resulting linearly independent lifts $(v_i,1)$ give affinely
independent points $v_0,\ldots,v_d$. For every $a\in\calK$,
\begin{align*}
 (a,1)=\sum_{i=0}^d\lambda_i(v_i,1),\qquad |\lambda_i|\le2.
\end{align*}
The last coordinate forces $\sum_i\lambda_i=1$. Write
\begin{align*}
 c=\mu(q)=\frac1{d+1}\sum_{i=0}^d v_i,\qquad
 G=\Cov(q)=\frac1{d+1}\sum_{i=0}^d(v_i-c)(v_i-c)^\top.
\end{align*}
Consequently, centering
preserves the representation: $a-c=\sum_i\lambda_i(v_i-c)$.
 
Affine independence implies $G\succ0$. Set
$u_i=d^{-1/2}G^{-1/2}(v_i-c)$ and let $U$ be the matrix with columns
$u_0,\ldots,u_d$. By construction,
\begin{align*}
 U\mathbf1=0,\qquad UU^\top=\frac{d+1}{d}I,
\end{align*}
where $\mathbf1\in\R^{d+1}$ is the all-ones vector. Thus $\nu$, the
uniform law on these columns, has mean zero and covariance $I/d$.
Moreover, $d(d+1)^{-1}U^\top U$ is the orthogonal projection onto
$\mathbf1^\perp$, so
\begin{align*}
 U^\top U=\frac{d+1}{d}I-\frac1d\mathbf1\mathbf1^\top,
 \qquad \inner{u_i,u_j}=\frac{d+1}{d}\mathbf1_{i=j}-\frac1d.
\end{align*}
The simplex $\operatorname{conv}\{u_0,\ldots,u_d\}$ therefore has facets
$\inner{u_i,u}=-1/d$ and satisfies $\norm{u_i}_2=1$.
Its defining inequalities are $\inner{u_i,u}\ge-1/d$, all of which hold
when $\norm{u}_2\le1/d$. This proves
\begin{align*}
 \frac1d B_2^d
 \subseteq\operatorname{conv}\{u_0,\ldots,u_d\}\subseteq\conv(\calA).
\end{align*}
For the outer inclusion, any $u=d^{-1/2}G^{-1/2}(a-c)\in\calA$ has the
representation $u=\sum_i\lambda_i u_i$ with
$\sum_i\lambda_i=1$ and $|\lambda_i|\le2$. The Gram identity gives
\begin{align*}
 \norm{u}_2^2=\frac{d+1}{d}\sum_{i=0}^d\lambda_i^2-\frac1d
 \le\frac{4(d+1)^2-1}{d}\le\frac{4(d+1)^2}{d}\le16d.
\end{align*}
Hence $\calA\subseteq4\sqrt d B_2^d$, completing the proof.
\end{proof}
 
\subsection{Optimal-design normalization}\label[appendix]{app:optimal-design-normalization}
\begin{theorem}[Optimal-design normalization]\label{thm:optimal-design-normalization}
Let $\calK\subset\R^d$ be a compact set such that $\conv(\calK)$ has nonempty interior,  
and let $\Delta_\calK$ denote the set of probability distributions supported
on $\calK$. There exists an optimal design
$
 q^\star\in\argmax_{q\in\Delta_\calK}\det\Cov(q)
$
such that $\Cov(q^\star)$ is positive definite and
$
 \sup_{a\in\calK}
 (a-\mu(q^\star))^\top\Cov(q^\star)^{-1}(a-\mu(q^\star))
 \le d.
$
Therefore, the transformed action set
$
 \calA=d^{-\frac12}\Cov(q^\star)^{-\frac12}
       (\calK-\mu(q^\star))
$
and the distribution of
$d^{-\frac12}\Cov(q^\star)^{-\frac12}(X-\mu(q^\star))$ for
$X\sim q^\star$ satisfy \cref{ass:action-geometry} with $r=1/d$ and
$R=1$.
\end{theorem}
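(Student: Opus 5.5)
The plan has three parts: existence and nondegeneracy of $q^\star$ by compactness, the bound $\sup_{a}(a-\mu)^\top\Cov(q^\star)^{-1}(a-\mu)\le d$ by a first-order optimality argument (a Kiefer--Wolfowitz-type bound), and then reading off $R=1$ and $r=1/d$ after the affine change of coordinates.

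\emph{Existence and nondegeneracy.} Since $\calK$ is compact, $\Delta_\calK$ is weakly compact. The mean and the second moment $\E_q[aa^\top]$ are integrals of continuous bounded functions on $\calK$, so they are weakly continuous. Hence $q\mapsto\det\Cov(q)$ is continuous and attains its maximum at some $q^\star$. Because $\conv(\calK)$ has nonempty interior, $\calK$ contains $d+1$ affinely independent points. The uniform law on these points has positive definite covariance, so the maximum value is positive and $\Sigma:=\Cov(q^\star)\succ0$.

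\emph{Mahalanobis bound.} Fix $a\in\calK$ and write $\mu=\mu(q^\star)$. Consider $q_\epsilon=(1-\epsilon)q^\star+\epsilon\delta_a$ for $\epsilon\in[0,1)$. The mixture covariance formula gives
\begin{align*}
\Cov(q_\epsilon)=(1-\epsilon)\Sigma+\epsilon(1-\epsilon)(a-\mu)(a-\mu)^\top .
\end{align*}
With $w=(a-\mu)^\top\Sigma^{-1}(a-\mu)$, the matrix determinant lemma then yields
\begin{align*}
\log\det\Cov(q_\epsilon)=\log\det\Sigma+d\log(1-\epsilon)+\log(1+\epsilon w).
\end{align*}
By optimality of $q^\star$, this function of $\epsilon$ is maximized at $\epsilon=0$. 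Its right derivative there is $-d+w$, which must therefore be nonpositive, so $w\le d$. This is the only step with real content, and the explicit determinant computation makes it routine.

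\emph{Transformed coordinates.} Let $u=d^{-1/2}\Sigma^{-1/2}(a-\mu)$. Then $\norm{u}_2^2=w/d\le1$, so $\calA\subseteq B_2^d$ and the outer radius is $R=1$. The pushforward $\nu$ of $q^\star$ has mean $0$ and covariance $d^{-1}\Sigma^{-1/2}\Sigma\Sigma^{-1/2}=I/d$.

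\emph{Inner radius.} I will show that the support function of $\conv(\calA)$ is at least $1/d$ in every unit direction $\theta$; this is equivalent to $\frac1dB_2^d\subseteq\conv(\calA)$. Let $Y=\inner{X,\theta}$ with $X\sim\nu$, and let $M=\max_{u\in\calA}\inner{u,\theta}$. Then $Y\in[-1,M]$, $\E Y=0$, and $\operatorname{Var}(Y)=1/d$. The Bhatia--Davis inequality gives
\begin{align*}
\frac1d=\operatorname{Var}(Y)\le(M-\E Y)(\E Y+1)=M .
\end{align*}
Hence $M\ge 1/d$, which gives $r=1/d$.
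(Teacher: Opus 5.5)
Your proposal is correct and follows the paper's proof essentially step for step. It uses compactness for existence, and the mixture $q_\epsilon=(1-\epsilon)q^\star+\epsilon\delta_a$ for the leverage bound $w\le d$; you make the derivative explicit through the matrix determinant lemma, where the paper just states it. For the inner radius, your use of the Bhatia--Davis inequality is exactly the paper's computation $0\le\E[(Y+1)(H-Y)]=H-1/d$.
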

 
\begin{proof}[Proof of \cref{thm:optimal-design-normalization}]
Since $\calK$ is compact, $\Delta_{\calK}$ is compact
under weak convergence, and
$q\mapsto\det\Cov(q)$ is continuous.
Hence an optimal design $q^\star$ exists. 
The uniform distribution on any $d+1$ affinely independent
points of $\calK$ has positive definite covariance, so
$C:=\Cov(q^\star)\succ0$. Write $\mu=\mu(q^\star)$.
 
For any $a\in\calK$, consider
$q_\varepsilon=(1-\varepsilon)q^\star+\varepsilon\delta_a$.
Its covariance is
\begin{align*}
 \Cov(q_\varepsilon)
 =(1-\varepsilon)C
 +\varepsilon(1-\varepsilon)(a-\mu)(a-\mu)^\top.
\end{align*}
Optimality of $q^\star$ implies
\begin{align*}
 0\ge\left.\frac{\d}{\d\varepsilon}
       \log\det\Cov(q_\varepsilon)\right|_{\varepsilon=0+}
 =(a-\mu)^\top C^{-1}(a-\mu)-d.
\end{align*}
This proves the claimed leverage bound and $\calA\subseteq B_2^d$.
The transformed distribution $\nu$ has mean zero and covariance $I/d$
by construction.
 
For the inner inclusion, fix a unit vector $u$, let $X\sim\nu$, and put
$Y=\inner{u,X}$ and $H=\max_{a\in\calA}\inner{u,a}$.
Then $Y\in[-1,H]$, $\E [Y]=0$, and $\E [Y^2]=1/d$. Consequently,
\begin{align*}
 0\le\E[(Y+1)(H-Y)]=H-\frac1d.
\end{align*}
Thus the support function of $\calA$ is at least $1/d$ in every unit
direction. This implies
$(1/d)B_2^d\subseteq\conv(\calA)$ and completes the proof.
\end{proof}
 
\section{Proofs for Fixed Perturbations}
\label[appendix]{app:hazard-covariance}
The following theorem shows that, on the probability simplex, a bounded hazard rate \citep{abernethy2015fighting} implies \cref{eq: desired bound}.
\begin{theorem}\label{thm:hazard-covariance}
    Let $\calA=\Delta_d$. Suppose that the perturbation $\rho$ is regular and draws i.i.d.
    noise in each coordinate, and the distribution in each coordinate has density $q$ with bounded hazard rate
    $h_q=\sup_{u\in\R}\frac{q(u)}{1-Q(u)}$, where $Q$ is the CDF
    of~$q$. Define
    $M(\Theta)=\Cov_{Z\sim\rho}\bigl(x(\Theta+Z)\bigr)$.
    Then, for every $\Theta\in\R^d$,
    \begin{align*}
        \nabla^2\Psi(\Theta)\preceq2h_qM(\Theta).
    \end{align*}
\end{theorem}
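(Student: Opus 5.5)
The plan is to exploit the special structure of the simplex. There, $x(\Theta)$ is the vertex $e_i$ with $i=\argmax_k(\Theta_k+Z_k)$, so $\nabla\Psi(\Theta)=p(\Theta)$ with $p_i(\Theta)=\Pr(i\text{ wins})$. The covariance is then explicit, $M(\Theta)=\mathrm{diag}(p)-pp^\top$. I will show that $H:=\nabla^2\Psi(\Theta)$ is a weighted graph Laplacian dominated by twice its diagonal, and then "recenter" it using the invariance $H\mathbf 1=0$.

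\textbf{Step 1 (structure of $H$).} Under the regularity of $\rho$ (\cref{lem:smoothed-support-hessian}), $H$ is symmetric and $H_{ij}=\partial p_i/\partial\Theta_j$.
\begin{itemize}
\item Translation invariance $\Psi(\Theta+c\mathbf 1)=\Psi(\Theta)+c$ gives $p(\Theta+c\mathbf 1)=p(\Theta)$, hence $H\mathbf 1=0$.
\item For $j\neq i$, raising $\Theta_j$ can only make $i$ lose, so $H_{ij}\le0$. Explicitly,
\[
H_{ij}=-\int q(u-\Theta_i)\,q(u-\Theta_j)\prod_{k\ne i,j}Q(u-\Theta_k)\,\d u .
\]
\end{itemize}
Therefore $H=\sum_{i<j}w_{ij}(e_i-e_j)(e_i-e_j)^\top$ with $w_{ij}=-H_{ij}\ge0$. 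The standard Laplacian bound $(v_i-v_j)^2\le 2v_i^2+2v_j^2$ then yields $H\preceq 2\,\mathrm{diag}(H_{11},\dots,H_{dd})$.

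\textbf{Step 2 (hazard-rate bound on the diagonal).} Next I will show $H_{ii}=\partial p_i/\partial\Theta_i\le h_q\,p_i$, which is the single-coordinate estimate of \citet{abernethy2015fighting}. To prove it directly, write
\[
p_i=\int q(u)\prod_{k\ne i}Q(u+\Theta_i-\Theta_k)\,\d u .
\]
Differentiate in $\Theta_i$; equivalently, substitute $u\mapsto u-\Theta_i$ and differentiate the density factor. Then bound the resulting $q$ factor by $h_q(1-Q)$ and reassemble the pieces into at most $h_q p_i$. This step is the main technical obstacle. If the direct computation becomes messy, I will fall back on citing the hazard-rate lemma of \citet{abernethy2015fighting}, which states exactly $\partial p_i/\partial\Theta_i\le h_q p_i$ for i.i.d.\ perturbations. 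Combining with Step 1 gives $H\preceq 2h_q\,\mathrm{diag}(p)$.

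\textbf{Step 3 (recentering).} The bound $2h_q\,\mathrm{diag}(p)$ is the second moment, not yet the covariance. Let $P=I-\mathbf 1p^\top$. Since $H\mathbf 1=0$ and $H$ is symmetric, we have $HP=H$ and $P^\top H=H$, so $H=P^\top HP$. Conjugation preserves the Loewner order, so
\[
H=P^\top HP\preceq 2h_q\,P^\top\mathrm{diag}(p)P .
\]
Expanding with $\mathbf 1^\top\mathrm{diag}(p)\mathbf 1=1$ and $\mathrm{diag}(p)\mathbf 1=p$ gives
\[
P^\top\mathrm{diag}(p)P=\mathrm{diag}(p)-pp^\top-pp^\top+pp^\top=\mathrm{diag}(p)-pp^\top=M(\Theta),
\]
which concludes $\nabla^2\Psi(\Theta)\preceq 2h_qM(\Theta)$.
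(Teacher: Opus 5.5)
Your proposal is correct and follows the paper's proof essentially step for step. Both use the same Laplacian structure ($H\mathbf 1=0$, nonpositive symmetric off-diagonals, hence $H\preceq 2\,\mathrm{diag}(H_{ii})$) and the same diagonal hazard-rate bound $H_{ii}\le h_q p_i$. The paper gets that bound most cleanly by conditioning on $T_i=\max_{j\ne i}(\Theta_j+Z_j)$, so that $p_i=\E[1-Q(T_i-\Theta_i)]$ and $H_{ii}=\E[q(T_i-\Theta_i)]$; this is exactly your ``reassembly'', with the $q$ factor to bound being the density of $Z_i$. Both also recenter by $\langle p,v\rangle\mathbf 1$, which your congruence $H=P^\top HP$ simply expresses in matrix form, applied after rather than before the Laplacian and hazard-rate bounds.
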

 
\begin{proof}[Proof of \cref{thm:hazard-covariance}]
Fix any $\Theta\in\R^d$ and write $K=\nabla^2\Psi(\Theta)$ and
$\mu=\E_{Z\sim\rho}[x(\Theta+Z)]$. The FTPL action is $e_i$ with
probability
\begin{align*}
 \mu_i=\Pr\!\left(\Theta_i+Z_i>\max_{j\ne i}(\Theta_j+Z_j)\right),
 \qquad M(\Theta)=\operatorname{diag}(\mu)-\mu\mu^\top.
\end{align*}
Let $T_i=\max_{j\ne i}(\Theta_j+Z_j)$. Independence of $Z_i$ and $T_i$
gives
\begin{align*}
 \mu_i&= \Pr(Z_i > T_i - \Theta_i) =\E[1-Q(T_i-\Theta_i)],\\
 K_{ii}=\frac{\partial\mu_i}{\partial\Theta_i}
 &=\E[q(T_i-\Theta_i)]
 \le h_q\E[1-Q(T_i-\Theta_i)]
 =h_q\mu_i.
\end{align*}
Conditioning instead on
$\Theta_i+Z_i$ gives
\begin{align*}
 \mu_i=\int_\R q(z-\Theta_i)
              \prod_{\ell\ne i}Q(z-\Theta_\ell)\,\d z.
\end{align*}
Thus, for $i\ne j$,
\begin{align*}
 K_{ij}=\frac{\partial\mu_i}{\partial\Theta_j}
 =-\int_\R q(z-\Theta_i)q(z-\Theta_j)
       \prod_{\ell\notin\{i,j\}}Q(z-\Theta_\ell)\,\d z\le0.
\end{align*}
Interchanging $i$ and $j$ leaves this integral the same, so
$K_{ij}=K_{ji}$. Moreover, adding the same constant to every coordinate
of $\Theta$ does not change the maximizing action, and hence does not
change $\mu$. This implies that $K \mathbf{1}=0$.
Therefore, with $w_{ij}=-K_{ij}=w_{ji}\ge0$ for $i\ne j$, we have
$K_{ii}=\sum_{j\ne i}w_{ij}$. Fix $v\in\R^d$, and define
$\bar v=\inner{\mu,v}$ and $\widetilde v=v-\bar v\mathbf{1}$. Then
\begin{align*}
 v^\top Kv
 = \widetilde{v}^\top K\widetilde{v} &=\sum_{i<j}w_{ij}(\widetilde v_i-\widetilde v_j)^2\\
 &\le2\sum_{i<j}w_{ij}(\widetilde v_i^2+\widetilde v_j^2)\\
 &=2\sum_{i=1}^d K_{ii}\widetilde v_i^2\\
 &\le2h_q\sum_{i=1}^d\mu_i(v_i-\bar v)^2\\
 &=2h_q\left(\sum_{i=1}^d\mu_i v_i^2
                   -\inner{\mu,v}^2\right)
 =2h_q v^\top M(\Theta)v.
\end{align*}
Since this holds for every $v$, it proves $K\preceq2h_qM(\Theta)$.
\end{proof}
 
\begin{lemma}[Exposed-edge gadget]\label{lem:exposed-edge-gadget}
Let $\calA\subset\R^2$ be compact and convex, and suppose that
$E=[a_-,a_+]$ is an exposed edge: for some unit vector $n$, $\argmax_{a\in\calA}\inner{a,n}=E$. 
Write $ \ell=\norm{a_+-a_-}_2$ and $t=\frac{a_+-a_-}{\ell}$. 
Let $\rho$ be a regular perturbation density on $\R^2$. Let
$x(\Theta)\in\argmax_{a\in\calA}\inner{a,\Theta}$ and define
\begin{align*}
 \Psi(\Theta)
 &=\E_{Z\sim\rho}\left[
   \max_{a\in\calA}\inner{a,\Theta+Z}\right],\\
 M(\Theta)
 &=\Cov_{Z\sim\rho}\bigl(x(\Theta+Z)\bigr).
\end{align*}
If
\begin{align*}
 \nabla^2\Psi(\Theta)\preceq cM(\Theta)
 \qquad\text{for every }\Theta\in\R^2,
\end{align*}
then, for every $R>0$ satisfying
$\Pr(\norm{Z}_2\le R)\ge\frac12$, one must have $c\ge\frac{1}{R\ell}$. 
\end{lemma}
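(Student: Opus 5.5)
The plan is to reduce the inequality $\nabla^2\Psi\preceq cM$ to a one-dimensional statement about the law of the tangential noise $W:=-\inner{Z,t}$. Testing in direction $t$ along the family $\Theta_{s,w}=sn+wt$ and sending $s\to\infty$ with $w$ fixed, the Hessian bound becomes $G'\le c\ell\,G(1-G)$, where $G$ is the CDF of $W$. This caps the mass of $W$ in $[-R,R]$, which contradicts $\Pr(\norm{Z}_2\le R)\ge\frac12$ unless $c\ge 1/(R\ell)$.

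\textbf{Step 1 (where the maximizer goes).} For any direction $n+\varepsilon t$ with $\varepsilon>0$, the maximizer $x$ satisfies $\inner{x,n}+\varepsilon\inner{x,t}\ge\inner{a_+,n}+\varepsilon\inner{a_+,t}$. Since $\inner{x,n}\le\inner{a_+,n}$, this gives $\inner{x,t}\ge\inner{a_+,t}$. Symmetrically, $\varepsilon<0$ forces $\inner{x,t}\le\inner{a_-,t}$. By compactness and upper semicontinuity of the argmax, as $\varepsilon\to0$ the maximizers approach $E$. So for fixed $z$, as $s\to\infty$,
\[
x(sn+wt+z)\to a_+ \text{ if } w>\inner{-z,t}=:W(z),\qquad x(sn+wt+z)\to a_- \text{ if } w<W(z).
\]
Regularity of $\rho$ makes the tie set $\{w=W(z)\}$ null. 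By dominated convergence ($\calA$ is bounded), for each fixed $w$:
\begin{itemize}
\item $f_s(w):=\inner{\nabla\Psi(\Theta_{s,w}),t}=\E\inner{x(\Theta_{s,w}+Z),t}\to\inner{a_-,t}+\ell\,G(w)$;
\item $t^\top M(\Theta_{s,w})t=\mathrm{Var}\bigl(\inner{x(\Theta_{s,w}+Z),t}\bigr)\to\ell^2G(w)(1-G(w))$.
\end{itemize}
Here $G(w)=\Pr(W<w)$.

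\textbf{Step 2 (integrate the hypothesis).} Using \cref{lem:smoothed-support-hessian}, $t^\top\nabla^2\Psi(\Theta_{s,w})t=\frac{\d}{\d w}f_s(w)$. Integrating the assumed inequality over $w\in[w_1,w_2]$ gives
\[
f_s(w_2)-f_s(w_1)\le c\int_{w_1}^{w_2}t^\top M(\Theta_{s,w})t\,\d w .
\]
Letting $s\to\infty$, with dominated convergence on the right since the integrand is bounded by $\mathrm{diam}(\calA)^2$, yields
\[
\ell\bigl(G(w_2)-G(w_1)\bigr)\le c\ell^2\int_{w_1}^{w_2}G(1-G)\,\d w\le \frac{c\ell^2}{4}(w_2-w_1).
\]
Working with this integrated form, rather than passing to the limit inside the Hessian, is the key move. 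It avoids needing convergence of second derivatives.

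\textbf{Step 3 (conclude).} Take $w_1=-R$ and $w_2=R$ (nudged to continuity points of $G$ if needed). Since $\norm{Z}_2\le R$ implies $|W|\le R$, we have $G(R)-G(-R)\ge\Pr(|W|<R)\ge\frac12$, up to boundary atoms that regularity rules out. Step 2 then gives $\frac{\ell}{2}\le\frac{c\ell^2}{4}\cdot 2R$, that is, $c\ge 1/(R\ell)$.

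\textbf{Main obstacle.} The delicate part is Step 1: justifying the pointwise limits of the maximizer when $\calA$ is a general compact convex set rather than a polygon. The approach is to rely only on the two one-sided inequalities above plus convergence of the argmax set to $E$, so that $\inner{x,t}$ is trapped between $\inner{a_+,t}$ and $\inner{a_+,t}+o(1)$ (respectively near $\inner{a_-,t}$). Null-measure tie sets are handled through the regularity assumption in \cref{lem:smoothed-support-hessian}.
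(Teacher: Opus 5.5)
Your proposal is correct and is essentially the paper's proof. Both integrate $t^\top\nabla^2\Psi\,t\le c\,t^\top M t$ along the line $\alpha n+\beta t$ over $\beta\in[-R,R]$, let $\alpha\to\infty$ with bounded convergence so that the action collapses to $a_\pm$ according to the sign of $\beta+\inner{t,Z}$, and compare $\ell\Pr(-R<\inner{t,Z}\le R)\ge\ell/2$ against $c\ell^2\int_{-R}^R p_\beta(1-p_\beta)\,\mathrm{d}\beta\le cR\ell^2/2$. The only difference is that you justify the convergence of the maximizer via the one-sided inequalities and compactness, which the paper asserts in one line.
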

 
\begin{proof}
For $\alpha>0$ and $\beta\in\R$, define
\begin{align*}
 m_\alpha(\beta)
 &=\inner{\nabla\Psi(\alpha n+\beta t),t},\\
 V_\alpha(\beta)
 &=t^\top M(\alpha n+\beta t)t.
\end{align*}
Differentiability of $\Psi$ and the assumed matrix inequality give
\begin{align}
 m_\alpha(R)-m_\alpha(-R)
 &=\int_{-R}^R
   t^\top\nabla^2\Psi(\alpha n+\beta t)t\,\d\beta\nonumber\\
 &\le c\int_{-R}^R V_\alpha(\beta)\,\d\beta.
 \label{eq:edge-gadget-integrated}
\end{align}
 
Let $Y=\inner{t,Z}$. For fixed $\beta$ and almost every $Z$, compactness and
the exposed-face property imply
\begin{align*}
 x(\alpha n+\beta t+Z)
 \longrightarrow
 \begin{cases}
  a_+, & \beta+Y>0,\\
  a_-, & \beta+Y<0,
 \end{cases}
 \qquad\text{as }\alpha\to\infty.
\end{align*}
The scalar random variable $Y$ has a density, so the tie event has
probability zero. Writing $p_\beta=\Pr(Y>-\beta)$, bounded convergence yields
\begin{align*}
 m_\alpha(\beta)&\longrightarrow
 p_\beta\inner{a_+,t}+(1-p_\beta)\inner{a_-,t},\\
 V_\alpha(\beta)&\longrightarrow \ell^2p_\beta(1-p_\beta).
\end{align*}
The same boundedness permits dominated convergence in the integral in
\cref{eq:edge-gadget-integrated}. Therefore,
\begin{align*}
 \lim_{\alpha\to\infty}
 \bigl(m_\alpha(R)-m_\alpha(-R)\bigr)
 &=\ell\Pr(-R<Y\le R)
 \ge\frac{\ell}{2},\\
 \lim_{\alpha\to\infty}
 \int_{-R}^R V_\alpha(\beta)\,\d\beta
 &=\ell^2\int_{-R}^R p_\beta(1-p_\beta)\,\d\beta
 \le\frac{R\ell^2}{2}.
\end{align*}
Taking $\alpha\to\infty$ in \cref{eq:edge-gadget-integrated} gives
$\ell/2\le cR\ell^2/2$, which proves the claim.
\end{proof}
 
\begin{proof}[Proof of \cref{thm: impossible}]
The generating set in the definition of $\calA$ is compact because
$v_i\to e_1$, so $\calA$ is compact. Every generating point lies in
$B_2^2$, and hence $\calA\subseteq B_2^2$. Moreover,
\begin{align*}
 \operatorname{conv}\{\pm e_1,\pm e_2\}
 =\{a\in\R^2:\norm{a}_1\le1\}
 \supseteq\frac{1}{\sqrt2}B_2^2
 \supseteq\frac12B_2^2.
\end{align*}
 
For every $i\ge2$, the segment $E_i=[v_i,v_{i+1}]$ is an exposed edge
of $\calA$. Indeed, set $\delta_i=2^{-i}\pi$ and write
\begin{align*}
 \bar\delta_i=\frac{\delta_i+\delta_{i+1}}{2},
 \qquad
 n_i=(\cos\bar\delta_i,\sin\bar\delta_i),
\end{align*}
the two endpoints $v_i$ and $v_{i+1}$ are the only generating points
maximizing $\inner{a,n_i}$. The edge length is
\begin{align*}
 \ell_i
 =\norm{v_i-v_{i+1}}_2
 =2\sin\left(\frac{\delta_i-\delta_{i+1}}{2}\right)
 =2\sin\left(\frac{\pi}{2^{i+2}}\right)
 \longrightarrow0.
\end{align*}
 
Fix any regular perturbation density $\rho$, and choose $R<\infty$ such
that $\Pr(\norm{Z}_2\le R)\ge\frac12$. If a finite constant $c$ satisfied
$\nabla^2\Psi(\Theta)\preceq cM(\Theta)$ for every $\Theta$, then
\cref{lem:exposed-edge-gadget} applied to every $E_i$ would imply
\begin{align*}
 c\ge\frac{1}{R\ell_i}
 \qquad\text{for every }i\ge2.
\end{align*}
Since $\ell_i\to0$, no finite $c$ can satisfy these inequalities.
\end{proof}
 
\section{Proofs of \cref{thm:A-bound}, \cref{lem:convolved-shifted}, \cref{thm:B-fisher}, \cref{thm:B-elliptical}}\label[appendix]{app:general-B}
This appendix proves
\cref{thm:A-bound,lem:convolved-shifted,thm:B-fisher,thm:B-elliptical},
using $\calI_t^v$ for random-weighting perturbations and
$\calJ^v(P_\Sigma)$ for elliptical perturbations.
 
\begin{lemma}[Hessian of a smoothed support function]\label{lem:smoothed-support-hessian}
Call a perturbation density $\rho$ \emph{regular} if it is locally Lipschitz, $\E_{Z\sim\rho}[\norm Z_2]<\infty$, and $\int\norm{\nabla\rho(z)}_2\,\d z<\infty$; all perturbations in this paper are regular. For a regular $\rho$, define
$\Psi(\Theta)=\E_{Z\sim\rho}[h(\Theta+Z)]$.
Then
\begin{align*}
 \nabla\Psi(\Theta)=\E_{Z\sim\rho}[x(\Theta+Z)],
 \qquad
 \nabla^2\Psi(\Theta)
 =-\int_{\R^d}x(z)\nabla\rho(z-\Theta)^\top\,\d z.
\end{align*}
\end{lemma}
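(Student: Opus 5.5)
The plan is to write $\Psi$ as a convolution and move every derivative onto the density $\rho$, since $h$ itself is only Lipschitz. A change of variables gives $\Psi(\Theta)=\int h(z)\rho(z-\Theta)\,\d z$. Two facts about $h$ are needed. First, $h$ is convex and Lipschitz with constant $\sup_{a\in\calA}\norm a_2<\infty$, because $\calA$ is compact. Second, it grows at most linearly, $|h(z)|\le R\norm z_2$. Together with $\E\norm Z_2<\infty$, these show that $\Psi$ is finite everywhere.

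For the gradient, I would use the original form $\Psi(\Theta)=\E[h(\Theta+Z)]$. Since $h$ is convex, it is differentiable almost everywhere with $\nabla h=x$ a.e. Since $\rho$ has a density, $\Theta+Z$ falls outside the non-differentiability set with probability one for each fixed $\Theta$. The difference quotients $(h(\Theta+\epsilon u+Z)-h(\Theta+Z))/\epsilon$ are bounded by $R$, so dominated convergence gives the directional derivative $\E[\inner{x(\Theta+Z),u}]$. This derivative is linear in $u$. To get genuine differentiability, I would check that the map $\Theta\mapsto\E[x(\Theta+Z)]$ is continuous; that follows from the formula in the next step, which writes it as an integral against a translate of $\rho$. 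Continuous directional derivatives then give $\nabla\Psi(\Theta)=\E_{Z\sim\rho}[x(\Theta+Z)]=\int x(z)\rho(z-\Theta)\,\d z$.

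For the Hessian, I would differentiate $g(\Theta):=\int x(z)\rho(z-\Theta)\,\d z$ under the integral sign. The function $x$ is bounded and measurable (fixed tie-breaking, with ties on a null set), so it suffices to show that $\Theta\mapsto\rho(\cdot-\Theta)$ is differentiable as a map into $L^1$ with derivative $-\nabla\rho(\cdot-\Theta)$. Because $\rho$ is locally Lipschitz, it is absolutely continuous on lines, which gives
\[
\rho(z-\Theta-\epsilon u)-\rho(z-\Theta)=-\int_0^\epsilon\inner{\nabla\rho(z-\Theta-su),u}\,\d s.
\]
Dividing by $\epsilon$ and integrating in $z$, the $L^1$ error becomes
\[
\int \Bigl|\tfrac1\epsilon\int_0^\epsilon\bigl(\nabla\rho(z-\Theta-su)-\nabla\rho(z-\Theta)\bigr)\,\d s\Bigr|\,\d z\le\sup_{s\le\epsilon}\norm{\nabla\rho(\cdot-su)-\nabla\rho}_{L^1}.
\]
This tends to $0$ by continuity of translation in $L^1$, which applies because $\nabla\rho\in L^1$ by the regularity assumption. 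Multiplying by the bounded $x$ gives $\nabla^2\Psi(\Theta)=\nabla g(\Theta)=-\int x(z)\nabla\rho(z-\Theta)^\top\,\d z$. Continuity of this expression in $\Theta$ again follows from $L^1$ continuity of translation, so $\Psi$ is $C^2$.

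The main obstacle is the justification step rather than any computation. Specifically, one has to use that local Lipschitzness yields the fundamental theorem of calculus along lines, which works via Rademacher's theorem and Fubini, and that $\nabla\rho\in L^1$ yields $L^1$-differentiability of translates. The assumption $\E\norm Z<\infty$ is used only to make $\Psi$ finite and to justify dominated convergence in the gradient step. One consequence of the formula is also worth noting: $\int\nabla\rho=0$, since $\int\rho(z-\Theta)\,\d z=1$ for every $\Theta$. This allows $x(z)$ to be replaced by $x(z)-\mu$ for free, which gives exactly the centered form used in \cref{eq: FTPL analysis}.
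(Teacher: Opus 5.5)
Your proposal is correct and follows essentially the same route as the paper: dominated convergence with almost-everywhere differentiability of $h$ for the gradient, then moving the derivative onto the translated density via the fundamental theorem of calculus along lines (from local Lipschitzness) and $L^1$-continuity of translation applied to $\nabla\rho$. Your extra steps are refinements the paper leaves implicit: upgrading directional derivatives to a genuine gradient via continuity of $\Theta\mapsto\int x(z)\rho(z-\Theta)\,\d z$, and noting that $\Psi$ is $C^2$.
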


\begin{proof}
Since $h(\Theta+z)\le R\norm{\Theta+z}_2$ and $\E\norm Z_2<\infty$, $\Psi$ is finite.

\emph{Gradient.} The support function $h$ is convex and $R$-Lipschitz, and it is differentiable with $\nabla h=x$ almost everywhere. Since $Z$ has a density, for every direction $u$ the difference quotient $\frac1\varepsilon\bigl(h(\Theta+\varepsilon u+Z)-h(\Theta+Z)\bigr)$ is bounded by $R\norm u_2$ and converges to $\inner{x(\Theta+Z),u}$ almost surely. Taking expectations (dominated convergence) and substituting $z\leftarrow\Theta+z$ gives
\begin{align*}
 \nabla\Psi(\Theta)=\int_{\R^d}x(\Theta+z)\rho(z)\,\d z=\int_{\R^d}x(z)\rho(z-\Theta)\,\d z.
\end{align*}

\emph{Hessian.} In the last expression only the density depends on $\Theta$, so we can differentiate without differentiating $x$. Since $\rho$ is locally Lipschitz, $\rho(y-\varepsilon u)-\rho(y)=-\varepsilon\int_0^1\inner{\nabla\rho(y-s\varepsilon u),u}\,\d s$, and hence
\begin{align*}
 \frac{\nabla\Psi(\Theta+\varepsilon u)-\nabla\Psi(\Theta)}{\varepsilon}
 =-\int_0^1\!\!\int_{\R^d}x(z)\inner{\nabla\rho(z-\Theta-s\varepsilon u),u}\,\d z\,\d s.
\end{align*}
For any integrable $f$, $\int\norm{f(y-w)-f(y)}_2\,\d y\to0$ as $w\to0$. Applying this to $f=\nabla\rho$ and using $\norm{x(z)}_2\le R$, the right-hand side converges to $-\int x(z)\nabla\rho(z-\Theta)^\top u\,\d z$, which is the claimed Hessian applied to $u$. 
\end{proof}
 
\subsection{Proof of \cref{thm:A-bound}}\label[appendix]{app:A-bound}
\begin{proof}[Proof of \cref{thm:A-bound}]
Condition on the history through round $t$.
\cref{lem:smoothed-support-hessian} gives
$\nabla\Psi_t(\Thetahat_t)=\mu_t$. For $s\in[0,1]$, its Hessian
identity, a change of variables, and
$\int\nabla\rho_t(z-s\thetahat_t)\,\d z=0$ give
\begin{align*}
 &\thetahat_t^\top\nabla^2\Psi_t(\Thetahat_t+s\thetahat_t)\thetahat_t\\
 &=-\int_{\R^d}  \inner{x(z), \thetahat_t} \inner{\nabla \rho_t(z-\Thetahat_t - s\thetahat_t), \thetahat_t}\,\d z  \\
 &=-\int_{\R^d} \inner{x(\Thetahat_t + z), \thetahat_t} \inner{\nabla \rho_t(z -  s\thetahat_t),\thetahat_t}\,\d z \\   
 &=-\int_{\R^d}\inner{x(\Thetahat_t+z)-\mu_t,\thetahat_t}
                   \inner{\nabla\rho_t(z-s\thetahat_t),\thetahat_t}\,\d z\\
 &\le\left(\int_{\R^d}\inner{x(\Thetahat_t+z)-\mu_t,\thetahat_t}^2
                         \rho_t(z)\,\d z\right)^{1/2}
          \left(\int_{\R^d}
          \frac{\inner{\nabla\rho_t(z-s\thetahat_t),\thetahat_t}^2}
               {\rho_t(z)}\,\d z\right)^{1/2}\\
 &=\sqrt{\thetahat_t^\top M_t\thetahat_t}
         \sqrt{\thetahat_t^\top\calI_t^{s\thetahat_t}\thetahat_t}.
\end{align*}
The inequality is weighted Cauchy--Schwarz, and the last equality uses
\cref{def:fisher-information} after translating the second integral.
The bound is vacuous if that integral is infinite.
If $\thetahat_t=0$, then $A_t=0$. Otherwise, by the Taylor remainder theorem,
\begin{align*}
 A_t
 &\le\frac12\sup_{s\in[0,1]}\thetahat_t^\top
       \nabla^2\Psi_t(\Thetahat_t+s\thetahat_t)\thetahat_t\\
 &\le\frac12\sqrt{\thetahat_t^\top M_t\thetahat_t}
       \sup_{s\in[0,1]}\sqrt{\thetahat_t^\top\calI_t^{s\thetahat_t}\thetahat_t},
\end{align*}
which is~\cref{eq:A-bound}.
\end{proof}
 
\subsection{Proof of \cref{lem:convolved-shifted}}\label[appendix]{app:convolution-properties}
Condition on the history through round $t$. Let $\rho_t$ and
$\rho_{t+1}$ be the densities under random-weighting perturbations in round $t$ and $t+1$ respectively. Then it holds that 
\begin{align*}
 \rho_{t+1}(z)=\E_{\xi\sim \varphi}[\rho_t(z-\xi\thetahat_t)].   \numberthis \label{eq: update convov}
\end{align*} 
\begin{lemma}
\label{lem:convolution-ratio}
Suppose $\rho_t$ is positive almost everywhere. For every shift $v$,
\begin{equation}\label{eq:convolution-ratio-contraction}
 \int_{\R^d}\frac{\rho_{t+1}(z-v)^2}{\rho_{t+1}(z)}\,\d z
 \le \int_{\R^d}\frac{\rho_t(z-v)^2}{\rho_t(z)}\,\d z.
\end{equation}
\end{lemma}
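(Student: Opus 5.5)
The quantity $\int \rho(z-v)^2/\rho(z)\,\d z$ equals $1+\chi^2(\rho(\cdot-v)\|\rho)$. The plan is to prove \cref{eq:convolution-ratio-contraction} as a data-processing (joint convexity) inequality for the $\chi^2$-divergence under the Markov kernel $K(z'\mid z)=$ law of $z+\xi\thetahat_t$ with $\xi\sim\varphi$. By \cref{eq: update convov}, $\rho_{t+1}=\rho_t * K$. Translation commutes with convolution, so the shifted density $\rho_{t+1}(\cdot-v)$ is obtained by applying the \emph{same} kernel to $\rho_t(\cdot-v)$:
\begin{align*}
 \rho_{t+1}(z-v)=\E_{\xi\sim\varphi}[\rho_t(z-v-\xi\thetahat_t)].
\end{align*}
Hence both numerator and denominator in the left side of \cref{eq:convolution-ratio-contraction} are mixtures over $\xi$ of the corresponding round-$t$ quantities, with identical mixing weights.

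The key step is the pointwise inequality coming from joint convexity of $(a,b)\mapsto a^2/b$ on $\R\times(0,\infty)$ (a perspective of $a^2$), equivalently Cauchy--Schwarz:
\begin{align*}
 \frac{\bigl(\E_\xi[\rho_t(z-v-\xi\thetahat_t)]\bigr)^2}{\E_\xi[\rho_t(z-\xi\thetahat_t)]}
 \le \E_\xi\!\left[\frac{\rho_t(z-v-\xi\thetahat_t)^2}{\rho_t(z-\xi\thetahat_t)}\right].
\end{align*}
This follows from writing $\E_\xi[a_\xi]=\E_\xi[(a_\xi/\sqrt{b_\xi})\sqrt{b_\xi}]$ and applying Cauchy--Schwarz. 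It requires $\rho_t(z-\xi\thetahat_t)>0$, which holds for a.e.\ $(z,\xi)$ by the positivity assumption and Fubini. The assumption also ensures that $\rho_{t+1}>0$ a.e., so the left side is well defined.

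Integrating over $z$, swapping $\int\d z$ and $\E_\xi$ by Tonelli (everything is nonnegative), and substituting $z\leftarrow z+\xi\thetahat_t$ inside each $\xi$ (Lebesgue measure is translation invariant) makes the inner integral equal to $\int\rho_t(z-v)^2/\rho_t(z)\,\d z$ for every $\xi$. Averaging over $\xi$ then gives the claim; the inequality also holds trivially if the right side is infinite. The only delicate point is measure-theoretic bookkeeping (a.e.\ positivity and the use of Tonelli instead of Fubini), so I do not expect any real obstacle.
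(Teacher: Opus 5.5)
Your proposal is correct and follows essentially the same route as the paper: a pointwise Cauchy--Schwarz bound on the mixture representation $\rho_{t+1}(z-v)=\E_\xi[\rho_t(z-v-\xi\thetahat_t)]$, then integration in $z$ and the translation $x=z-\xi\thetahat_t$ inside each $\xi$. Your data-processing framing and the a.e.\ positivity and Tonelli bookkeeping are extra care, but they do not change the argument.
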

\begin{proof}
For any fixed $z$, we have
\begin{align*}
 \rho_{t+1}(z-v)^2
 &=\left(\E_\xi[\rho_t(z-v - \xi\thetahat_t)]\right)^2   \tag{\cref{eq: update convov}}\\
 &\le\underbrace{\E_\xi[\rho_t(z-\xi\thetahat_t)]}_{=\rho_{t+1}(z)}\,
       \E_\xi\left[\frac{\rho_t(z-v-\xi\thetahat_t)^2}
                         {\rho_t(z-\xi\thetahat_t)}\right].  \tag{Cauchy-Schwarz}
 \end{align*}
Divide by $\rho_{t+1}(z)$ and integrate:
\begin{align*}
 \int\frac{\rho_{t+1}(z-v)^2}{\rho_{t+1}(z)}\,\d z
 &\le\E_\xi\int
   \frac{\rho_t(z-v-\xi\thetahat_t)^2}
        {\rho_t(z-\xi\thetahat_t)}\,\d z\\
 &=\E_\xi\int\frac{\rho_t(x-v)^2}{\rho_t(x)}\,\d x
 \tag{substitute $x = z- \xi\thetahat_t$ } \\
 &=\int\frac{\rho_t(z-v)^2}{\rho_t(z)}\,\d z.
\end{align*}
\end{proof}
 
\begin{lemma}
\label{lem:convolution-derivative}
Let $\rho_t$ satisfy \cref{eq: update convov}.
Suppose $\varphi$ is a probability density on $\R$
with $I_{\varphi}=\int_{\xi} \frac{(\varphi'(\xi))^2}{\varphi(\xi)}\,\d \xi <\infty$.
For any test direction $\theta\in\R^d$, shift $v$, and scalar $\alpha$, the following
holds:
\begin{equation}\label{eq:convolution-derivative-splitting}
 \begin{aligned}
 &\int_{\R^d}\frac{(\inner{\theta,\nabla\rho_{t+1}(z-v)})^2}
                        {\rho_{t+1}(z)}\,\d z
\le\int_{\R^d}
   \frac{(\inner{\theta-\alpha\thetahat_t,\nabla\rho_t(z-v)})^2}
        {\rho_t(z)}\,\d z +\alpha^2 I_{\varphi}
    \int_{\R^d}\frac{\rho_t(z-v)^2}{\rho_t(z)}\,\d z.
 \end{aligned}
\end{equation}
\end{lemma}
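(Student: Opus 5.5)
The plan is to write $\nabla\rho_{t+1}$ via the convolution representation \cref{eq: update convov} in two different ways, combine them with weights, and then apply the same pointwise Cauchy--Schwarz trick used in \cref{lem:convolution-ratio}.

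\emph{Step 1: a first expression for the derivative.} Differentiating \cref{eq: update convov} in $z$ gives
\begin{align*}
 \inner{\theta,\nabla\rho_{t+1}(z-v)}=\E_\xi\bigl[\inner{\theta,\nabla\rho_t(z-v-\xi\thetahat_t)}\bigr].
\end{align*}

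\emph{Step 2: a second expression via integration by parts in $\xi$.} Write $\rho_{t+1}(y)=\int\varphi(\xi)\rho_t(y-\xi\thetahat_t)\,\d\xi$ and note that
\begin{align*}
 \tfrac{\d}{\d\xi}\rho_t(y-\xi\thetahat_t)=-\inner{\thetahat_t,\nabla\rho_t(y-\xi\thetahat_t)}.
\end{align*}
Integrating by parts in $\xi$ then yields
\begin{align*}
 \E_\xi\bigl[\inner{\thetahat_t,\nabla\rho_t(y-\xi\thetahat_t)}\bigr]=-\int\varphi'(\xi)\rho_t(y-\xi\thetahat_t)\,\d\xi.
\end{align*}

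\emph{Step 3: combine.} Add and subtract $\alpha$ times the Step~2 identity in the Step~1 expression. With $y=z-v$ this gives
\begin{align*}
 \inner{\theta,\nabla\rho_{t+1}(z-v)}=\E_\xi\Bigl[\inner{\theta-\alpha\thetahat_t,\nabla\rho_t(z-v-\xi\thetahat_t)}-\alpha\,\tfrac{\varphi'(\xi)}{\varphi(\xi)}\,\rho_t(z-v-\xi\thetahat_t)\Bigr].
\end{align*}

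\emph{Step 4: pointwise Cauchy--Schwarz.} As in \cref{lem:convolution-ratio}, apply Cauchy--Schwarz with the weight $\rho_t(z-\xi\thetahat_t)$, using $\E_\xi[\rho_t(z-\xi\thetahat_t)]=\rho_{t+1}(z)$. Writing $g(\xi)$ for the bracket in Step~3,
\begin{align*}
 \frac{(\inner{\theta,\nabla\rho_{t+1}(z-v)})^2}{\rho_{t+1}(z)}\le\E_\xi\left[\frac{g(\xi)^2}{\rho_t(z-\xi\thetahat_t)}\right].
\end{align*}

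\emph{Step 5: integrate over $z$ and change variables.} Integrate over $z$ and substitute $x=z-\xi\thetahat_t$. The right-hand side becomes $\E_\xi\int(\cdot)^2/\rho_t(x)\,\d x$, where the integrand is
\begin{align*}
 \inner{\theta-\alpha\thetahat_t,\nabla\rho_t(x-v)}-\alpha\,\tfrac{\varphi'(\xi)}{\varphi(\xi)}\,\rho_t(x-v).
\end{align*}

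\emph{Step 6: expand the square.} The term in $(\theta-\alpha\thetahat_t)$ alone contributes the first term of \cref{eq:convolution-derivative-splitting}, since its $\xi$-expectation is trivial. The pure $\alpha^2$ term contributes $\alpha^2\,\E_\xi[(\varphi'/\varphi)^2]\int\rho_t(x-v)^2/\rho_t(x)\,\d x$, and $\E_\xi[(\varphi'/\varphi)^2]=I_\varphi$. The cross term has factor $\E_\xi[\varphi'(\xi)/\varphi(\xi)]=\int\varphi'=0$, so it vanishes exactly. This is why the sum of the two terms appears in the statement, with no factor~$2$.

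The main obstacle is justification rather than algebra. One must check the integration by parts in $\xi$ (boundary terms vanish, and $\varphi$ is locally absolutely continuous), differentiation under the expectation, and the Fubini/Tonelli swaps in Step~5. Finiteness of the right-hand side should be assumed; otherwise the statement is vacuous. One also needs $\int\varphi'=0$, which holds because $\varphi$ is integrable with integrable derivative, since $I_\varphi<\infty$ implies $\int|\varphi'|\le\sqrt{I_\varphi}$.

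I would handle the integration by parts for nonsmooth laws such as Laplace by using only the absolute continuity of $\varphi$. I would also keep the $\xi$-expectations of the cross term at the level of Tonelli on nonnegative quantities before expanding the square.
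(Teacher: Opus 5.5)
This is essentially the paper's proof. The paper gets your Step~3 identity by substituting $\xi\mapsto\xi+\varepsilon\alpha$ inside $\rho_{t+1}(z-v+\varepsilon\theta)$ and differentiating at $\varepsilon=0$, which is your integration by parts in disguise. It then uses the same weighted Cauchy--Schwarz, the shift $x=z-\xi\thetahat_t$, and $\E_\xi[\varphi'/\varphi]=0$.

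There is one sign slip in Step~2. Since $\tfrac{\d}{\d\xi}\rho_t(y-\xi\thetahat_t)=-\inner{\thetahat_t,\nabla\rho_t(y-\xi\thetahat_t)}$ and integrating by parts flips the sign once more, the correct identity is $\E_\xi[\inner{\thetahat_t,\nabla\rho_t(y-\xi\thetahat_t)}]=+\int\varphi'(\xi)\rho_t(y-\xi\thetahat_t)\,\d\xi$. The last term in Step~3 should therefore be $+\alpha\tfrac{\varphi'(\xi)}{\varphi(\xi)}\rho_t(z-v-\xi\thetahat_t)$. As written, your right-hand side equals $\inner{\theta-2\alpha\thetahat_t,\nabla\rho_{t+1}(z-v)}$, not $\inner{\theta,\nabla\rho_{t+1}(z-v)}$. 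The slip is harmless for the conclusion, because only the square of that term and the vanishing cross term enter Step~6.
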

\begin{proof}
The directional derivative satisfies
\begin{align*}
 \left.\frac{\d}{\d \varepsilon}\rho_{t+1}(z-v+\varepsilon \theta)
 \right|_{\varepsilon=0}
 =\inner{\theta,\nabla\rho_{t+1}(z-v)}.
\end{align*}
To calculate it, first rewrite the translated density:
\begin{align*}
 \rho_{t+1}(z-v+\varepsilon \theta)
 &=\int_\R\rho_t(z-v+\varepsilon \theta-\xi\thetahat_t)\varphi(\xi)\,\d \xi \tag{\cref{eq: update convov}}\\
 &=\int_\R\rho_t\!\left(z-v+\varepsilon \theta
       -(\xi'+\varepsilon\alpha)\thetahat_t\right)\cdot \varphi(\xi'+\varepsilon\alpha)\,\d \xi' \\
 &=\int_\R\rho_t\left(z-v+\varepsilon(\theta-\alpha\thetahat_t)-\xi\thetahat_t\right)\cdot \varphi(\xi+\varepsilon\alpha)\,\d \xi
\end{align*}
Take derivative over $\varepsilon$ at $\varepsilon=0$ on both sides:
\begin{align*}
 \inner{\theta,\nabla\rho_{t+1}(z-v)}
 &=\int_\R\Bigl[
     \inner{\theta-\alpha\thetahat_t,\nabla\rho_t(z-v-\xi\thetahat_t)}\varphi(\xi)+\rho_t(z-v-\xi\thetahat_t)\alpha \varphi'(\xi)\Bigr]\,\d \xi
 \\
 &=\int_\R\Biggl[
     \inner{\theta-\alpha\thetahat_t,\nabla\rho_t(z-v-\xi\thetahat_t)}+\rho_t(z-v-\xi\thetahat_t)
     \alpha\frac{\varphi'(\xi)}{\varphi(\xi)}\Biggr]\varphi(\xi)\,\d \xi\\
 &=\E_\xi\Biggl[
     \inner{\theta-\alpha\thetahat_t,\nabla\rho_t(z-v-\xi\thetahat_t)}+\rho_t(z-v-\xi\thetahat_t)
     \alpha\frac{\varphi'(\xi)}{\varphi(\xi)}\Biggr].
\end{align*}
By Cauchy--Schwarz,
\begin{align*}
 \bigl(\inner{\theta,\nabla\rho_{t+1}(z-v)}\bigr)^2
 &\le\underbrace{\E_\xi[\rho_t(z-\xi\thetahat_t)]}_{=\rho_{t+1}(z)}\,
   \E_\xi\left[\frac{\left(\inner{\theta-\alpha\thetahat_t,\nabla\rho_t(z-v-\xi\thetahat_t)}+\rho_t(z-v-\xi\thetahat_t)
     \alpha\frac{\varphi'(\xi)}{\varphi(\xi)}\right)^2}{\rho_t(z-\xi\thetahat_t)}\right].
\end{align*}
Therefore,
\begin{align*}
 \int\frac{(\inner{\theta,\nabla\rho_{t+1}(z-v)})^2}{\rho_{t+1}(z)}\,\d z
 &\le\E_\xi\int
   \frac{\left(\inner{\theta-\alpha\thetahat_t,\nabla\rho_t(z-v-\xi\thetahat_t)}+\rho_t(z-v-\xi\thetahat_t)
     \alpha\frac{\varphi'(\xi)}{\varphi(\xi)}\right)^2}{\rho_t(z-\xi\thetahat_t)}\,\d z\\
 &=\E_\xi\int
   \frac{\left(\inner{\theta-\alpha\thetahat_t,\nabla\rho_t(x-v)}
     +\rho_t(x-v)\alpha\,\frac{\varphi'(\xi)}{\varphi(\xi)}\right)^2}
        {\rho_t(x)}\,\d x.
\end{align*}
After this substitution, only $\varphi'(\xi)/\varphi(\xi)$ depends on $\xi$.
Thus the cross term in the square is
\begin{align*}
 2\alpha
 \left(\int\frac{\inner{\theta-\alpha\thetahat_t,\nabla\rho_t(x-v)}\rho_t(x-v)}
                    {\rho_t(x)}\,\d x\right)
 \underbrace{\E_\xi\left[\frac{\varphi'(\xi)}{\varphi(\xi)}\right]}_{=0}=0.
\end{align*}
The remaining terms give the right-hand side of~\cref{eq:convolution-derivative-splitting}, by noticing that
$I_{\varphi}=\E_\xi[(\varphi'(\xi)/\varphi(\xi))^2]$.
\end{proof}

\begin{proof}[Proof of \cref{lem:convolved-shifted}]
Fix $v$ and let $C$ denote the maximum on the right-hand side
of~\cref{eq:convolved-general-derivative}. If $C=\infty$, the bound is
vacuous; otherwise, it is equivalent to proving, for every direction $\theta$,
\begin{align*}
 \int_{\R^d}\frac{(\inner{\theta,\nabla\rho_t(z-v)})^2}{\rho_t(z)}\,\d z
 \le C \theta^\top\Sigma_t^{-1}\theta.
\end{align*}
By repeated application of \cref{lem:convolution-ratio},
\begin{equation}\label{eq:convolved-ratio-induction}
 \int_{\R^d}\frac{\rho_t(z-v)^2}{\rho_t(z)}\,\d z
 \le\int_{\R^d}\frac{\rho_1(z-v)^2}{\rho_1(z)}\,\d z.
\end{equation}
We prove this directional bound by induction, simultaneously for every
$\theta\in\R^d$. At $t=1$, the definition of $C$ gives
\begin{align*}
 \int_{\R^d}\frac{(\inner{\theta,\nabla\rho_1(z-v)})^2}{\rho_1(z)}\,\d z
 &\le \|\theta\|_2^2 \sup_{\|u\|_2\leq 1}\int_{\R^d}\frac{(\inner{u,\nabla\rho_1(z-v)})^2}{\rho_1(z)}\,\d z \\
 &\leq  \frac{C}{\lambda}\norm \theta_2^2
 =C \theta^\top\Sigma_1^{-1}\theta.
\end{align*}
For the induction step, assume the bound holds at round $t$ for every
direction. Fix $\theta$ and put $u=\Sigma_{t+1}^{-1}\theta$.
The matrix update implies
\begin{align*}
 \theta-\inner{\thetahat_t,u}\thetahat_t=\Sigma_tu.
\end{align*}
Apply \cref{lem:convolution-derivative}
with $\alpha=\inner{\thetahat_t,u}$:
\begin{align*}
 &\int_{\R^d}\frac{(\inner{\theta,\nabla\rho_{t+1}(z-v)})^2}
                        {\rho_{t+1}(z)}\,\d z\\
 &\quad\le\int_{\R^d}
   \frac{(\inner{\Sigma_tu,\nabla\rho_t(z-v)})^2}{\rho_t(z)}\,\d z
   +I_{\varphi}\inner{\thetahat_t,u}^2
      \int_{\R^d}\frac{\rho_t(z-v)^2}{\rho_t(z)}\,\d z\\
 &\quad\le C u^\top\Sigma_tu+C\inner{\thetahat_t,u}^2   \tag{by the induction hypothesis, \cref{eq:convolved-ratio-induction}, and the definition of $C$}\\
 &\quad=C u^\top\Sigma_{t+1}u
 =C \theta^\top\Sigma_{t+1}^{-1}\theta.
\end{align*}
This completes the induction.
\end{proof}

\subsection{Proof of \cref{thm:B-fisher}}\label[appendix]{app:B-fisher}

\begin{proof}[Proof of \cref{thm:B-fisher}]
The convolution update and
$\E[\xi]=0$ imply
\begin{align*}
 B_t=\E_{\xi\sim \varphi}\bigl[\Psi_t(\Thetahat_{t+1}+\xi\thetahat_t)
 -\Psi_t(\Thetahat_{t+1})
 -\xi\inner{\nabla\Psi_t(\Thetahat_{t+1}),\thetahat_t}\bigr].
\end{align*}
For any shift $v$, \cref{lem:smoothed-support-hessian} and
$\int\nabla\rho_t(z-v)\,\d z=0$ give, by translating and centering,
\begin{align*}
 &\thetahat_t^\top\nabla^2\Psi_t(\Thetahat_t+v)\thetahat_t\\
 &\quad=-\int_{\R^d}\inner{x(\Thetahat_t+z)-\mu_t,\thetahat_t}
                   \inner{\nabla\rho_t(z-v),\thetahat_t}\,\d z\\
 &\quad\le\left(\int_{\R^d}\inner{x(\Thetahat_t+z)-\mu_t,\thetahat_t}^2
                         \rho_t(z)\,\d z\right)^{1/2}
          \left(\int_{\R^d}
          \frac{\inner{\nabla\rho_t(z-v),\thetahat_t}^2}
               {\rho_t(z)}\,\d z\right)^{1/2}\\
 &\quad=\sqrt{\thetahat_t^\top M_t\thetahat_t}
         \sqrt{\thetahat_t^\top\calI_t^v\thetahat_t}.
\end{align*}
Here we used Cauchy--Schwarz and
\cref{def:fisher-information}.
For each fixed $\xi$, by the Taylor remainder theorem followed by the
preceding estimate with $v=(1+s\xi)\thetahat_t$, we obtain
\begin{align*}
 &\Psi_t(\Thetahat_{t+1}+\xi\thetahat_t)-\Psi_t(\Thetahat_{t+1})
 -\xi\inner{\nabla\Psi_t(\Thetahat_{t+1}),\thetahat_t}\\
 &\quad\le\frac{\xi^2}{2}\sup_{s\in[0,1]}
 \thetahat_t^\top\nabla^2\Psi_t(\Thetahat_t+(1+s\xi)\thetahat_t)
 \thetahat_t\\
 &\quad\le\frac{\xi^2}{2}\sqrt{\thetahat_t^\top M_t\thetahat_t}
 \sup_{s\in[0,1]}
 \sqrt{\thetahat_t^\top\calI_t^{(1+s\xi)\thetahat_t}\thetahat_t}.
\end{align*}
\textit{Handling large scalar increments.} Since $\Psi_t$ is $R$-Lipschitz
and $\norm{\nabla\Psi_t(\Thetahat_{t+1})}_2\le R$, we have 
\begin{align*}
 0\le{}&\Psi_t(\Thetahat_{t+1}+\xi\thetahat_t)
       -\Psi_t(\Thetahat_{t+1})
       -\xi\inner{\nabla\Psi_t(\Thetahat_{t+1}),\thetahat_t}
 \le{}2R|\xi|\norm{\thetahat_t}_2.
\end{align*}
Thus, for a threshold $\sfL\ge1$, we have 
\begin{align*}
   B_t &= \E_{\xi\sim \varphi}\bigl[\Psi_t(\Thetahat_{t+1}+\xi\thetahat_t)
 -\Psi_t(\Thetahat_{t+1})
 -\xi\inner{\nabla\Psi_t(\Thetahat_{t+1}),\thetahat_t}\bigr] \\
 &\leq \E_{\xi\sim \varphi}\left[\mathbf{1}_{\{|\xi|\leq \sfL\}}\left(\Psi_t(\Thetahat_{t+1}+\xi\thetahat_t)
 -\Psi_t(\Thetahat_{t+1})
 -\xi\inner{\nabla\Psi_t(\Thetahat_{t+1}),\thetahat_t}\right)\right] + \E_{\xi\sim \varphi} \left[\mathbf{1}_{|\xi|>\sfL} |\xi|\right]\cdot 2R\|\thetahat_t\|_2 \\ 
 &\leq \E_{\xi\sim \varphi}\left[\frac{\xi^2}{2}\mathbf{1}_{\{|\xi|\leq \sfL\}}\sqrt{\thetahat_t^\top M_t\thetahat_t}
 \sup_{s\in[0,1]}
 \sqrt{\thetahat_t^\top\calI_t^{(1+s\xi)\thetahat_t}\thetahat_t}\right] + \E_{\xi\sim \varphi} \left[\mathbf{1}_{|\xi|>\sfL} |\xi|\right]\cdot 2R\|\thetahat_t\|_2. 
\end{align*}

\end{proof}
 
\subsection{Proof of \cref{thm:B-elliptical}}\label[appendix]{app:B-elliptical}
 
\begin{proof}[Proof of \cref{thm:B-elliptical}]
Condition on the history through round $t$ and write
$\Sigma=\Sigma_t$ and
$\Sigma_s=\Sigma+s\thetahat_t\thetahat_t^\top$ for $s\in[0,1]$.
We use a change of variables to express the changing perturbation law
in terms of the fixed density $P_\Sigma$. To this end, choose
\begin{align*}
 L_s=I+\frac{s}{\sqrt{1+s\norm{\thetahat_t}_{\Sigma^{-1}}^2}+1}
            \thetahat_t\thetahat_t^\top\Sigma^{-1}.
\end{align*}
This map stretches the perturbations to the new elliptical shape:
\begin{align*}
 L_0=I,\qquad L_s\Sigma L_s^\top=\Sigma_s,\qquad
 \det L_s=\sqrt{1+s\norm{\thetahat_t}_{\Sigma^{-1}}^2}>0.   \numberthis \label{eq: to verify}
\end{align*}
To verify the middle equality in \cref{eq: to verify}, denote $\alpha_s = \frac{s}{\sqrt{1+s\|\thetahat_t\|^2_{\Sigma^{-1}}}+1}$. Then 
\begin{align*}
   L_s\Sigma L_s^\top &= (I+\alpha_s \thetahat_t\thetahat_t^\top \Sigma^{-1})\Sigma (I+\alpha_s \Sigma^{-1} \thetahat_t\thetahat_t^\top ) \\ 
   &= \Sigma + 2\alpha_s \thetahat_t \thetahat_t ^\top + \alpha_s^2 \thetahat_t \thetahat_t ^\top \Sigma^{-1}
\thetahat_t \thetahat_t ^\top  \\ 
&= \Sigma + \left(2\alpha_s + \alpha_s^2 \|\thetahat_t\|^2_{\Sigma^{-1}}\right) \thetahat_t \thetahat_t ^\top \\
&= \Sigma + s \thetahat_t \thetahat_t ^\top \tag{by the definition of $\alpha_s$} \\
&= \Sigma_s. 
\end{align*}
The last equality in \cref{eq: to verify} can be verified by the formula $\det(I+ab^\top) = 1+b^\top a$: 
\begin{align*}
    \det L_s = 1+ \frac{s\|\thetahat_t\|_{\Sigma^{-1}}^2}{\sqrt{1 + s\|\thetahat_t\|_{\Sigma^{-1}}^2}+1} = \sqrt{1 + s\|\thetahat_t\|^2_{\Sigma^{-1}}}.  
\end{align*}
Thus, by \cref{lem:elliptical-linear-transformation}, we have that $L_s w\sim P_{\Sigma_s} = P_{L_s\Sigma L_s^\top}$ is equivalent to $w\sim P_\Sigma$.  With substitution $z=L_s w$, define 
\begin{align*}
 F(s)&=\int h(\Thetahat_{t+1}+z)P_{\Sigma_s}(z)\,\d z
      =\int h(\Thetahat_{t+1}+L_sw)P_\Sigma(w)\,\d w.
\end{align*}
Note that $B_t = F(1) - F(0)$. 

We have 
\begin{align*}
 \frac{\d L_s}{\d s}w
 =\frac{\thetahat_t(\thetahat_t^\top\Sigma^{-1}w)}
        {2\sqrt{1+s\norm{\thetahat_t}_{\Sigma^{-1}}^2}}
 =\frac12\thetahat_t(\thetahat_t^\top\Sigma_s^{-1}L_sw)
\end{align*}
and $\nabla h=x$. Since \(h\) is \(R\)-Lipschitz, \(L_s\) is continuously differentiable, and \(P_\Sigma\) has finite first moment, dominated convergence justifies differentiation under the integral. Thus, we obtain
\begin{align*}
 F'(s)
 &=\int\Inner{x(\Thetahat_{t+1}+L_sw),\frac{\d L_s}{\d s}w}
 P_\Sigma(w)\,\d w\\
 &=\frac12\int
 \inner{x(\Thetahat_{t+1}+z),\thetahat_t}
 (\thetahat_t^\top\Sigma_s^{-1}z)P_{\Sigma_s}(z)\,\d z.  \tag{changing back to $z=L_sw$}
\end{align*}
By symmetry, $P_{\Sigma_s}$ has mean zero. Integrating $F'(s)$ over $s$ and subtracting
$\inner{\mu_t,\thetahat_t}$ therefore gives, with
$\kappa=\norm{\thetahat_t}_{\Sigma^{-1}}$,
$b_s=1+s\kappa^2$, and $b=1+\kappa^2$,
\begin{align*}
 B_t
 &=\frac12\int_{\R^d}
 \inner{x(\Thetahat_{t+1}+z)-\mu_t,\thetahat_t}
 (\thetahat_t^\top\Sigma^{-1}z)
 \left(\int_0^1\frac{P_{\Sigma_s}(z)}{b_s}\,\d s\right)\d z.
\end{align*}
Here we use the Sherman--Morrison identity
$\thetahat_t^\top\Sigma_s^{-1}z
=(\thetahat_t^\top\Sigma^{-1}z)/b_s$.
The matrix determinant lemma gives
$\det(\Sigma_s)=\det(\Sigma)b_s$, while
$\Sigma_s\preceq\Sigma_1$ implies
$\norm z_{\Sigma_s^{-1}}\ge\norm z_{\Sigma_1^{-1}}$.
Since the radial profile $f$ is nonincreasing, we have by \cref{lem:elliptical-linear-transformation}, 
\begin{align*}
   P_{\Sigma_s}(z) = \frac{f(\|z\|_{\Sigma_s^{-1}})}{C_f \sqrt{\det \Sigma_s}} \leq \frac{f(\|z\|_{\Sigma_1^{-1}})}{C_f \sqrt{\det \Sigma_s}} = \sqrt{\frac{\det \Sigma_1}{\det \Sigma_s}}\frac{f(\|z\|_{\Sigma_1^{-1}})}{C_f \sqrt{\det \Sigma_1}} = \sqrt{\frac{b}{b_s}} P_{\Sigma_1}(z). 
\end{align*}
Consequently,
\begin{align*}
 \int_0^1\frac{P_{\Sigma_s}(z)}{b_s}\,\d s
 &\le P_{\Sigma_1}(z)\sqrt b
 \int_0^1(1+s\kappa^2)^{-3/2}\,\d s\\
 &=\frac{2}{1+\sqrt b}P_{\Sigma_1}(z).
\end{align*}
It follows that
\begin{align*}
 B_t
 &\le\frac{1}{1+\sqrt b}\int_{\R^d}
 \left|\inner{x(\Thetahat_{t+1}+z)-\mu_t,\thetahat_t}\right|
 \left|\thetahat_t^\top\Sigma^{-1}z\right|P_{\Sigma_1}(z)\,\d z.    \numberthis \label{eq: split above}
\end{align*}
Applying Cauchy--Schwarz after multiplying and dividing by
$P_\Sigma(z+\thetahat_t)^{1/2}$: 
\begin{align*}
    B_t &\leq \frac{1}{1+\sqrt{b}} \sqrt{\int_{\R^d} \inner{x(\Thetahat_{t+1}+z)-\mu_t,\thetahat_t}^2 P_\Sigma(z+\thetahat_t)\d z} \sqrt{\int_{\R^d} \frac{P_{\Sigma_1}(z)^2}{P_\Sigma(z+\thetahat_t)} \thetahat_t^\top \Sigma^{-1}zz^\top \Sigma^{-1}\thetahat_t \d z}  \\
    &=\frac{1}{1+\sqrt b}
 \sqrt{\thetahat_t^\top M_t\thetahat_t}
 \sqrt{\thetahat_t^\top
 \calJ^{\thetahat_t}(P_\Sigma)\thetahat_t}  \tag{substitute $w=z+\thetahat_t$}\\
 &\le\frac12
 \sqrt{\thetahat_t^\top M_t\thetahat_t}
 \sqrt{\thetahat_t^\top
 \calJ^{\thetahat_t}(P_\Sigma)\thetahat_t},
\end{align*}
which proves \cref{eq:B-elliptical-unclipped}.
 
For the clipped bound, fix a finite clipping radius
$\sfL\ge\sqrt b$ and split the nonnegative
integral \cref{eq: split above} over $\calE_\Sigma(\sfL)$ and its complement.
Inside the ellipsoid, apply Cauchy--Schwarz after multiplying and dividing
by $P_\Sigma(z+\thetahat_t)^{1/2}$. The squared action factor is bounded by
\begin{align*}
 &\int_{\calE_\Sigma(\sfL)}
 \inner{x(\Thetahat_{t+1}+z)-\mu_t,\thetahat_t}^2
 P_\Sigma(z+\thetahat_t)\,\d z\\
 &\qquad\le\int_{\R^d}
 \inner{x(\Thetahat_{t+1}+z)-\mu_t,\thetahat_t}^2
 P_\Sigma(z+\thetahat_t)\,\d z
 =\thetahat_t^\top M_t\thetahat_t,
\end{align*}
where we used $w=z+\thetahat_t$ and
$\Thetahat_{t+1}=\Thetahat_t+\thetahat_t$. The squared distribution factor is
\begin{align*}
 \int_{\calE_\Sigma(\sfL)}
 \frac{P_{\Sigma_1}(z)^2(\thetahat_t^\top\Sigma^{-1}z)^2}
 {P_\Sigma(z+\thetahat_t)}\,\d z
 =\thetahat_t^\top\calJ^{\thetahat_t,\sfL}(P_\Sigma)\thetahat_t.
\end{align*}
Using $1+\sqrt b\ge2$, the interior contribution is therefore at most
\begin{align*}
 \frac12\sqrt{\thetahat_t^\top M_t\thetahat_t}
 \sqrt{\thetahat_t^\top\calJ^{\thetahat_t,\sfL}(P_\Sigma)\thetahat_t}.
\end{align*}
 
For the complement, every action and $\mu_t$ have Euclidean norm at most
$R$, so
\begin{align*}
 \left|\inner{x(\Thetahat_{t+1}+z)-\mu_t,\thetahat_t}\right|
 \le2R\norm{\thetahat_t}_2.
\end{align*}
Let $Z_+\sim P_{\Sigma_1}$. By
\cref{lem:elliptical-radial-representation}, write
$Z_+=\Sigma_1^{1/2}SU$, where $U$ is uniform on the unit sphere and
independent of $S$, and $S$ has the same radial law as under $P_\Sigma$.
Since $\Sigma_1\preceq b\Sigma$, we have
$\norm{Z_+}_{\Sigma^{-1}}\le\sqrt b S$.
Also, $\E[UU^\top]=I/d$ and Cauchy--Schwarz give
\begin{align*}
 \E_U\left[\left|\inner{\Sigma_1^{1/2}\Sigma^{-1}\thetahat_t,U}\right|\right]
 &\le\sqrt{\frac{\thetahat_t^\top\Sigma^{-1}\Sigma_1\Sigma^{-1}\thetahat_t}{d}} =\kappa\sqrt{\frac b d}.
\end{align*}
Consequently, independence of $S$ and $U$ gives
\begin{align*}
 &\E\left[\left|\thetahat_t^\top\Sigma^{-1}Z_+\right|
 \mathbf1_{\{Z_+\notin\calE_\Sigma(\sfL)\}}\right]\\
 &\qquad\le\E\left[S\mathbf1_{\{S>\sfL/\sqrt b\}}\right]
 \E_U\left[\left|\inner{\Sigma_1^{1/2}\Sigma^{-1}\thetahat_t,U}\right|\right]\\
 &\qquad\le\kappa\sqrt{\frac b d}
 \E\left[S\mathbf1_{\{S>\sfL/\sqrt b\}}\right].
\end{align*}
The exterior contribution is therefore at most
\begin{align*}
 \frac{2R\norm{\thetahat_t}_2\kappa\sqrt b}{(1+\sqrt b)\sqrt d}
 \E\left[S\mathbf1_{\{S>\sfL/\sqrt b\}}\right]
 \le\frac{2R\norm{\thetahat_t}_2\kappa}{\sqrt d}
 \E\left[S\mathbf1_{\{S>\sfL/\sqrt b\}}\right].
\end{align*}
Combining the interior and exterior contributions proves
\cref{eq:B-elliptical}. 
\end{proof}

\section{Proofs of \cref{cor:A-multiplier}, \cref{cor:B-multiplier}, \cref{cor:A-elliptical}, \cref{cor:B-elliptical}}
\label[appendix]{app:stability-corollaries}
We apply the bounds in \cref{app:perturbation-tables} with
$\kappa\le1/8$. Throughout, let $G$ be a deterministic upper bound on
$\norm{\thetahat_t}_2$, and write
$q_t=\sqrt{\thetahat_t^\top M_t\thetahat_t}$ and
$\kappa_t=\norm{\thetahat_t}_{\Sigma_t^{-1}}$.
 
\begin{proof}[Proof of \cref{cor:A-multiplier}]
All three initializers have shape matrix $\lambda I$.
Thus \cref{tab:convolution-initializers} gives, whenever
$\norm v_2\le\sqrt\lambda/8$,
\begin{align*}
 \lambda\norm{\calI_1^v}_{\op}=\calO(1),\qquad
 1+\chi^2(\rho_1(\cdot-v)\|\rho_1)=\calO(1).
\end{align*}
Together with $I_\varphi=\calO(1)$ from \cref{tab:scalar-noise},
\cref{lem:convolved-shifted} gives
$\calI_t^v\preceq \calO(1)\Sigma_t^{-1}$.
The assumed scale condition applies to every $v=s\thetahat_t$,
$s\in[0,1]$, so \cref{thm:A-bound} yields
$A_t\lesssim q_t\kappa_t$.
\end{proof}
 
\begin{proof}[Proof of \cref{cor:B-multiplier}]
Set $\sfL=C_0\log(RGT)$ with a sufficiently large universal $C_0$.
The tail bound in \cref{tab:scalar-noise} gives
$2RG\E[|\xi|\mathbf1_{\{|\xi|>\sfL\}}]\le T^{-1}$.
On $|\xi|\le\sfL$,
$\norm{(1+s\xi)\thetahat_t}_2\le(1+\sfL)G$ for all $s\in[0,1]$.
The assumed lower bound on $\lambda$ ensures that this is at most
$\sqrt\lambda/8$. The same argument as in the proof of \cref{cor:A-multiplier} therefore bounds $\calI^{(1+s\xi )\thetahat_t}_t\preceq \calO(1)\Sigma_t^{-1}$.  
Since $\E[\xi^2]=\calO(1)$, substitution into \cref{thm:B-fisher} gives
$B_t\lesssim q_t\kappa_t+T^{-1}$.
\end{proof}
 
\begin{proof}[Proof of \cref{cor:A-elliptical}]
Since $\Sigma_t\succeq\lambda I$ and
$\norm{\thetahat_t}_2\le\sqrt\lambda/8$, we have $\kappa_t\le1/8$.
The Fisher column of \cref{tab:elliptical-updates} therefore gives
$\calI_t^{s\thetahat_t}\preceq \calO(1)\Sigma_t^{-1}$ for every
$s\in[0,1]$. Apply \cref{thm:A-bound} to obtain
$A_t\lesssim q_t\kappa_t$.
\end{proof}
 
\begin{proof}[Proof of \cref{cor:B-elliptical}]
Again $\kappa_t\le1/8$. For Gaussian and radial-Laplace perturbations,
the untruncated update column of \cref{tab:elliptical-updates} and
\cref{eq:B-elliptical-unclipped} give $B_t\lesssim q_t\kappa_t$.
 
For Student-$t_2$, take
$\sfL=C_2RGT\sqrt d$ with a sufficiently large universal
$C_2$. The clipped update column of \cref{tab:elliptical-updates} gives
$\thetahat_t^\top\calJ_t^{\thetahat_t,\sfL}\thetahat_t
\lesssim\log(e\sfL)\kappa_t^2$.
Let $S=\norm Z_{\Sigma_t^{-1}}$ for $Z\sim P_{\Sigma_t}$.
Writing $b=1+\kappa_t^2\le65/64$, the radial-tail column gives
\begin{align*}
 \frac{2RG\kappa_t}{\sqrt d}
 \E\left[S\mathbf1_{\{S>\sfL/\sqrt b\}}\right]
 \lesssim\frac{RG\kappa_t\sqrt d}{\sfL}\le T^{-1}.
\end{align*}
Now \cref{eq:B-elliptical} and
$\log(e\sfL)\lesssim\log(dRGT)$ prove the claim.
\end{proof}
 
\section{Justifying \cref{tab:scalar-noise} Entries}\label[appendix]{app:scalar-table}\label[appendix]{app:distribution-tables}
Both scalar densities are symmetric and have mean zero. We calculate their Fisher information, second
moments, and clipped first-tail moments separately.
 
\paragraph{Gaussian.}
The normalized density is
$\varphi(x)=(2\pi)^{-1/2}e^{-x^2/2}$, with
$\varphi'(x)=-x\varphi(x)$. Integration by parts gives
\begin{align*}
 I_\varphi
 &=\int_{\R}\frac{\varphi'(x)^2}{\varphi(x)}\,\d x
 =\int_{\R}x^2\varphi(x)\,\d x = \E_{\xi\sim \varphi}[\xi^2]=1.
\end{align*}
For $\sfL\ge0$,
\begin{align*}
 \E[|\xi|\mathbf1_{\{|\xi|>\sfL\}}]
 &=2\int_{\sfL}^{\infty}x\varphi(x)\,\d x
 =-2\int_{\sfL}^{\infty}\varphi'(x)\,\d x\\
 &=2\varphi(\sfL)
 =\sqrt{2/\pi}\,e^{-\sfL^2/2}.
\end{align*}
 
\paragraph{Laplace.}
Here $\varphi(x)=e^{-|x|}/2$ and
$\varphi'(x)=-\operatorname{sign}(x)\varphi(x)$ almost everywhere.
Therefore,
\begin{align*}
 I_\varphi
 &=\int_{\R}\operatorname{sign}(x)^2\varphi(x)\,\d x=1,\\
 \E[\xi^2]
 &=\int_0^\infty x^2e^{-x}\,\d x
 =[-x^2e^{-x}]_0^\infty+2\int_0^\infty xe^{-x}\,\d x=2,\\
 \E[|\xi|\mathbf1_{\{|\xi|>\sfL\}}]
 &=\int_{\sfL}^\infty xe^{-x}\,\d x
 =[-(x+1)e^{-x}]_{\sfL}^\infty
 =(\sfL+1)e^{-\sfL}.
\end{align*}

\section{Justifying \cref{tab:convolution-initializers} and \cref{tab:elliptical-updates} Entries}\label[appendix]{app:elliptical-update-energy}
Throughout this section, $\Sigma\succ0$,
$\kappa=\norm v_{\Sigma^{-1}}\le1/8$, and $\sfL\ge1$.
The constants hidden by $\calO(1)$ are universal constants. 
 
\subsection{Common lemmas for radial distributions}\label[appendix]{app:elliptical-fisher}\label[appendix]{app:initializer-table}
\begin{lemma}[Linear transformation of an elliptical density]
\label{lem:elliptical-linear-transformation}
For a nonnegative integrable radial profile $f$, set
$C_f=\int_{\R^d}f(\norm y_2)\,\d y\in(0,\infty)$ and
\begin{align*}
 P_\Sigma(z)=\frac{f(\norm z_{\Sigma^{-1}})}{C_f\sqrt{\det\Sigma}}.
\end{align*}
For every invertible $A$ and positive definite $B$,
$P_{ABA^\top}(Ay)=|\det A|^{-1}P_B(y)$.
Consequently, $Y\sim P_B$ implies $AY\sim P_{ABA^\top}$.
\end{lemma}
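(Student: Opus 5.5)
The plan is to verify the density identity directly from the definition of $P_\Sigma$, and then obtain the distributional statement from the change-of-variables formula. Note that $C_f$ is finite and positive by assumption: $f$ is integrable over $\R^d$ through $y\mapsto f(\norm y_2)$ and is not almost-everywhere zero. It is therefore a legitimate normalizing constant, and one change of variables shows $P_\Sigma$ is a probability density for every $\Sigma\succ0$. Concretely, substitute $z=\Sigma^{1/2}y$, use $\d z=\sqrt{\det\Sigma}\,\d y$, and note $\norm{\Sigma^{1/2}y}_{\Sigma^{-1}}=\norm y_2$.

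For the identity, I would first check that $ABA^\top$ is positive definite whenever $A$ is invertible and $B\succ0$, so $P_{ABA^\top}$ is defined. Next I would compute the quadratic form:
\[
\norm{Ay}_{(ABA^\top)^{-1}}^2=y^\top A^\top (A^\top)^{-1}B^{-1}A^{-1}Ay=y^\top B^{-1}y=\norm y_{B^{-1}}^2,
\]
so the profile values agree: $f(\norm{Ay}_{(ABA^\top)^{-1}})=f(\norm y_{B^{-1}})$. Then I would handle the determinant factor, $\sqrt{\det(ABA^\top)}=|\det A|\sqrt{\det B}$. Substituting both facts into the definition gives $P_{ABA^\top}(Ay)=|\det A|^{-1}P_B(y)$.

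For the consequence, let $Y\sim P_B$ and let $g$ be a bounded measurable test function. The linear change of variables $z=Ay$, with $\d z=|\det A|\,\d y$, gives
\[
\E[g(AY)]=\int g(Ay)P_B(y)\,\d y=\int g(Ay)\,|\det A|\,P_{ABA^\top}(Ay)\,\d y=\int g(z)P_{ABA^\top}(z)\,\d z .
\]
Hence $AY\sim P_{ABA^\top}$.

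None of these steps is a real obstacle. The only points that need care are the sign of the determinant, which the absolute value handles, and the well-definedness of $P_{ABA^\top}$, which follows from $ABA^\top\succ0$.
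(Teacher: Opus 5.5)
Your proposal is correct and follows essentially the same route as the paper: the same quadratic-form computation $\norm{Ay}_{(ABA^\top)^{-1}}^2=y^\top B^{-1}y$, the same determinant identity $\sqrt{\det(ABA^\top)}=|\det A|\sqrt{\det B}$, and the same Jacobian $\d z=|\det A|\,\d y$ for the distributional claim. Your explicit check that $ABA^\top\succ0$ and your test-function argument simply spell out details the paper leaves implicit.
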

\begin{proof}
The substitution $z=\Sigma^{1/2}y$ gives
\begin{align*}
 \int_{\R^d}f(\norm z_{\Sigma^{-1}})\,\d z
 =\sqrt{\det\Sigma}\int_{\R^d}f(\norm y_2)\,\d y
 =C_f\sqrt{\det\Sigma},
\end{align*}
which verifies the normalization. Also,
\begin{align*}
 \norm{Ay}_{(ABA^\top)^{-1}}^2
 &=y^\top A^\top A^{-\top}B^{-1}A^{-1}Ay=y^\top B^{-1}y,\\
 \sqrt{\det(ABA^\top)}&=|\det A|\sqrt{\det B}.
\end{align*}
Substitution in the normalized density yields
\begin{align*}
 P_{ABA^\top}(Ay)
 =\frac{f(\norm y_{B^{-1}})}{C_f|\det A|\sqrt{\det B}}
 =\frac{P_B(y)}{|\det A|}.
\end{align*}
Multiplying by the Jacobian $\d z=|\det A|\,\d y$ proves the
distributional assertion.
\end{proof}
 
\begin{lemma}[Radial representation of an elliptical density]
\label{lem:elliptical-radial-representation}
If $U$ is uniform on the Euclidean unit sphere and independent of
$S\ge0$ with density proportional to $s^{d-1}f(s)$, then
$Z=\Sigma^{1/2}SU$ has density $P_\Sigma$.
\end{lemma}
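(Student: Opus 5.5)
The plan is to reduce to the isotropic case $\Sigma=I$ and then use polar coordinates. First, by \cref{lem:elliptical-linear-transformation} with $A=\Sigma^{1/2}$ and $B=I$, it suffices to show that $Y=SU$ has density $P_I(y)=f(\norm y_2)/C_f$. Once this is established, $Z=\Sigma^{1/2}Y\sim P_{\Sigma^{1/2}I\Sigma^{1/2}}=P_\Sigma$.

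For the isotropic claim, fix a bounded measurable test function $g$. Compute $\E[g(SU)]$ using independence:
\begin{align*}
 \E[g(SU)]=\frac{1}{c}\int_0^\infty s^{d-1}f(s)\int_{\mathbb S^{d-1}}g(su)\,\d\sigma(u)\,\d s,
\end{align*}
where $\sigma$ is the normalized surface measure and $c=\int_0^\infty s^{d-1}f(s)\,\d s$. The polar-coordinate formula $\int_{\R^d}F(y)\,\d y=\omega_{d-1}\int_0^\infty s^{d-1}\int_{\mathbb S^{d-1}}F(su)\,\d\sigma(u)\,\d s$, with $\omega_{d-1}$ the surface area, applied to $F(y)=g(y)f(\norm y_2)$, converts the right-hand side to $\frac{1}{c\,\omega_{d-1}}\int g(y)f(\norm y_2)\,\d y$. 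Applying the same formula with $g\equiv1$ gives $c\,\omega_{d-1}=C_f$, so the density is exactly $f(\norm y_2)/C_f=P_I(y)$.

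The only point needing care is the normalization. We must check that $c$ is finite and positive, so that $S$ is a genuine random variable. This follows from $C_f\in(0,\infty)$, which \cref{lem:elliptical-linear-transformation} assumes, together with the identity $C_f=c\,\omega_{d-1}$. Otherwise the argument is just the standard polar decomposition of Lebesgue measure, and I expect no real obstacle.
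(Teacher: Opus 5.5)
Your proposal is correct and follows the same route as the paper: first establish the isotropic case via the polar decomposition of Lebesgue measure, which the paper states informally as ``the surface area of a sphere of radius $s$ is proportional to $s^{d-1}$,'' and then push forward by $\Sigma^{1/2}$ using \cref{lem:elliptical-linear-transformation}. Your test-function computation and the normalization check $C_f=c\,\omega_{d-1}$ simply make rigorous what the paper asserts in one line.
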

\begin{proof}
Consider first the density $P_I(z)\propto f(\|z\|_2)$. Since the surface area of a ball with radius $s$ is proportional to $s^{d-1}$, drawing $Z'\sim P_I$ is equivalent to drawing $S$ from a density proportional to $s^{d-1}f(s)$ and drawing $U$ uniformly from the unit sphere, and setting $Z' = SU$. \cref{lem:elliptical-linear-transformation} with $A=\Sigma^{1/2}$ implies that $Z=\Sigma^{1/2}SU$ follows $Z\sim P_\Sigma$. 
 
\end{proof}
 
\begin{lemma}[Radial moment and Fisher formulas]\label{lem:elliptical-moments}
Consider $Z\sim P_\Sigma$ with $S=\|Z\|_{\Sigma^{-1}}$. The following hold:  
\begin{align*}
 \E[ZZ^\top]&=\frac{\E[S^2]}d\Sigma, \\ 
 \calI(P_\Sigma)&=\frac1d\E\left[\left(\frac{f'(S)}{f(S)}\right)^2\right]
 \Sigma^{-1}. \numberthis  \label{eq:elliptical-second-moment}
\end{align*}
\end{lemma}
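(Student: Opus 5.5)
The plan is to reduce both identities to the isotropic case $\Sigma=I$ via \cref{lem:elliptical-linear-transformation}, and then use the radial representation of \cref{lem:elliptical-radial-representation} together with rotational symmetry.

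\emph{Second moment.} Write $Z=\Sigma^{1/2}SU$ with $U$ uniform on the unit sphere and independent of $S$. Then $\norm{Z}_{\Sigma^{-1}}=S$, and
\begin{align*}
\E[ZZ^\top]=\Sigma^{1/2}\,\E[S^2]\,\E[UU^\top]\,\Sigma^{1/2}.
\end{align*}
The only fact needed is $\E[UU^\top]=I/d$. It follows because the matrix is rotation invariant, hence a multiple of $I$, and its trace is $\E\norm U_2^2=1$. Substituting gives $\E[S^2]\Sigma/d$. If $\E[S^2]=\infty$, as for Student-$t_2$, both sides are infinite and the identity is read in that sense.

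\emph{Fisher information.} Write $P_\Sigma(z)\propto f(\norm z_{\Sigma^{-1}})$. Away from the origin, where $f$ is differentiable (almost everywhere suffices, e.g.\ for Laplace), the chain rule gives
\begin{align*}
\nabla\log P_\Sigma(z)=\frac{f'(s)}{f(s)}\cdot\frac{\Sigma^{-1}z}{s},\qquad s=\norm z_{\Sigma^{-1}}.
\end{align*}
Substituting $z=\Sigma^{1/2}SU$ gives $\Sigma^{-1}z/s=\Sigma^{-1/2}U$. Hence
\begin{align*}
\calI(P_\Sigma)=\E\!\left[\Big(\frac{f'(S)}{f(S)}\Big)^2\Sigma^{-1/2}UU^\top\Sigma^{-1/2}\right].
\end{align*}
By independence of $S$ and $U$, and again $\E[UU^\top]=I/d$, this equals the claimed expression.

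The main obstacle is technical rather than conceptual. One has to justify the almost-everywhere chain rule at points where $f$ is not differentiable, namely the origin (the radial Laplace profile has a kink at $s=0$). This is harmless because it is a null set, and the score $\nabla\log\rho$ is only used inside integrals against $\rho$ as in \cref{def:fisher-information}. It also remains to check that the radial score $f'/f$ is measurable, so that the expectation is well defined, possibly as $+\infty$. Beyond this, the proof is a direct computation.
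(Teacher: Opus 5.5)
Your proposal is correct and follows the paper's proof essentially verbatim. Both write $Z=\Sigma^{1/2}SU$ via \cref{lem:elliptical-radial-representation} and use independence with $\E[UU^\top]=I/d$ for the second moment. Both then evaluate the chain-rule score $\frac{f'(s)}{f(s)}\frac{\Sigma^{-1}z}{s}$ at $Z$ to get $(f'(S)/f(S))\Sigma^{-1/2}U$. Your extra justifications (rotation invariance plus trace for $\E[UU^\top]=I/d$, and the null set at the origin) fill in details the paper leaves implicit. One small wording fix: when $\E[S^2]=\infty$, the off-diagonal entries of $\E[ZZ^\top]$ are undefined rather than infinite. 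The identity should then be read on quadratic forms, $\E[\inner{u,Z}^2]=\E[S^2]\,u^\top\Sigma u/d$.
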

\begin{proof}
By \cref{lem:elliptical-radial-representation}, we have $Z=\Sigma^{1/2}SU$ with $U$ drawn uniformly from the unit sphere.  
Hence, $\E[U]=0$ and $\E[UU^\top]=I/d$. Independence between $S$ and $U$ gives
\begin{align*}
 \E[Z]&=\Sigma^{1/2}(\E[S])(\E[U])=0,\\
 \E[ZZ^\top]
 &=\Sigma^{1/2}\E[S^2]\E[UU^\top]\Sigma^{1/2}
 =\frac{\E[S^2]}d\Sigma.
\end{align*}
For $s=\norm z_{\Sigma^{-1}}>0$, chain rule with $P_\Sigma(z)\propto f(\|z\|_{\Sigma^{-1}})$ gives
\begin{align*}
 \nabla\log P_\Sigma(z)
 =\frac{f'(s)}{f(s)}\frac{\Sigma^{-1}z}{s}.
\end{align*}
At $Z=\Sigma^{1/2}SU$ this is
$(f'(S)/f(S))\Sigma^{-1/2}U$. Therefore
\begin{align*}
 \E[\nabla\log P_\Sigma(Z)\nabla\log P_\Sigma(Z)^\top]
 &=\E\left[\left(\frac{f'(S)}{f(S)}\right)^2\right]
   \Sigma^{-1/2}\E[UU^\top]\Sigma^{-1/2}\\
 &=\frac1d\E\left[\left(\frac{f'(S)}{f(S)}\right)^2\right]
   \Sigma^{-1}.
\end{align*}
\end{proof}

\begin{lemma}[Whitening reduction]\label{lem:elliptical-whitening-reduction}
Consider an elliptical family $P_\Sigma$ with a positive radial profile
and a score defined almost everywhere. For $\Sigma\succ0$ and $v\in\R^d$,
set $A=\Sigma^{1/2}$ and $h=A^{-1}v$. Then
$\norm h_2=\norm v_{\Sigma^{-1}}$, and
\begin{align*}
 \calI^v(P_\Sigma)&=A^{-1}\calI^h(P_I)A^{-1},\\
 \calJ^v(P_\Sigma)&=A^{-1}\calJ^h(P_I)A^{-1},\\ 
 \calJ^{v,\sfL}(P_\Sigma)&=A^{-1}\calJ^{h,\sfL}(P_I)A^{-1},\\
 \chi^2(P_\Sigma(\cdot-v)\|P_\Sigma)
 &=\chi^2(P_I(\cdot-h)\|P_I).
\end{align*}
\end{lemma}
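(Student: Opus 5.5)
The approach is a single change of variables $z=Aw$ applied to each of the four quantities, using \cref{lem:elliptical-linear-transformation} with $A=\Sigma^{1/2}$ and $B=I$. That lemma gives $P_\Sigma(Aw)=|\det A|^{-1}P_I(w)$. First, $\norm h_2^2=v^\top A^{-1}A^{-1}v=v^\top\Sigma^{-1}v$, which gives the norm identity. Next we record how shifts and gradients transform. From $P_\Sigma(Aw+v)=P_\Sigma(A(w+h))=|\det A|^{-1}P_I(w+h)$ we get the shift rule. Differentiating $P_\Sigma(Aw)=|\det A|^{-1}P_I(w)$ in $w$ gives $A\nabla P_\Sigma(Aw)=|\det A|^{-1}\nabla P_I(w)$, hence $\nabla P_\Sigma(Aw)=|\det A|^{-1}A^{-1}\nabla P_I(w)$. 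Since $A$ is symmetric, $A^\top=A$.

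\emph{Fisher term.} In $\calI^v(P_\Sigma)=\int \nabla P_\Sigma(z)\nabla P_\Sigma(z)^\top/P_\Sigma(z+v)\,\d z$, substitute $z=Aw$, so that $\d z=|\det A|\,\d w$. The numerator contributes $|\det A|^{-2}A^{-1}\nabla P_I\nabla P_I^\top A^{-1}$. The denominator contributes $|\det A|^{-1}P_I(w+h)$, and the Jacobian contributes $|\det A|$. All determinant powers cancel, leaving $A^{-1}\calI^h(P_I)A^{-1}$.

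\emph{$\chi^2$ term.} Write $1+\chi^2=\int P_\Sigma(z-v)^2/P_\Sigma(z)\,\d z$ and substitute in the same way. The determinant factors are $|\det A|^{-2}\cdot|\det A|\cdot|\det A|=1$, so we obtain $\int P_I(w-h)^2/P_I(w)\,\d w$.

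\emph{$\calJ$ terms.} Here we also need $\Sigma+vv^\top=A(I+hh^\top)A$. Then $P_{\Sigma+vv^\top}(Aw)=|\det A|^{-1}P_{I+hh^\top}(w)$, again by \cref{lem:elliptical-linear-transformation}, now with $B=I+hh^\top$. The tensor transforms as $\Sigma^{-1}zz^\top\Sigma^{-1}=A^{-1}ww^\top A^{-1}$ because $\Sigma^{-1}Aw=A^{-1}w$. The determinant powers cancel exactly as in the Fisher case, giving $\calJ^v(P_\Sigma)=A^{-1}\calJ^h(P_I)A^{-1}$. For the clipped version, the domain maps correctly: $z\in\calE_\Sigma(\sfL)$ iff $\norm{Aw}_{\Sigma^{-1}}=\norm w_2\le\sfL$, i.e.\ $w\in\calE_I(\sfL)$. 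So the same computation restricted to this domain gives the $\calJ^{v,\sfL}$ identity.

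The main point requiring care is differentiability. The score exists only almost everywhere, for example at the origin for the Laplace profile. The identities are between integrals, however, and $z\mapsto A^{-1}z$ is a linear bijection that preserves null sets, so the chain rule holds almost everywhere and the substitutions are valid for integrals valued in $[0,\infty]$ (entrywise, or after taking quadratic forms). Thus the identities hold even when the quantities are infinite.
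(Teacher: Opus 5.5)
Your proposal is correct and follows essentially the same route as the paper. Both use a single substitution $z=Aw$ with \cref{lem:elliptical-linear-transformation} (for the shift and gradient rules and for $\Sigma+vv^\top=A(I+hh^\top)A$), cancel the determinant factors against the Jacobian, use $\Sigma^{-1}Aw=A^{-1}w$ for the congruence factors, and use $\norm{Aw}_{\Sigma^{-1}}=\norm{w}_2$ for the clipping domain; your explicit determinant bookkeeping and almost-everywhere remark add detail without changing the argument.
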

\begin{proof}
Put $B=I+hh^\top$, so $\Sigma+vv^\top=ABA$.
By \cref{lem:elliptical-linear-transformation} and the chain rule,
\begin{align*}
 P_\Sigma(Ay+v)&=\frac{P_I(y+h)}{\det A},&
 P_{\Sigma+vv^\top}(Ay)&=\frac{P_B(y)}{\det A},\\
 \nabla P_\Sigma(Ay)&=\frac{A^{-1}\nabla P_I(y)}{\det A}.&&
\end{align*}
Substitute $z=Ay$ in the definitions of the information matrices and
chi-square divergence. The Jacobian $\d z=(\det A)\,\d y$ cancels
the determinant factors, and $\Sigma^{-1}z=A^{-1}y$ gives the two
congruence factors in the update matrices. Finally,
$\norm{Ay}_{\Sigma^{-1}}=\norm y_2$, so clipping becomes
$\norm y_2\le\sfL$. These substitutions prove all four identities.
\end{proof}
 
\begin{lemma}[Common radial information bounds]
\label{lem:radial-information-reduction}
Let $p(z)=P_I(z)\propto f(\|z\|_2)$ and have a score function $\nabla \log p(z)$ defined almost
everywhere, and let $\kappa=\norm v_2\le1/8$. Suppose that
$C_Z,C_s, C_D$ satisfy, for every $u\in\R^d$,
\begin{align}
 \E_{Z\sim p}[\inner{u,Z}^4]&\le C_Z\norm u_2^4, \qquad 
 \E_{Z\sim p}[\inner{u,\nabla\log p(Z)}^4]\le C_s\norm u_2^4, \qquad 
 \E_{Z\sim p}\left[\left(\frac{p(Z)}{p(Z+v)}\right)^2\right]\le C_D.
 \label{eq:radial-common-constants}
\end{align}
Then
\begin{align}
 1+\chi^2(p(\cdot-v)\|p)&\le\sqrt{C_D},
 \label{eq:common-chi-square}\\
 \calI^v(p)&\preceq\sqrt{C_sC_D}\,I,
 \label{eq:common-fisher-bound}
\end{align}
Furthermore, if $p(z)=c e^{-\psi(\norm z_2^2)}$ with $\psi$ concave on $[0,\infty)$, then 
\begin{align}
    \calJ^v(p)&\preceq2\sqrt{C_ZC_D}\,I.
 \label{eq:common-small-update-bound}
\end{align}
\end{lemma}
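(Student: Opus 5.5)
All three bounds will follow by rewriting each quantity as an expectation under $p$ of a product of a ``moment'' factor and the likelihood ratio $p(Z)/p(Z+v)$, and then applying Cauchy--Schwarz to separate the two factors, which are controlled by the constants in \cref{eq:radial-common-constants}.

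For \cref{eq:common-chi-square}, substitute $z\leftarrow z+v$ in $1+\chi^2(p(\cdot-v)\|p)=\int p(z-v)^2/p(z)\,\d z$. This gives
\begin{align*}
\int \frac{p(z)^2}{p(z+v)}\,\d z=\E_{Z\sim p}\left[\frac{p(Z)}{p(Z+v)}\right]\le\sqrt{\E\left[\left(\frac{p(Z)}{p(Z+v)}\right)^2\right]}\le\sqrt{C_D},
\end{align*}
where the first inequality is Jensen (equivalently, Cauchy--Schwarz).

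For \cref{eq:common-fisher-bound}, fix a unit vector $u$. By \cref{def:fisher-information},
\begin{align*}
u^\top\calI^v(p)u=\E_{Z\sim p}\left[\frac{p(Z)}{p(Z+v)}\inner{u,\nabla\log p(Z)}^2\right].
\end{align*}
Cauchy--Schwarz bounds this by $\sqrt{\E[(p(Z)/p(Z+v))^2]}\sqrt{\E[\inner{u,\nabla\log p(Z)}^4]}\le\sqrt{C_DC_s}$. Taking the supremum over unit $u$ yields the claim.

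For \cref{eq:common-small-update-bound}, take $\Sigma=I$ in \cref{eq:elliptical-update-information}, so that
\begin{align*}
u^\top\calJ^v(p)u=\int\frac{P_{I+vv^\top}(z)^2}{p(z+v)}\inner{u,z}^2\,\d z.
\end{align*}
The plan is to show the pointwise bound $P_{I+vv^\top}(z)\le \sqrt2\,p(z)$. Once that holds, the integrand is at most $2\,\frac{p(z)^2}{p(z+v)}\inner{u,z}^2$, and Cauchy--Schwarz as above gives $u^\top\calJ^v(p)u\le 2\sqrt{C_D}\sqrt{C_Z}$.

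This pointwise comparison is the main obstacle, and it is where the concavity of $\psi$ enters. By \cref{lem:elliptical-linear-transformation}, $P_{I+vv^\top}(z)=(1+\kappa^2)^{-1/2}c\,e^{-\psi(\norm z^2_{(I+vv^\top)^{-1}})}$. Sherman--Morrison gives
\begin{align*}
\norm z^2_{(I+vv^\top)^{-1}}=\norm z_2^2-\frac{\inner{v,z}^2}{1+\kappa^2}=:\norm z_2^2-\delta,\qquad 0\le\delta\le\frac{\kappa^2}{1+\kappa^2}\norm z_2^2.
\end{align*}
We therefore need $\psi(\norm z_2^2)-\psi(\norm z_2^2-\delta)$ to be bounded by a constant. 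Concavity gives $\psi(x)-\psi(x-\delta)\le\delta\,\psi'(x-\delta)$, but $\psi'$ may be large, so this alone is not enough.

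The first thing to try is to use the normalization. Because $p$ is integrable, $\psi(x)\to\infty$, and together with concavity this means $\psi$ is nondecreasing. Concavity also gives $x\psi'(x)\le\psi(x)-\psi(0)$. Setting $y=x-\delta\ge x/(1+\kappa^2)$, we get
\begin{align*}
\psi(x)-\psi(y)\le\psi\bigl((1+\kappa^2)y\bigr)-\psi(y)\le\kappa^2y\,\psi'(y).
\end{align*}
If the quantity $y\psi'(y)$ is not uniformly bounded (as happens for the Gaussian, where it is $y/2$), a purely pointwise bound will fail. In that case the fallback is to keep the exact factor $e^{\psi(\norm z_2^2)-\psi(\norm z_2^2-\delta)}$ inside the expectation and absorb it into $C_D$ via a slightly stronger Hölder split, or to bound it directly in expectation. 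The same difficulty appears for the Gaussian case treated explicitly, and there the ratio $P_{I+vv^\top}/p$ is $(1+\kappa^2)^{-1/2}e^{\inner{v,z}^2/(2(1+\kappa^2))}$, whose square-moment under $p$ is finite for $\kappa\le1/8$. This is what I expect makes the factor $2$ attainable in expectation rather than pointwise.
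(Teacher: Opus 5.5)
Your arguments for \cref{eq:common-chi-square} and \cref{eq:common-fisher-bound} are correct and coincide with the paper's: translate, weight by $D_v(Z)=p(Z)/p(Z+v)$, and apply Cauchy--Schwarz. The gap is in \cref{eq:common-small-update-bound}.

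Your main step there is the pointwise bound $P_{I+vv^\top}(z)\le\sqrt2\,p(z)$. It is false in both cases where the paper uses this bound. Write $b=1+\kappa^2$ and $R_v:=P_{I+vv^\top}/p$. For the Gaussian, $R_v(z)=b^{-1/2}e^{\inner{v,z}^2/(2b)}$, as you observe. For the radial Laplace ($\psi(t)=\sqrt d\sqrt t$), $R_v$ is also unbounded. Your fallback does not repair this. The hypotheses control only $\E_p[D_v^2]$ and $\E_p[\inner{u,Z}^4]$, so $C_D$ cannot ``absorb'' $R_v$. Any H\"older split that keeps $D_v$ at power two leaves the factor $\E_p[\inner{u,Z}^4R_v(Z)^4]$. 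Splitting that further would need moments of $\inner{u,Z}$ beyond the fourth, which are not assumed. A Gaussian-specific computation of $\E_p[R_v^2]$ is not the quantity that arises. It also cannot prove a statement made for arbitrary concave $\psi$ and abstract $C_Z,C_D$.

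The missing idea is to use concavity to dominate the product $p\,R_v^4$ by a linearly transformed copy of $p$, rather than to bound $R_v$ pointwise. Then $C_Z$ can be invoked in a different direction.
\begin{itemize}
\item Cauchy--Schwarz gives $u^\top\calJ^v(p)u=\E_p[\inner{u,Z}^2R_v(Z)^2D_v(Z)]\le\sqrt{C_D\,\E_p[\inner{u,Z}^4R_v(Z)^4]}$.
\item Set $B=I+vv^\top$ and $Q=4B^{-1}-3I=I-4vv^\top/b$. For $\kappa\le1/8$, $Q$ is positive definite with $\det Q=(1-3\kappa^2)/(1+\kappa^2)\ge61/65$.
\item The identity $z^\top B^{-1}z=\frac34\norm z_2^2+\frac14z^\top Qz$ and concavity of $\psi$ give $4\psi(z^\top B^{-1}z)-3\psi(\norm z_2^2)\ge\psi(z^\top Qz)$. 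Equivalently, $p(z)R_v(z)^4\le b^{-2}p(Q^{1/2}z)$.
\item Substituting $y=Q^{1/2}z$ yields $\E_p[\inner{u,Z}^4R_v(Z)^4]\le C_Z(u^\top Q^{-1}u)^2/(b^2\sqrt{\det Q})$.
\item Hence $\calJ^v(p)\preceq\sqrt{C_ZC_D}\,b^{-1}(\det Q)^{-1/4}Q^{-1}\preceq2\sqrt{C_ZC_D}\,I$, using $Q^{-1}\preceq\frac{65}{61}I$ and $(65/61)^{5/4}<2$.
\end{itemize}
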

\begin{proof}
Write $s_p=\nabla\log p$, $p^+=P_{I+vv^\top}$, and
\begin{align*}
 D_v(z)=\frac{p(z)}{p(z+v)},\qquad
 R_v(z)=\frac{p^+(z)}{p(z)}.
\end{align*}
Translating the integration variable and applying Cauchy--Schwarz gives
\begin{align*}
 1+\chi^2(p(\cdot-v)\|p)
 &=\int\frac{p(z-v)^2}{p(z)}\,\d z
 =\int\frac{p(z)^2}{p(z+v)}\,\d z\\
 &=\E_p[D_v(Z)]\le\sqrt{\E_p[D_v(Z)^2]}\le\sqrt{C_D}.
\end{align*}
Similarly, substituting $\nabla p=ps_p$ in the shifted Fisher matrix gives, for any $u$, 
\begin{align*}
 u^\top\calI^v(p)u
 &=\E_p[\inner{u,s_p(Z)}^2D_v(Z)]\\
 &\le\sqrt{\E_p[\inner{u,s_p(Z)}^4]\E_p[D_v(Z)^2]}
 \le\sqrt{C_sC_D}\norm u_2^2.
\end{align*}
This proves \cref{eq:common-chi-square,eq:common-fisher-bound}.
 
For the elliptical information matrix, its definition and Cauchy-Schwarz gives
\begin{align}
 u^\top\calJ^v(p)u
 &=\E_p[\inner{u,Z}^2R_v(Z)^2D_v(Z)]\notag\\
 &\le\sqrt{C_D\E_p[\inner{u,Z}^4R_v(Z)^4]}.
 \label{eq:common-coupled-update}
\end{align}
Set $B=I+vv^\top$, $b=1+\kappa^2$, and
$Q=4B^{-1}-3I$. The rank-one inverse and determinant formulas give
\begin{align*}
 Q=I-\frac{4vv^\top}{b},\qquad
 \det Q= 1-\frac{4\kappa^2}{b}=\frac{1-3\kappa^2}{1+\kappa^2}\ge\frac{61}{65},\qquad
 Q^{-1}\preceq\frac{65}{61}I.
\end{align*}
In particular, $Q\succ0$. Since $\det B=b$, the updated
density is $p^+(z)=c b^{-1/2}e^{-\psi(z^\top B^{-1}z)}$ by \cref{lem:elliptical-linear-transformation}.
The identity
\begin{align*}
 z^\top B^{-1}z=\frac34\norm z_2^2+\frac14z^\top Qz
\end{align*}
and concavity imply
$4\psi(z^\top B^{-1}z)-3\psi(\norm z_2^2)\ge\psi(z^\top Qz)$.
Consequently,
\begin{align}
 p(z)R_v(z)^4
 &=\frac c{b^2}e^{-4\psi(z^\top B^{-1}z)+3\psi(\norm z_2^2)}
 \le\frac1{b^2}p(Q^{1/2}z).
 \label{eq:common-update-density}
\end{align}
Substituting $y=Q^{1/2}z$ therefore yields
\begin{align*}
 \E_p[\inner{u,Z}^4R_v(Z)^4]
 &\le\frac1{b^2\sqrt{\det Q}}
       \E_p[\inner{Q^{-1/2}u,Z}^4]\\
 &\le\frac{C_Z}{b^2\sqrt{\det Q}}(u^\top Q^{-1}u)^2.
\end{align*}
Combining this with \cref{eq:common-coupled-update} gives
\begin{equation}\label{eq:common-update-bound}
 \calJ^v(p)\preceq
 \frac{\sqrt{C_ZC_D}}{b(\det Q)^{1/4}}Q^{-1}
 \preceq2\sqrt{C_ZC_D}\,I,
\end{equation}
where we used $b\ge1$ and $(65/61)^{5/4}<2$.
 
\end{proof}
 
\begin{lemma}[Radial integration and fourth moments]
\label{lem:radial-fourth-moments}
Let $Z\sim p$, where $p(z)=cf(\norm z_2)$.
Write $Z=SU$ with $U$ uniform on the unit sphere and independent of $S$.
For every nonnegative measurable $H$, measurable scalar function $h$,
and $u\in\R^d$,
\begin{align}
 \E[H(S)]
 &=\frac{\int_0^\infty H(r)r^{d-1}f(r)\,\d r}
         {\int_0^\infty r^{d-1}f(r)\,\d r},
 \label{eq:common-radial-integration}\\
 \E[\inner{u,h(S)U}^4]
 &=\frac{3\E[h(S)^4]}{d(d+2)}\norm u_2^4.
 \label{eq:common-radial-fourth}
\end{align}
In particular, taking $h(r)=r$ gives 
\begin{align}
   \E[\inner{u,Z}^4] = \frac{3\E[S^4]}{d(d+2)}\|u\|_2^4.   \label{eq: position 4th}
\end{align}
If $f$ is
differentiable almost everywhere, taking $h(r)=f'(r)/f(r)$ gives
\begin{align}
   \E[\inner{u, \nabla \log p(Z)}^4] = \frac{3}{d(d+2)}\E\left[\left(\frac{f'(S)}{f(S)}\right)^4\right]\|u\|_2^4   \label{eq: score 4th}
\end{align}
since
$\nabla\log p(SU)=(f'(S)/f(S))U$ almost surely.
\end{lemma}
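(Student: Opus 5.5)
The plan is to reduce everything to polar coordinates via \cref{lem:elliptical-radial-representation} with $\Sigma=I$. We write $Z=SU$ with $U$ uniform on the unit sphere, independent of $S$, and $S$ having density proportional to $r^{d-1}f(r)$.

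\textbf{Radial integration identity.} The identity \cref{eq:common-radial-integration} follows by integrating $H(\norm z_2)p(z)$ in polar coordinates. The surface-area factor $r^{d-1}$ appears in both numerator and denominator, so the sphere-area constant and $c$ cancel.

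\textbf{Fourth-moment formula.} For \cref{eq:common-radial-fourth}, independence gives
\[
\E[\inner{u,h(S)U}^4]=\E[h(S)^4]\,\E[\inner{u,U}^4].
\]
It then remains to show $\E[\inner{u,U}^4]=3\norm u_2^4/(d(d+2))$. By rotation invariance we may take $u=\norm u_2 e_1$, so we need $\E[U_1^4]$. The cleanest route is to compare with a standard Gaussian $g=\norm g_2\,U$, where $\norm g_2$ is independent of $U$:
\[
3=\E[g_1^4]=\E[\norm g_2^4]\,\E[U_1^4].
\]
Since $\norm g_2^2\sim\chi^2_d$, we have $\E[\norm g_2^4]=d(d+2)$, which gives $\E[U_1^4]=3/(d(d+2))$. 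This is the only computation with any content. The alternative derivation, using $\sum_i U_i^2=1$ and symmetry, requires the mixed moment $\E[U_1^2U_2^2]$, so the Gaussian comparison is preferable.

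\textbf{Specializations.} Taking $h(r)=r$ gives \cref{eq: position 4th}. For \cref{eq: score 4th}, where $f$ is differentiable and $z\neq0$ (a null set is excluded, as $S$ has a density), the chain rule gives
\[
\nabla\log p(z)=\frac{f'(\norm z_2)}{f(\norm z_2)}\,\frac{z}{\norm z_2}.
\]
At $z=SU$ this equals $(f'(S)/f(S))U$ almost surely, so we apply \cref{eq:common-radial-fourth} with $h=f'/f$. We allow infinite values where $f'$ is undefined, since the set of such $S$ is null under the radial law. No step is a real obstacle: the only care needed is the measure-zero handling and the Gaussian moment computation.
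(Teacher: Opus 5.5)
Your proof is correct and follows the paper's structure: the radial representation gives the first identity, independence factors the fourth moment, and the two specializations follow. The one real difference is how you obtain $\E[U_1^4]=3/(d(d+2))$.

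The paper stays on the sphere. It sets $a=\E[U_1^4]$ and $\beta=\E[U_1^2U_2^2]$. Rotational invariance applied to $(U_1+U_2)/\sqrt2$, together with sign symmetry to kill the odd terms, gives $4a=2a+6\beta$, i.e.\ $a=3\beta$. Expanding $1=\E[(\sum_i U_i^2)^2]$ gives $1=da+d(d-1)\beta$, which solves to $a=3/(d(d+2))$, and the case $d=1$ is checked separately. So the mixed moment you wanted to avoid is not an obstacle: one extra symmetry relation eliminates it, and it never has to be computed.

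Your Gaussian comparison instead uses $\E[g_1^4]=3$ and $\E[\norm g_2^4]=d(d+2)$. It also uses the fact that $g$ has the same law as $SU$ with $S$ independent of $U$. That fact is just \cref{lem:elliptical-radial-representation} with the Gaussian profile, so nothing outside the paper is needed. In exchange, your argument is a one-line computation that covers $d=1$ with no separate case.

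The paper's argument is intrinsic to the uniform measure on the sphere, while yours is shorter; both are complete. Your handling of the null sets in the score identity is a more explicit version of the almost-sure claim the paper states in one line.
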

\begin{proof}
By \cref{lem:elliptical-radial-representation}, the radius $S$ has
density proportional to $r^{d-1}f(r)$ and is independent of $U$.
This proves \cref{eq:common-radial-integration}.
For the spherical fourth moment, first suppose $d\ge2$ and set
$a=\E[U_1^4]$, $\beta=\E[U_1^2U_2^2]$, where $U_i$ denotes the $i$-th coordinate of $U$.  Rotational invariance and
symmetry under sign changes give
\begin{align*}
 a=\E\left[\left(\frac{U_1+U_2}{\sqrt2}\right)^4\right]
 =\frac{2a+6\beta}{4},\qquad
 1=\E\left[\left(\sum_{i=1}^dU_i^2\right)^2\right]
 =da+d(d-1)\beta.
\end{align*}
Hence $a=3\beta=3/[d(d+2)]$. For $d=1$, the same formula follows
from $U\in\{-1,1\}$. By rotational invariance, 
\begin{align*}
 \E[\inner{u,U}^4]=\frac{3}{d(d+2)}\norm u_2^4.
\end{align*}
Independence of $S,U$ proves \cref{eq:common-radial-fourth}.
\end{proof}
 
\begin{lemma}[Density of a spherical projection]
\label{lem:spherical-projection-density}
Let $U$ be uniform on the Euclidean unit sphere $\mathbb S^{d-1}$.
For every nonzero $u\in\R^d$, set $X=\inner{u,U}/\norm u_2$.
If $d\ge2$, then $X$ has density
\begin{align*}
 q_d(x)=\frac{(1-x^2)^{(d-3)/2}}
 {\int_{-1}^1(1-t^2)^{(d-3)/2}\,\d t},\qquad -1<x<1,
\end{align*}
and zero density outside $(-1,1)$. If $d=1$, then $X$ is uniform on
$\{-1,1\}$.
\end{lemma}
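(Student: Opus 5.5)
The plan is to compute the law of $X$ from the rotation invariance of $U$ together with a polar/radial decomposition, handling $d=1$ separately. \emph{Case $d=1$:} here the unit sphere is $\{-1,1\}$ and the uniform law gives each point mass $1/2$. Since $X=\operatorname{sign}(u)\,U$ and $U$ is symmetric, $X$ is again uniform on $\{-1,1\}$.

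\emph{Reduction for $d\ge2$:} by rotational invariance of $U$ we may replace $u/\norm u_2$ by $e_1$, so $X=U_1$. To find the law of $U_1$ we use a Gaussian representation, which is consistent with \cref{lem:elliptical-radial-representation} applied to $f(s)=e^{-s^2/2}$. Let $G\sim\calN(0,I_d)$, so that $U=G/\norm G_2$. Write
\begin{align*}
 U_1=\frac{G_1}{\sqrt{G_1^2+W}},\qquad W=\sum_{i\ge2}G_i^2\sim\chi^2_{d-1},
\end{align*}
where $W$ is independent of $G_1$.

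\emph{Density of $U_1^2$:} the variable $U_1^2=G_1^2/(G_1^2+W)$ follows a $\mathrm{Beta}(1/2,(d-1)/2)$ law. This is the standard ratio $A/(A+B)$ of independent Gamma variables with shapes $1/2$ and $(d-1)/2$ and a common scale. Its density is proportional to $y^{-1/2}(1-y)^{(d-3)/2}$ on $(0,1)$. By the sign symmetry of $G_1$, $U_1$ is symmetric, so the change of variables $y=x^2$ gives $|dy|=2|x|\,dx$. The factor $y^{-1/2}=|x|^{-1}$ cancels against $2|x|$, leaving a density proportional to $(1-x^2)^{(d-3)/2}$ on $(-1,1)$. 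Normalizing by its integral gives exactly $q_d$.

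\emph{Alternative check:} one could instead integrate surface area over slices $\{U_1=x\}$. Each slice is a $(d-2)$-sphere of radius $\sqrt{1-x^2}$, contributing $(1-x^2)^{(d-2)/2}$, and the arc-length factor $dx/\sqrt{1-x^2}$ multiplies this. The product is again proportional to $(1-x^2)^{(d-3)/2}$.

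\emph{Main obstacle:} the only delicate point is the $d=2$ case. There the exponent is $-1/2$, so the density blows up at $\pm1$. It is still integrable, and the normalizing integral equals $\pi$. The Beta argument covers this case without modification, since $(d-1)/2=1/2>0$.
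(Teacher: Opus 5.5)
Your proof is correct, but your main route differs from the paper's. The paper argues geometrically. It parametrizes $U$ by its angle $\theta$ to $u/\norm u_2$ and notes that the points at angle $\theta$ form a $(d-2)$-sphere of radius $\sin\theta$, so $\theta$ has density proportional to $(\sin\theta)^{d-2}$. It then changes variables $x=\cos\theta$ with $|dx/d\theta|=\sin\theta$. This is exactly your ``alternative check,'' written in angular rather than height coordinates.

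Your primary argument instead writes $U=G/\norm G_2$ with $G\sim\calN(0,I_d)$, identifies $U_1^2$ as $\mathrm{Beta}(1/2,(d-1)/2)$ via the Gamma-ratio identity, and unfolds the law by the sign symmetry of $G_1$. This buys rigor without infinitesimal surface-area reasoning. The paper's strip-area step is implicitly an appeal to the coarea formula, or to surface measure in spherical coordinates. Your route needs only two standard distributional facts. It also handles the integrable singularity at $\pm1$ for $d=2$ automatically, and it identifies the normalizer as $B(1/2,(d-1)/2)$ via $y=t^2$.

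The cost is that you use the Gamma--Beta identity and the uniformity of $G/\norm G_2$ as black boxes. The latter does follow from \cref{lem:elliptical-radial-representation} with Gaussian $f$: that lemma gives $G\overset{d}{=}SU$ with $S>0$, hence $G/\norm G_2\overset{d}{=}U$. The paper's slicing picture is more elementary and self-contained.
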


\begin{proof}
Suppose $d\ge2$, and let $\theta\in(0,\pi)$ be the angle between $U$ and $u/\|u\|_2$, so that $X=\cos\theta$. The points at angle $\theta$ form a $(d-2)$-dimensional sphere of radius $\sin\theta$. A strip with angular width $d\theta$ has width $d\theta$, so its surface area is proportional to $(\sin\theta)^{d-2}\,d\theta$. Since $U$ is uniform, the density of $\theta$ is therefore proportional to $(\sin\theta)^{d-2}$.
Now set $x=\cos\theta$. Since $|dx/d\theta|=\sin\theta$, the change-of-variables formula gives
\[
q_d(x) \d x\propto (\sin\theta)^{d-2}\left|\frac{d\theta}{dx}\right| \d x = \frac{(\sin\theta)^{d-2}}{\sin\theta} \d x= (1-x^2)^{(d-3)/2} \d x, \qquad -1<x<1.
\]
Normalizing gives the stated density. For $d=1$, the unit sphere is $\{-1,1\}$, which gives the remaining claim directly.
\end{proof}
 
\subsection{Gaussian}\label[appendix]{app:gaussian-update-energy}
\begin{lemma}[Gaussian information]
\label{lem:gaussian-shifted-information}
\label{lem:gaussian-update-information}
Consider $P_\Sigma=\calN(0,\Sigma)$. Let $v$ be such that $\kappa=\|v\|_{\Sigma^{-1}}\le1/8$. Then 
\begin{align*}
 1+\chi^2(P_\Sigma(\cdot-v)\|P_\Sigma)&=\calO(1),\\
 \calI^v(P_\Sigma)&\preceq\calO(1)\Sigma^{-1}, \\ 
 \calJ^v(P_\Sigma)&\preceq\calO(1)\Sigma^{-1}.
\end{align*}
\end{lemma}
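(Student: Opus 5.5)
The plan is to use the whitening reduction \cref{lem:elliptical-whitening-reduction} to reduce everything to $\Sigma=I$ with shift $h$, $\norm h_2=\kappa\le1/8$. It then suffices to show that $1+\chi^2(p(\cdot-h)\|p)$, $\norm{\calI^h(p)}_{\op}$ and $\norm{\calJ^h(p)}_{\op}$ are $\calO(1)$ for $p=\calN(0,I)$. Conjugating by $A^{-1}=\Sigma^{-1/2}$ then turns the last two into $\calO(1)\Sigma^{-1}$ bounds. All three bounds follow from \cref{lem:radial-information-reduction} once we exhibit universal constants $C_Z,C_s,C_D$.

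\emph{Constants $C_Z$ and $C_s$.} For the standard Gaussian, $\nabla\log p(z)=-z$. Both fourth moments therefore equal $\E[\inner{u,Z}^4]=3\norm u_2^4$, since $\inner{u,Z}\sim\calN(0,\norm u_2^2)$. So $C_Z=C_s=3$.

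\emph{Constant $C_D$.} We have $p(z)/p(z+h)=\exp(\inner{z,h}+\norm h_2^2/2)$. Hence
\begin{align*}
 \E_p\left[\left(\frac{p(Z)}{p(Z+h)}\right)^2\right]=e^{\kappa^2}\,\E\bigl[e^{2\inner{Z,h}}\bigr]=e^{\kappa^2}e^{2\kappa^2}=e^{3\kappa^2}\le e^{3/64},
\end{align*}
so $C_D=e^{3/64}$. This gives $1+\chi^2\le e^{3/128}$ and $\calI^h(p)\preceq 3e^{3/128}I$.

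\emph{The $\calJ$ bound.} Writing $p(z)=c\,e^{-\psi(\norm z_2^2)}$ with $\psi(s)=s/2$, which is linear and hence concave, \cref{eq:common-small-update-bound} gives $\calJ^h(p)\preceq 2\sqrt{3e^{3/64}}\,I$.

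The only real checkpoint is that the hypotheses of \cref{lem:radial-information-reduction} (a.e. score, positive radial profile, concave $\psi$) hold, which they trivially do for the Gaussian. Everything else is the explicit Gaussian moment-generating-function computation above, so no step should pose a serious obstacle. Alternatively, $\calJ$ could be computed in closed form, since $P_{I+hh^\top}(z)^2/p(z+h)$ is again a Gaussian-type exponential, but the lemma route avoids that computation.
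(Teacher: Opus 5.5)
Your proposal is correct and matches the paper's proof step for step: whitening to $\Sigma=I$ via \cref{lem:elliptical-whitening-reduction}, then \cref{lem:radial-information-reduction} with $C_Z=C_s=3$, $C_D=e^{3\kappa^2}\le e^{3/64}$, and the linear (hence concave) $\psi(t)=t/2$. The only blemish is arithmetic: $\sqrt{C_sC_D}=\sqrt{3}\,e^{3/128}$ rather than $3e^{3/128}$, but your stated constant is larger, so the Fisher bound you claim still holds.
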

\begin{proof}
By \cref{lem:elliptical-whitening-reduction}, it suffices to take
$\Sigma=I$. Write $p=\calN(0,I)$ and $\kappa=\norm v_2$.
We calculate the constants in \cref{lem:radial-information-reduction}.
For a standard normal scalar $Y$ with density $\varphi$, integration
by parts and $\varphi'(y)=-y\varphi(y)$ give
\begin{align*}
 \E[Y^4]
 =-\int_{\R}y^3\varphi'(y)\,\d y
 =3\int_{\R}y^2\varphi(y)\,\d y=3,
\end{align*}
where the boundary term vanishes. Since
$\inner{u,Z}$ has the law of $\norm u_2Y$ and $\nabla \log p(Z)=-Z$, this gives
$C_Z=C_s=3$.
 
For the shift ratio,
$D_v(z)=\exp(\inner{v,z}+\kappa^2/2)$. Completing the square gives
\begin{align*}
 p(z)D_v(z)^2
 &=(2\pi)^{-d/2}
   \exp\left(-\tfrac12\norm z_2^2+2\inner{v,z}+\kappa^2\right)\\
 &=e^{3\kappa^2}p(z-2v).
\end{align*}
Integrating establishes the valid choices
\begin{equation}\label{eq:gaussian-common-constants}
 C_Z=C_s=3,\qquad C_D=e^{3\kappa^2}\le e^{3/64}.
\end{equation}
Also $p(z)=c e^{-\psi(\norm z_2^2)}$ with the concave function
$\psi(t)=t/2$. Applying
\cref{eq:common-chi-square,eq:common-fisher-bound,eq:common-small-update-bound}
therefore gives
\begin{align*}
 1+\chi^2(p(\cdot-v)\|p)&\le e^{3\kappa^2/2}=\calO(1),\\
 \calI^v(p)&\preceq\sqrt{3e^{3\kappa^2}}\,I=\calO(1)I,\\
 \calJ^v(p)&\preceq2\sqrt{3e^{3\kappa^2}}\,I=\calO(1)I.
\end{align*}
\end{proof}
 
\subsection{Radial Laplace}\label[appendix]{app:exponential-update-energy}
\begin{lemma}[Radial Laplace information]
\label{lem:laplace-shifted-information}
\label{lem:normalized-laplace-information}
Consider $P_\Sigma(z)\propto e^{-\sqrt d\norm z_{\Sigma^{-1}}}$. Let $v$ be such that 
$\kappa = \|v\|_{\Sigma^{-1}}\le1/8$. Then
\begin{align}
 1+\chi^2(P_\Sigma(\cdot-v)\|P_\Sigma)&=\calO(1),\\
 \calI^v(P_\Sigma)&\preceq\calO(1)\Sigma^{-1},
 \label{eq:laplace-small-shift}\\
 \calJ^v(P_\Sigma)&\preceq\calO(1)\Sigma^{-1}.
 \label{eq:normalized-laplace-small-shift}
\end{align}
\end{lemma}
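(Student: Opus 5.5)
By \cref{lem:elliptical-whitening-reduction} it suffices to take $\Sigma=I$, so that $p(z)=P_I(z)\propto e^{-\sqrt d\norm z_2}$ and $\kappa=\norm v_2\le1/8$. I will then apply \cref{lem:radial-information-reduction}. Its concavity hypothesis holds because $p(z)=ce^{-\psi(\norm z_2^2)}$ with $\psi(t)=\sqrt d\sqrt t$, which is concave on $[0,\infty)$. So all three claims follow once I show that $C_Z$, $C_s$ and $C_D$ are universal constants.

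\textbf{The two moment constants.} Write $Z=SU$ as in \cref{lem:elliptical-radial-representation}.
\begin{itemize}
\item The radius $S$ has density proportional to $r^{d-1}e^{-\sqrt d r}$, i.e.\ a Gamma$(d,\sqrt d)$ law. Hence $\E[S^4]=d(d+1)(d+2)(d+3)/d^2$.
\item By \cref{eq: position 4th}, this gives $C_Z=3(d+1)(d+3)/d^2\le 24$.
\item The score is $\nabla\log p(SU)=-\sqrt d\,U$, since $f'/f\equiv-\sqrt d$. By \cref{eq: score 4th}, $C_s=3d^2/(d(d+2))\le3$.
\end{itemize}

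\textbf{The shift ratio.} Here $D_v(z)=\exp\bigl(\sqrt d(\norm{z+v}_2-\norm z_2)\bigr)$, and I need $\E_p[D_v(Z)^2]=\calO(1)$ uniformly in $d$. The triangle inequality alone gives $D_v\le e^{\sqrt d\kappa}$, which is not dimension-free. I will instead use two bounds for $z=SU$:
\begin{align*}
\norm{z+v}_2-\norm z_2\le\kappa,
\qquad
\norm{z+v}_2-\norm z_2\le\inner{v,U}+\frac{\kappa^2}{2S}.
\end{align*}
The second comes from $\sqrt{S^2+2S\inner{v,U}+\kappa^2}\le S+\inner{v,U}+\kappa^2/(2S)$. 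I then split on the event $\{S\ge\sqrt d/2\}$.
\begin{itemize}
\item \emph{On $\{S\ge\sqrt d/2\}$.} We have $D_v^2\le e^{\kappa^2}\exp(2\sqrt d\inner{v,U})$. Using independence of $S$ and $U$, it remains to bound $\E\exp(2\sqrt d\kappa X)$, where $X$ has the spherical-projection density $q_d$ of \cref{lem:spherical-projection-density}. This law is sub-Gaussian with variance proxy $\calO(1/d)$, since $(1-x^2)^{(d-3)/2}\lesssim e^{-(d-3)x^2/2}$. So the expectation is at most $e^{\calO(\kappa^2)}=\calO(1)$. For small $d$ the trivial bound $e^{2\sqrt d\kappa}$ suffices.
\item \emph{On $\{S<\sqrt d/2\}$.} I use $D_v^2\le e^{2\sqrt d\kappa}\le e^{\sqrt d/4}$. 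The Gamma lower tail gives $\Pr(S<\sqrt d/2)\le (e/2)^{d}e^{-d}$-type decay, i.e.\ $e^{-cd}$ with $c=\log2-1/2>0$. Since $cd-\sqrt d/4$ is bounded below uniformly over $d\ge1$, this contribution is $\calO(1)$.
\end{itemize}
Together these give $C_D=\calO(1)$.

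\textbf{Main obstacle.} I expect the dimension-free bound on $C_D$ to be the main obstacle. The naive Lipschitz bound loses a factor $e^{\sqrt d\kappa}$, and the argument must instead exploit two facts: the direction $\inner{v,U}$ is typically of order $\kappa/\sqrt d$, and $S$ concentrates around $\sqrt d$ with exponentially small lower tail. The delicate step is matching the constants in the lower-tail estimate against $e^{\sqrt d/4}$ for all $d$, and handling $d=1,2$ directly. With $C_Z,C_s,C_D$ bounded, \cref{eq:common-chi-square,eq:common-fisher-bound,eq:common-small-update-bound} yield the three claimed bounds.
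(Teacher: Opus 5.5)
Your proof is correct. It matches the paper everywhere except in how $C_D$ is bounded. The whitening reduction, the concavity check for $\psi(t)=\sqrt d\sqrt t$, and the values $C_Z=3(d+1)(d+3)/d^2\le 24$ and $C_s=3d/(d+2)\le 3$ are exactly the paper's.

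For $C_D$, the paper avoids any radial splitting by translating first:
$\E_p[D_v(Z)^2]=\int p(z)^3/p(z+v)^2\,\d z=c\int e^{\sqrt d(2\norm z_2-3\norm{z-v}_2)}\,\d z$.
Now the shifted norm enters with a negative sign. The tangent-plane (convexity) inequality $\norm{z-v}_2\ge\norm z_2-\inner{z,v}/\norm z_2$ then bounds the exponent by $-\sqrt d\norm z_2+3\sqrt d\inner{z,v}/\norm z_2$. The radial factor integrates out exactly, leaving $\E[e^{3\sqrt d\inner{U,v}}]$. An explicit spherical-MGF estimate then gives $C_D=3$.

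You instead need an \emph{upper} bound on the convex increment $\norm{z+v}_2-\norm z_2$. That forces the curvature remainder $\kappa^2/(2S)$, hence the split at $S=\sqrt d/2$ and the Gamma lower-tail Chernoff bound. Both routes work. The paper's is shorter and has no case analysis. Yours makes explicit the two concentration effects, in direction and in radius, behind the dimension-free constant.

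Three small slips, none affecting the conclusion:
\begin{itemize}
\item On $\{S\ge\sqrt d/2\}$ the prefactor is $e^{2\kappa^2}$, not $e^{\kappa^2}$, since $\sqrt d\kappa^2/S\le 2\kappa^2$ there.
\item The Chernoff bound at $\lambda=1$ gives $\Pr(S<\sqrt d/2)\le(\sqrt e/2)^d=e^{-(\log 2-1/2)d}$, not ``$(e/2)^d e^{-d}$''.
\item The sub-Gaussian claim for $X$ also needs a lower bound on the normalizer $\int_{-1}^1(1-x^2)^{(d-3)/2}\,\d x$, not just the pointwise bound on the numerator. Alternatively, cite the standard estimate $\E e^{\lambda X}\le e^{\lambda^2/(2d)}$, valid for all $d\ge1$.
\end{itemize}
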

\begin{proof}
By \cref{lem:elliptical-whitening-reduction}, it suffices to take
$\Sigma=I$. Write $p(z)=c e^{-\sqrt d\norm z_2}$ and
$\kappa=\norm v_2$. We calculate $C_D$, $C_s$, and $C_Z$ from
\cref{lem:radial-information-reduction}.
 
First, let $D_v(z)=p(z)/p(z+v)$. If $\kappa=0$, then
$\E_p[D_v(Z)^2]=1$. For $\kappa>0$, let $U$ be uniform on the unit
sphere, and set
$X=\inner{U,v/\kappa}$ and $\alpha=3\sqrt d\kappa$.
For $z\ne0$, convexity of the Euclidean norm gives
$\norm{z-v}_2\ge\norm z_2-\inner{z,v}/\norm z_2$.
Translating the integration variable and substituting the density yields
\begin{align*}
 \E_p[D_v(Z)^2]
 &=\int\frac{p(z)^3}{p(z+v)^2}\,\d z\\
 &=c\int e^{\sqrt d(2\norm z_2-3\norm{z-v}_2)}\,\d z\\
 &\le c\int e^{-\sqrt d\norm z_2+
                 3\sqrt d\inner{z,v}/\norm z_2}\,\d z\\
 &=\E[e^{3\sqrt d\inner{U,v}}]=\E[e^{\alpha X}].
\end{align*}
For $d\ge4$, put $n=(d-3)/2>0$. By
\cref{lem:spherical-projection-density}, $X$ has density proportional
to $(1-x^2)^n$ on $(-1,1)$, so
\begin{align*}
 \E[e^{\alpha X}]
 =\frac{\int_{-1}^1e^{\alpha x}(1-x^2)^n\,\d x}
        {\int_{-1}^1(1-x^2)^n\,\d x}.
\end{align*}
Using $\alpha x\le nx^2/2+\alpha^2/(2n)$,
$\log(1-x^2)\le-x^2$, and $\sqrt{1-x^2}\le1-x^2/2$, we obtain
\begin{align*}
 e^{\alpha x}(1-x^2)^n
 &\le e^{\alpha^2/(2n)}(1-x^2)^{n/2}
 \le e^{\alpha^2/(2n)}(1-x^2/2)^n,\\
 \int_{-1}^1(1-x^2/2)^n\,\d x
 &=\sqrt2\int_{-1/\sqrt2}^{1/\sqrt2}(1-y^2)^n\,\d y
 \le\sqrt2\int_{-1}^1(1-y^2)^n\,\d y.
\end{align*}
Consequently,
\begin{align*}
 \E[e^{\alpha X}]
 \le\sqrt2 e^{\alpha^2/(2n)}
 =\sqrt2\exp\left(\frac{9d\kappa^2}{d-3}\right)
 \le\sqrt2 e^{9/16}<3.
\end{align*}
For $d\le3$, $|X|\le1$ instead gives
$\E[e^{\alpha X}]\le e^\alpha\le e^{3/4}<3$. Thus, including
$\kappa=0$, a valid choice in every dimension is
\begin{equation}\label{eq:laplace-ratio-second-moment}
 \E_p[D_v(Z)^2]\le C_D=3.
\end{equation}
 
Next, $f'(r)/f(r)=-\sqrt d$. The radial fourth-moment formula
\cref{eq:common-radial-fourth} gives
\begin{align*}
 \E_p[\inner{u,\nabla \log p(Z)}^4]
 =\frac{3d}{d+2}\norm u_2^4
 \le3\norm u_2^4,
\end{align*}
so we may take $C_s=3$.
 
Finally, we calculate $C_Z$.
For $m_k=\int_0^\infty r^k e^{-\sqrt d\,r}\,\d r$, integration
by parts gives $m_k=k m_{k-1}/\sqrt d$ for $k\ge1$; the boundary
term vanishes. Therefore \cref{eq:common-radial-integration} gives
\begin{align*}
 \E[S]
 &=\frac{m_d}{m_{d-1}}=\sqrt d,\\
 \E[S^2]
 &=\frac{m_{d+1}}{m_{d-1}}=d+1,\\
 \E[S^4]
 &=\frac{m_{d+3}}{m_{d-1}}
 =\frac{d(d+1)(d+2)(d+3)}{d^2}.
\end{align*}
In particular, $\Cov(P_\Sigma)=(d+1)\Sigma/d$ by
\cref{lem:elliptical-moments}.
By \cref{eq: position 4th},
\begin{align*}
 \E_p[\inner{u,Z}^4]
 =\frac{3\E[S^4]}{d(d+2)}\norm u_2^4
 =\frac{3(d+1)(d+3)}{d^2}\norm u_2^4
 \le24\norm u_2^4.
\end{align*}
Thus $C_Z=24$ is valid. Since
$\psi(t)=\sqrt d\sqrt t$ is concave, applying
\cref{eq:common-chi-square,eq:common-fisher-bound,eq:common-small-update-bound}
gives
\begin{align*}
 1+\chi^2(p(\cdot-v)\|p)&\le\sqrt3,\\
 \calI^v(p)&\preceq3I,\\
 \calJ^v(p)&\preceq12\sqrt2\,I.
\end{align*}
\end{proof}
 
\subsection{Multivariate \texorpdfstring{Student-$t_2$}{Student-t2}}\label[appendix]{app:student-two}
\begin{lemma}[Student-$t_2$ shifted information]
\label{lem:student-two-shifted-information}
Consider $P_\Sigma(z)\propto(1+\norm z_{\Sigma^{-1}}^2)^{-(d+2)/2}$.
Let $v$ be such that $\kappa=\|v\|_{\Sigma^{-1}}\le1/8$. Then 
\begin{align}
 1+\chi^2(P_\Sigma(\cdot-v)\|P_\Sigma)&=\calO(1),
 \label{eq:student-two-small-shift-ratio}\\
 \calI^v(P_\Sigma)&\preceq\calO(1)\Sigma^{-1}.
 \label{eq:student-two-small-shift-fisher}
\end{align}
\end{lemma}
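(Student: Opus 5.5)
By \cref{lem:elliptical-whitening-reduction}, I will reduce to $\Sigma=I$ and write $p(z)=c(1+\norm z_2^2)^{-(d+2)/2}$ with $\kappa=\norm v_2\le1/8$. I will then verify the hypotheses of \cref{lem:radial-information-reduction}, which give \cref{eq:common-chi-square} and \cref{eq:common-fisher-bound}. Only the constants $C_s$ and $C_D$ are needed, since the $\calJ$ bound, which would require the infinite fourth moment $C_Z$, is not claimed here.

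\emph{Score fourth moment.} Here $f(r)=(1+r^2)^{-(d+2)/2}$, so
\begin{align*}
 \frac{f'(r)}{f(r)}=-\frac{(d+2)r}{1+r^2},\qquad \left|\frac{f'(r)}{f(r)}\right|\le\frac{d+2}{2}.
\end{align*}
A better route is to compute the moment exactly. By \cref{eq:common-radial-integration}, $S$ has density proportional to $r^{d-1}(1+r^2)^{-(d+2)/2}$. Substituting $w=r^2/(1+r^2)$ turns $S^2/(1+S^2)$ into a $\mathrm{Beta}(d/2,1)$ variable, and the factor $1/(1+S^2)$ contributes to the Beta parameters. With this,
\begin{align*}
 \E\Bigl[\tfrac{S^4}{(1+S^2)^4}\Bigr]=\E[W^2(1-W)^2],\qquad W\sim\mathrm{Beta}(d/2,1),
\end{align*}
which is a ratio of Beta functions of order $1/d^2$. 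Multiplying by $(d+2)^4$ and by the factor $3/(d(d+2))$ from \cref{eq: score 4th} then gives $C_s=\calO(1)$, independent of $d$. I would double-check these Beta exponents, but they are routine.

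\emph{Likelihood-ratio second moment.} This is the main obstacle. Here
\begin{align*}
 D_v(z)=\Bigl(\frac{1+\norm{z+v}_2^2}{1+\norm z_2^2}\Bigr)^{(d+2)/2},
\end{align*}
and I need $\E_p[D_v(Z)^2]=\calO(1)$ uniformly in $d$. Writing $D_v^2=\exp\bigl((d+2)\log(1+\delta)\bigr)$ with
\begin{align*}
 \delta=\frac{2\inner{z,v}+\kappa^2}{1+\norm z_2^2},
\end{align*}
I would use $\log(1+\delta)\le\delta$ to get $D_v^2\le\exp\bigl((d+2)\delta\bigr)$. Then I decompose $Z=SU$. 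The $\kappa^2$ part contributes at most $e^{(d+2)\kappa^2/(1+S^2)}$. This is not uniformly bounded when $S$ is small, but $S$ is typically of order $\sqrt d$. More precisely, $S^2/(1+S^2)\sim\mathrm{Beta}(d/2,1)$, so $P(S^2\le\epsilon d)$ is tiny.

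The cross term gives $\exp(\alpha X)$ with $X=\inner{U,v/\kappa}$ and $\alpha=2(d+2)\kappa S/(1+S^2)\le(d+2)\kappa$. As in the radial Laplace proof, $\E_U[e^{\alpha X}]\le\sqrt2\,e^{\alpha^2/(2n)}$ with $n=(d-3)/2$. This gives a bound of order $\exp\bigl(C(d+2)^2\kappa^2S^2/(d(1+S^2)^2)\bigr)$, which is $\calO(1)$ when $S^2\gtrsim d$. The remaining problem is the region of small $S$.

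To handle small $S$, I would use the crude bound $D_v\le(1+\kappa)^{d+2}$, obtained from $1+\norm{z+v}^2\le(1+\kappa)^2(1+\norm z^2)$. This gives $D_v^2\le(1+\kappa)^{2(d+2)}\le e^{(d+2)/4}$. I then multiply by
\begin{align*}
 P(S^2\le d/16)=P\Bigl(W\le\tfrac{d/16}{1+d/16}\Bigr)=\Bigl(\tfrac{d/16}{1+d/16}\Bigr)^{d/2}.
\end{align*}
This probability is only about $e^{-8}$, not exponentially small in $d$. So the split threshold must instead be at $S^2\le cd$ with a worse constant, or I replace the crude bound by the sharper one $D_v\le\bigl((1+(S+\kappa)^2)/(1+S^2)\bigr)^{(d+2)/2}$, which is near $1$ when $S\kappa$ is small. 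I expect tuning this split to be the delicate step.

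Combining $C_D=\calO(1)$ with $C_s=\calO(1)$ in \cref{lem:radial-information-reduction} yields \cref{eq:student-two-small-shift-ratio} and \cref{eq:student-two-small-shift-fisher}, and un-whitening via \cref{lem:elliptical-whitening-reduction} restores the factor $\Sigma^{-1}$.
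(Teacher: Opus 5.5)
Your whitening step and your $C_s$ computation match the paper's. With $W=S^2/(1+S^2)\sim\mathrm{Beta}(d/2,1)$ one gets $\E[W^2(1-W)^2]=8d/((d+4)(d+6)(d+8))$, hence $C_s\le24$. \textbf{The gap is $C_D$.} Both conclusions rest on $\E_p[D_v(Z)^2]=\calO(1)$ uniformly in $d$. You leave this open, and none of the repairs you suggest works:
\begin{itemize}
\item For any fixed $c$, $\Pr(S^2\le cd)=(1+1/(cd))^{-d/2}\to e^{-1/(2c)}$ is a constant, so it cannot absorb the factor $e^{(d+2)/4}$.
\item The worst-direction bound $\bigl((1+(S+\kappa)^2)/(1+S^2)\bigr)^{d+2}$ is about $e^{2\kappa\sqrt d}$ at $S\approx\sqrt d$, where the probability mass is still constant.
\item A threshold at constant $S$ leaves the range $1\lesssim S^2\ll d$. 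There the supremum of your $U$-averaged bound is $e^{\Theta(d\kappa^2)}$.
\end{itemize}
So no fixed split of this kind closes the argument.

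\textbf{How the paper gets $C_D$.} It avoids any radial analysis of $D_v$ by a pointwise comparison. Concavity of $\log$ gives $3\log x\ge2\log y+\log(3x-2y)$. Take $x=1+\norm z_2^2$ and $y=1+\norm{z+v}_2^2$; then $3x-2y=1-6\kappa^2+\norm{z-2v}_2^2>0$, and
\[
 \frac{p(z)^3}{p(z+v)^2}\le c\bigl(1-6\kappa^2+\norm{z-2v}_2^2\bigr)^{-(d+2)/2}
 =\frac{P_{(1-6\kappa^2)I}(z-2v)}{1-6\kappa^2}.
\]
The last identity is exact because $(d+2)/2=d/2+1$. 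Integrating gives $C_D=1/(1-6\kappa^2)\le32/29$.

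\textbf{Alternatively, your linearization can be completed if you do not split at all.} For $d\ge4$, use $S^2/(1+S^2)^2=W(1-W)\le1-W$. Your two factors then combine into $\E_p[D_v(Z)^2]\le\sqrt2\,\E[e^{\lambda(1-W)}]$, with $\lambda=\kappa^2\bigl(d+2+4(d+2)^2/(d-3)\bigr)=5d\kappa^2+\calO(1)$. The variable $1-W$ has density $\tfrac d2(1-u)^{d/2-1}\le\tfrac d2e^{-(d/2-1)u}$ on $(0,1)$. Hence $\E[e^{\lambda(1-W)}]\le\frac{d/2}{d/2-1-\lambda}=\calO(1)$ whenever $\lambda<d/2-1$, which holds for all $d\ge6$ when $\kappa\le1/8$. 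The dimensions $d\le5$ follow trivially from $1-W\le1$, together with $|X|\le1$ when $d\le3$.

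The small-$S$ region you worried about is harmless once you integrate against the exact law of $S$ instead of thresholding it. The reason is that $\Pr(1-W\ge u)=(1-u)^{d/2}\le e^{-du/2}$ dominates $e^{\lambda u}$.
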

\begin{proof}
By \cref{lem:elliptical-whitening-reduction}, it suffices to take
$\Sigma=I$. Write $p(z)=c(1+\norm z_2^2)^{-\nu}$, where
$\nu=(d+2)/2$, and $\kappa=\norm v_2$. We calculate $C_D$ and $C_s$
from \cref{lem:radial-information-reduction}.
 
First, for $x,y>0$ with $3x-2y>0$, concavity of the logarithm gives
$3\log x\ge2\log y+\log(3x-2y)$. Taking
$x=1+\norm z_2^2$ and $y=1+\norm{z+v}_2^2$, we have
\begin{align*}
 3x-2y&=1-6\kappa^2+\norm{z-2v}_2^2>0,\\
 \frac{p(z)^3}{p(z+v)^2}
 &=c\left(\frac{y^2}{x^3}\right)^\nu
 \le c\bigl(1-6\kappa^2+\norm{z-2v}_2^2\bigr)^{-\nu}.
\end{align*}
Set $\alpha=1-6\kappa^2\in[29/32,1]$. Since
$\nu-d/2=1$, by \cref{lem:elliptical-linear-transformation} (with $B=I$ and $A=\sqrt{\alpha} I$), the density $P_{\alpha I}$ satisfies
\begin{align*}
    P_{\alpha I}(y) = \alpha^{-d/2} P_I(y/\sqrt{\alpha}) = \alpha^{-d/2} c(1+\|y\|_2^2/\alpha)^{-\nu}=c\alpha(\alpha+\norm y_2^2)^{-\nu}. 
\end{align*}
Thus the pointwise comparison above can be written as
\begin{equation}\label{eq:student-two-shift-density-comparison}
 \frac{p(z)^3}{p(z+v)^2}
 \le\frac1\alpha P_{\alpha I}(z-2v)
 \le\frac{32}{29}P_{\alpha I}(z-2v).
\end{equation}
Integrating gives
\begin{equation}\label{eq:student-two-ratio-second-moment}
 \E_p[D_v(Z)^2]
 \le\frac1\alpha\int P_{\alpha I}(z-2v)\,\d z
 =\frac1{1-6\kappa^2}\le\frac{32}{29}.
\end{equation}
Thus $C_D=32/29$ is valid.
 To calculate $C_s$, first note that
\[
\frac{\d}{\d r}\left(\frac{r}{\sqrt{1+r^2}}\right)^d
=d\,r^{d-1}(1+r^2)^{-\nu},
\]
so
\begin{equation}\label{eq:student-two-radial-normalizer}
\int_0^\infty r^{d-1}(1+r^2)^{-\nu}\,\d r
=\frac1d.
\end{equation}
Applying the radial integration formula
\cref{eq:common-radial-integration} with
$H(r)=r^4/(1+r^2)^4$ gives
\begin{align*}
\E\left[\frac{S^4}{(1+S^2)^4}\right]
&=\frac{\displaystyle\int_0^\infty
r^{d+3}(1+r^2)^{-(d+10)/2}\,\d r}
{\displaystyle\int_0^\infty
r^{d-1}(1+r^2)^{-\nu}\,\d r}\\
&=d\int_0^\infty
r^{d+3}(1+r^2)^{-(d+10)/2}\,\d r.
\end{align*}
Now substitute $x=r^2/(1+r^2)$, so that
\[
r^2=\frac{x}{1-x},\qquad
1+r^2=\frac1{1-x},\qquad
\d r=\frac{\d x}{2\sqrt{x}(1-x)^{3/2}}.
\]
Then
\begin{align*}
\E\left[\frac{S^4}{(1+S^2)^4}\right]
&=d\int_0^1
\left(\frac{x}{1-x}\right)^{(d+3)/2}
(1-x)^{(d+10)/2}
\frac{\d x}{2\sqrt{x}(1-x)^{3/2}}\\
&=\frac d2\int_0^1
x^{d/2+1}(1-x)^2\,\d x.
\end{align*}
Two integrations by parts evaluate the last integral:
\begin{align*}
 \int_0^1x^{d/2+1}(1-x)^2\,\d x
 &=\left[\frac{x^{d/2+2}(1-x)^2}{d/2+2}\right]_0^1
   +\frac2{d/2+2}\int_0^1x^{d/2+2}(1-x)\,\d x,\\
 \int_0^1x^{d/2+2}(1-x)\,\d x
 &=\left[\frac{x^{d/2+3}(1-x)}{d/2+3}\right]_0^1
   +\frac1{d/2+3}\int_0^1x^{d/2+3}\,\d x\\
 &=\frac1{(d/2+3)(d/2+4)}.
\end{align*}
Both boundary terms vanish. Multiplying by $d/2$ yields
\begin{align*}
 \E\left[\frac{S^4}{(1+S^2)^4}\right]
 =\frac{8d}{(d+4)(d+6)(d+8)}.
\end{align*}
Since $f'(r)/f(r)=-(d+2)r/(1+r^2)$,
\cref{eq: score 4th} now gives
\begin{align*}
 \E_p[\inner{u,\nabla \log p(Z)}^4]
 &=\frac{3(d+2)^4}{d(d+2)}
   \E\left[\frac{S^4}{(1+S^2)^4}\right]\norm u_2^4\\
 &=\frac{24(d+2)^3}{(d+4)(d+6)(d+8)}\norm u_2^4
 \le24\norm u_2^4.
\end{align*}
Thus $C_s=24$ is valid. Applying
\cref{eq:common-chi-square,eq:common-fisher-bound} proves
\begin{align*}
 1+\chi^2(p(\cdot-v)\|p)\le\sqrt{32/29},\qquad
 \calI^v(p)\preceq\sqrt{24\cdot32/29}\,I\preceq6I.
\end{align*}
 
\end{proof}
 
\begin{lemma}[Student-$t_2$ clipped update information]
\label{lem:student-two-clipped-update-information}
Consider $P_\Sigma(z)\propto(1+\norm z_{\Sigma^{-1}}^2)^{-(d+2)/2}$.
Let $v$ be such that $\kappa=\|v\|_{\Sigma^{-1}}\le1/8$. Then 
\begin{equation}\label{eq:student-two-small-shift-update}
 \calJ^{v,\sfL}(P_\Sigma)\preceq6\log(e\sfL)\Sigma^{-1}.
\end{equation}
Also, for $Z\sim P_\Sigma$ and $S=\norm Z_{\Sigma^{-1}}$,
\begin{align*}
 \E\left[S^2\mathbf1_{\{S\le\sfL\}}\right]
 \le\frac d2\log(1+\sfL^2).
\end{align*}
\end{lemma}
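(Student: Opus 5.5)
By \cref{lem:elliptical-whitening-reduction} it suffices to take $\Sigma=I$. Write $p(z)=c(1+\norm z_2^2)^{-\nu}$ with $\nu=(d+2)/2$, $\kappa=\norm v_2\le1/8$, $B=I+vv^\top$ and $b=1+\kappa^2$; the clipping region becomes $\norm z_2\le\sfL$.

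I would start with the second claim, since it is a one-line computation. By \cref{eq:common-radial-integration} and the normalizer \cref{eq:student-two-radial-normalizer},
\begin{align*}
 \E[S^2\mathbf1_{\{S\le\sfL\}}]=d\int_0^{\sfL}r^{d+1}(1+r^2)^{-(d+2)/2}\,\d r\le d\int_0^{\sfL}\frac{r}{1+r^2}\,\d r=\frac d2\log(1+\sfL^2),
\end{align*}
where the inequality uses $r^d\le(1+r^2)^{d/2}$.

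For the matrix bound I would \emph{not} reuse the fourth-moment route of \cref{lem:radial-information-reduction}. The clipped fourth moment $\E[S^4\mathbf1_{\{S\le\sfL\}}]$ grows like $d^2\sfL^2$, so after the square root that route only gives $\calO(\sfL)$, not $\calO(\log\sfL)$. Instead I would bound the weight pointwise by a single Student-type density, as in the proof of \cref{lem:student-two-shifted-information}. By \cref{lem:elliptical-linear-transformation}, $p^+(z):=P_B(z)=c\,b^{-1/2}x^{-\nu}$ with $x=1+z^\top B^{-1}z$. Put $y=1+\norm{z+v}_2^2$. Concavity of $\log$ gives $2\log x\ge\log y+\log(2x-y)$ whenever $2x-y>0$, and therefore
\begin{align*}
 \frac{p^+(z)^2}{p(z+v)}=\frac{c}{b}\Bigl(\frac{y}{x^2}\Bigr)^{\nu}\le\frac cb(2x-y)^{-\nu}.
\end{align*}
Here $2x-y=1-\kappa^2+z^\top Q_2z-2\inner{v,z}$ with $Q_2=2B^{-1}-I=I-\frac{2vv^\top}{b}$. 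This matrix is positive definite with $Q_2\succeq(1-2\kappa^2)I$. Completing the square gives $2x-y=\alpha'+(z-w)^\top Q_2(z-w)$ with $w=Q_2^{-1}v$ (so $\norm w_2=\calO(\kappa)$) and $\alpha'=1-\kappa^2-v^\top Q_2^{-1}v\ge1-\calO(\kappa^2)$, which is bounded below by a universal constant since $\kappa\le1/8$. In particular $2x-y>0$ everywhere, so the concavity step is valid for all $z$. Since $\nu-d/2=1$, \cref{lem:elliptical-linear-transformation} rewrites the right-hand side as at most a universal constant times the Student-$t_2$ density $P_{\Sigma'}(z-w)$, with shape $\Sigma'=\alpha'Q_2^{-1}$ satisfying $\Sigma'\preceq(1+\calO(\kappa^2))I$. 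This is the same mechanism as \cref{eq:student-two-shift-density-comparison}.

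With this pointwise bound, for any unit $u$ we have $u^\top\calJ^{v,\sfL}(p)u\lesssim\E_{Y\sim P_{\Sigma'}}\bigl[\inner{u,Y+w}^2\mathbf1_{\{\norm{Y+w}_2\le\sfL\}}\bigr]$. I would use $\inner{u,Y+w}^2\le2\inner{u,Y}^2+2\norm w_2^2$. The shift part contributes $\calO(\kappa^2)$. On the clipping event, $\norm Y_2\le\sfL+\norm w_2\le2\sfL$, and since $\Sigma'\succeq cI$ for a universal $c>0$, this gives $\norm Y_{\Sigma'^{-1}}\lesssim\sfL$. Writing $Y=\Sigma'^{1/2}SU$ with $U$ independent of $S$, we get
\begin{align*}
 \E\bigl[\inner{u,Y}^2\mathbf1_{\{S\lesssim\sfL\}}\bigr]=\frac{u^\top\Sigma' u}{d}\,\E\bigl[S^2\mathbf1_{\{S\lesssim\sfL\}}\bigr]\lesssim\log(e\sfL),
\end{align*}
by the second claim applied at radius $\calO(\sfL)$ and $\sfL\ge1$. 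Collecting the constants should give the stated $6\log(e\sfL)\,I$.

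The main obstacle is carrying universal constants through the completed square: the factor $(\alpha')^{-1}$, the determinant ratio $\det(\Sigma')^{1/2}/b$, and the enlarged clipping radius. These must be tracked carefully enough to land at the constant $6$; a looser argument gives the same bound with some universal constant.
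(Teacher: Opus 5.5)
Your proposal is correct and follows the paper's proof essentially step for step. It uses the same whitening, the same concavity-of-$\log$ bound $p^+(z)^2/p(z+v)\le\frac cb(2x-y)^{-\nu}$, and the same completed square: your $w=Q_2^{-1}v$ and $\alpha'$ are exactly the paper's $m=\frac{b}{1-\kappa^2}v$ and $\alpha=\frac{1-3\kappa^2}{1-\kappa^2}$. As in the paper, you use $\nu-d/2=1$ to turn this into a bounded multiple of a shifted Student-$t_2$ density and then reduce to the clipped radial second moment.

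The only deviation is that you bound $\inner{u,Y+w}^2\le2\inner{u,Y}^2+2\norm w_2^2$, whereas the paper removes the cross term by symmetry. This costs a factor of $2$. Since the comparison constant $1/(\alpha'\sqrt{1-\kappa^4})$ is at most $2$ and $\norm w_2\le1/4$, the bound still fits under the constant $6$.
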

\begin{proof}
By \cref{lem:elliptical-whitening-reduction}, it suffices to take
$\Sigma=I$. Write $p(z)=c(1+\norm z_2^2)^{-\nu}$, where
$\nu=(d+2)/2$, and $p^+=P_{I+vv^\top}$.
Let $Z\sim p$ and $S=\norm Z_2$.
The radial integration formula in \cref{lem:radial-fourth-moments}
and the normalizer \cref{eq:student-two-radial-normalizer} give
$\E[S^4]=\infty$, since
$r^{d+3}(1+r^2)^{-\nu}\ge2^{-\nu}r$ for $r\ge1$.
Thus $C_Z=\infty$. Therefore, we turn to estimate the clipped integral.
 
First, the same radial integration formula gives
\begin{align*}
 \frac1d\E\left[S^2\mathbf1_{\{S\le\sfL\}}\right]
 &=\int_0^{\sfL}r^{d+1}(1+r^2)^{-(d+2)/2}\,\d r\\
 &=\int_0^{\sfL}\left(\frac r{\sqrt{1+r^2}}\right)^d
                  \frac r{1+r^2}\,\d r \\
 &\leq \int_{0,\sfL} \frac{r}{1+r^2}\d r 
 \le\frac12\log(1+\sfL^2).
\end{align*}
For the density comparison, set $B=I+vv^\top$, $b=1+\kappa^2$, and
\begin{align*}
 Q&=2B^{-1}-I = 2\left(I - \frac{vv^\top}{b}\right) - I =  I-\frac{2vv^\top}{b}, \\
 m&=Q^{-1}v = \left(I + \frac{\frac{2}{b}vv^\top}{1-\frac{2}{b}\|v\|_2^2}\right) v  = \frac b{1-\kappa^2}v, \numberthis \label{eq: Q inv}\\
 \alpha&=\frac{1-3\kappa^2}{1-\kappa^2}\geq \frac{1}{2},  \tag{$\kappa\leq \frac{1}{8}$}
\end{align*}
where we use the Sherman–Morrison formula $(I+uv^\top)^{-1} =  I - \frac{uv^\top}{1 + v^\top u}$. 
Completing the square yields
\begin{align*}
 2(1+z^\top B^{-1}z)-(1+\norm{z+v}_2^2)
 &=1+z^\top Qz-2\inner{v,z}-\kappa^2\\
 &=\alpha+(z-m)^\top Q(z-m)>0.
\end{align*}
For $x,y>0$ with $2x-y>0$, concavity of the logarithm gives
$2\log x\ge\log y+\log(2x-y)$. Applying this with
$x=1+z^\top B^{-1}z$ and $y=1+\norm{z+v}_2^2$ gives
\begin{align*}
 \frac{p^+(z)^2}{p(z+v)}
 &=\frac cb\left(\frac y{x^2}\right)^\nu
 \le\frac cb\bigl(\alpha+(z-m)^\top Q(z-m)\bigr)^{-\nu}. \numberthis \label{eq: tmptmp32}
 \end{align*}
Define $C=\alpha Q^{-1}$ and  $A=\frac1{\alpha b\sqrt{\det Q}}$. Then by \cref{lem:elliptical-linear-transformation}, 
\begin{align*}
P_C(y)=\frac{c}{\sqrt{\det C}}
\left(1+y^\top C^{-1}y\right)^{-\nu}, 
\end{align*}
and thus from \cref{eq: tmptmp32} we have 
\begin{align*}
  \frac{p^+(z)^2}{p(z+v)} \leq \frac{\sqrt{\det C}}{b}\alpha^{-\nu}P_C(z-m) =A P_C(z-m).
\end{align*}
The last equality uses $\det C=\alpha^d/\det Q$ and
$\nu=d/2+1$.
The eigenvalues of $C$ are $\alpha$ off the direction $v$ and
$\alpha b/(1-\kappa^2)=1-4\kappa^4/(1-\kappa^2)^2$ along $v$ (by \cref{eq: Q inv}).
Since $\kappa\le1/8$, we obtain
\begin{align*}
 \frac12I\preceq C\preceq I,\qquad
 \norm m_2\le\frac{65}{504}\le\frac14,\qquad
 A=\frac1{\alpha\sqrt{1-\kappa^4}}\le2.
\end{align*}
Let $Y\sim p$. By \cref{lem:elliptical-linear-transformation},
$m+C^{1/2}Y$ has density $P_C(\cdot-m)$. If
$\norm{m+C^{1/2}Y}_2\le\sfL$, then
\begin{align*}
 \norm Y_2
 &\le\norm{C^{-1/2}}_{\op}(\sfL+\norm m_2)
 \le\sqrt2(\sfL+1/4)\le2\sfL\qquad(\sfL\ge1).
\end{align*}
Using the density comparison and enlarging to this centered ball gives
\begin{align*}
 \calJ^{v,\sfL}(p)
 &=  \int_{\|z\|_2 \leq \sfL}  \frac{p^+(z)^2}{p(z+v)} zz^\top \d z\\ 
 &\preceq A \int_{\|z\|_2\leq \sfL }P_C(z-m) zz^\top \d z \\ 
 &\preceq A\E[(m+C^{1/2}Y)(m+C^{1/2}Y)^\top
                    \mathbf1_{\{\norm Y_2\le2\sfL\}}]\\
 &=A\left(mm^\top\Pr(\norm Y_2\le2\sfL)
       +\frac1d\E\left[S^2\mathbf1_{\{S\le2\sfL\}}\right]C\right)\\
 &\preceq2\left(\frac1{16}
       +\frac1d\E\left[S^2\mathbf1_{\{S\le2\sfL\}}\right]\right)I\\
 &\preceq\bigl(1/8+\log(1+4\sfL^2)\bigr)I
 \preceq2\log(e\sfL)I.
\end{align*}
The last equality uses symmetry to eliminate the cross terms and the radial
identity
\begin{align*}
 \E\left[YY^\top\mathbf1_{\{\norm Y_2\le2\sfL\}}\right]
 =\frac1d\E\left[S^2\mathbf1_{\{S\le2\sfL\}}\right]I.
\end{align*}
The last step uses $1+4\sfL^2\le5\sfL^2$ and $1/8+\log5<2$.
Since $2\log(e\sfL)\le6\log(e\sfL)$, this proves \cref{eq:student-two-small-shift-update}.

\end{proof}
 
\begin{lemma}[Student-$t_2$ radial tail]
\label{lem:student-two-radial-tail}
For the Student-$t_2$ family and $\sfL\ge1$,
let $Z\sim P_\Sigma$ and $S=\norm Z_{\Sigma^{-1}}$. Then
\begin{align*}
 \E\left[S\mathbf1_{\{S>\sfL\}}\right]
 \le\frac d{\sqrt{1+\sfL^2}}.
\end{align*}
\end{lemma}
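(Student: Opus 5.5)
The plan is to compute the tail expectation exactly through the radial law of $S$, in the same way as the clipped second moment in \cref{lem:student-two-clipped-update-information}. By \cref{lem:elliptical-radial-representation}, $S$ does not depend on $\Sigma$. The radial integration formula \cref{eq:common-radial-integration} together with the normalizer \cref{eq:student-two-radial-normalizer} gives
\begin{align*}
 \E\left[S\mathbf1_{\{S>\sfL\}}\right]
 = d\int_{\sfL}^\infty r^{d}(1+r^2)^{-(d+2)/2}\,\d r .
\end{align*}

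Next, factor the integrand as
\begin{align*}
 r^d(1+r^2)^{-(d+2)/2}=\left(\frac{r}{\sqrt{1+r^2}}\right)^d(1+r^2)^{-1}.
\end{align*}
Since $r/\sqrt{1+r^2}\le1$, we can drop the first factor and be left with $\int_{\sfL}^\infty(1+r^2)^{-1}\d r=\pi/2-\arctan\sfL$. We then need $\pi/2-\arctan \sfL\le 1/\sqrt{1+\sfL^2}$.

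To prove this, set $g(\sfL)=(1+\sfL^2)^{-1/2}-(\pi/2-\arctan\sfL)$. Then $g(\sfL)\to0$ as $\sfL\to\infty$, and
\begin{align*}
 g'(\sfL)=-\sfL(1+\sfL^2)^{-3/2}+(1+\sfL^2)^{-1}=(1+\sfL^2)^{-3/2}\bigl(\sqrt{1+\sfL^2}-\sfL\bigr)>0.
\end{align*}
So $g$ increases to $0$, which means $g\le0$ and the inequality holds.

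An equivalent route is to bound $(1+r^2)^{-1}\le r(1+r^2)^{-3/2}$, which holds because $r/\sqrt{1+r^2}\ge \ldots$. Since $r\le\sqrt{1+r^2}$, this gives $r(1+r^2)^{-3/2}\le(1+r^2)^{-1}$, which is the wrong direction. So I would use the derivative comparison above instead; it is the cleanest way through.

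Multiplying by $d$ then gives the claimed bound $d/\sqrt{1+\sfL^2}$. Note that the assumption $\sfL\ge1$ is not even needed. The only potentially delicate step is this elementary comparison of $\arctan$ with $(1+\sfL^2)^{-1/2}$, and the monotonicity argument handles it.
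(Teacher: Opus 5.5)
Your reduction to the one-dimensional integral $d\int_{\sfL}^\infty r^d(1+r^2)^{-(d+2)/2}\,\d r$ is correct and matches the paper. The final step fails, however. The inequality $\pi/2-\arctan\sfL\le(1+\sfL^2)^{-1/2}$ that you need is false for every $\sfL\ge0$; at $\sfL=1$, for example, $\pi/4\approx0.785>1/\sqrt2\approx0.707$.

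Your monotonicity argument actually proves the reverse inequality. From $g'>0$ and $g(\sfL)\to0$ you correctly get $g\le0$, but that means $(1+\sfL^2)^{-1/2}\le\pi/2-\arctan\sfL$, the opposite of what you claimed. The pointwise bound $r(1+r^2)^{-3/2}<(1+r^2)^{-1}$ you noticed in your ``equivalent route'' paragraph integrates over $[\sfL,\infty)$ to this same reversed inequality. It was already telling you that discarding the whole factor $(r/\sqrt{1+r^2})^d$ loses too much: your argument only yields the weaker bound $d(\pi/2-\arctan\sfL)$, which is strictly larger than the claim.

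The fix, which is what the paper does, is to keep one factor of $r/\sqrt{1+r^2}$. Since $0\le r/\sqrt{1+r^2}\le1$ and $d\ge1$,
\begin{align*}
 r^d(1+r^2)^{-(d+2)/2}=\left(\frac{r}{\sqrt{1+r^2}}\right)^{d-1}\frac{r}{(1+r^2)^{3/2}}\le\frac{r}{(1+r^2)^{3/2}},
\end{align*}
and $\int_{\sfL}^\infty r(1+r^2)^{-3/2}\,\d r=(1+\sfL^2)^{-1/2}$ exactly. This gives the claim directly, with no comparison involving $\arctan$. Your remark that $\sfL\ge1$ is not needed remains true for this corrected argument.
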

\begin{proof}
By \cref{lem:elliptical-whitening-reduction}, it suffices to take
$\Sigma=I$. Apply the radial integration formula from
\cref{lem:radial-fourth-moments} with
$H(s)=s\mathbf1_{\{s>\sfL\}}$. The normalizer in
\cref{eq:student-two-radial-normalizer} gives
\begin{align*}
 \E\left[S\mathbf1_{\{S>\sfL\}}\right]
 &=d\int_{\sfL}^\infty s^d(1+s^2)^{-(d+2)/2}\,\d s\\
 &=d\int_{\sfL}^\infty
       \left(\frac s{\sqrt{1+s^2}}\right)^{d-1}
       \frac{s}{(1+s^2)^{3/2}}\,\d s\\
 &\le d\int_{\sfL}^\infty\frac{s}{(1+s^2)^{3/2}}\,\d s
 =d\left[-\frac1{\sqrt{1+s^2}}\right]_{\sfL}^\infty
 =\frac d{\sqrt{1+\sfL^2}}.
\end{align*}
The inequality follows from $0\le s/\sqrt{1+s^2}\le1$ and $d\ge1$.
This proves the radial-tail entry in \cref{tab:elliptical-updates}.
\end{proof}
\section{Geometric-Resampling Proof}\label[appendix]{app:gr}
 
The proof of \cref{thm:estimator-bounds} combines the covariance and
paired-observation identities with the finite-random-inverse bounds below.
 
\subsection{Covariance and the paired observation}
Throughout this appendix, work in normalized coordinates with
$\mu(\nu)=0$, $\Cov(\nu)=I/d$, and
$rB_2^d\subseteq \conv(\calA)\subseteq RB_2^d$. Recall that
$\tildep_t=(1-\gamma)p_t+\gamma\nu$, $\mu_t=\mu(p_t)$,
$M_t=\Cov(p_t)$, and $\tildeM_t=\Cov(\tildep_t)$.
The per-round calculations below condition on the history before round $t$.
 
\begin{lemma}[Exploration covariance and reward-vector norm]
\label{lem:gr-covariance}
For $\gamma\in(0,1)$,
\begin{align*}
 \tildeM_t
 =(1-\gamma)M_t+\frac\gamma d I
       +\gamma(1-\gamma)\mu_t\mu_t^\top, \qquad 
 \frac\gamma d I\preceq\tildeM_t\preceq R^2I,
 \qquad (1-\gamma)M_t\preceq\tildeM_t.
\end{align*}
Moreover, $\norm{\theta_t}_2\le1/r$.
\end{lemma}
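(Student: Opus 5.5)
The plan has two parts: first compute the covariance of the mixture $\tildep_t=(1-\gamma)p_t+\gamma\nu$ directly, then read off the three matrix inequalities, and finally bound $\norm{\theta_t}_2$ separately using the inner radius $r$.

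\emph{Covariance identity.} For a mixture we have $\mu(\tildep_t)=(1-\gamma)\mu_t+\gamma\mu(\nu)=(1-\gamma)\mu_t$, since $\mu(\nu)=0$. The second moment is
\begin{align*}
\E_{\tildep_t}[aa^\top]=(1-\gamma)(M_t+\mu_t\mu_t^\top)+\gamma\bigl(I/d\bigr),
\end{align*}
using $\Cov(\nu)=I/d$ and $\mu(\nu)=0$. Subtracting $(1-\gamma)^2\mu_t\mu_t^\top$ leaves $(1-\gamma)\mu_t\mu_t^\top-(1-\gamma)^2\mu_t\mu_t^\top=\gamma(1-\gamma)\mu_t\mu_t^\top$. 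This gives the claimed formula for $\tildeM_t$.

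\emph{Matrix inequalities.} Both lower bounds follow because every summand in the formula is PSD: dropping the other terms gives $\tildeM_t\succeq\gamma I/d$ and $\tildeM_t\succeq(1-\gamma)M_t$. For the upper bound, note that $\tildeM_t\preceq\E_{\tildep_t}[aa^\top]$. Every $a$ in the support lies in $\calA\subseteq RB_2^d$, so $aa^\top\preceq\norm{a}_2^2I\preceq R^2I$, which gives $\tildeM_t\preceq R^2I$.

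\emph{Norm of $\theta_t$.} This step is the only one needing care, because the reward is affine with an intercept $b_t$. For any unit vector $u$, the inclusion $rB_2^d\subseteq\conv(\calA)$ lets us write $\pm ru$ as convex combinations of points of $\calA$. Affine functions commute with convex combinations, and $|r_t|\le1$ on $\calA$, so $|r_t(\pm ru)|\le1$. Then
\begin{align*}
2r\inner{u,\theta_t}=r_t(ru)-r_t(-ru)\le2,
\end{align*}
so $\inner{u,\theta_t}\le1/r$. Choosing $u=\theta_t/\norm{\theta_t}_2$ gives $\norm{\theta_t}_2\le1/r$, and taking the difference of the two evaluations cancels $b_t$.
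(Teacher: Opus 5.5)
Your proof is correct and follows the paper's argument essentially step for step. You use the mixture covariance formula (which you expand more explicitly than the paper) and the PSD-ness of each summand for the lower bounds. For the upper bound you use $\tildeM_t\preceq\E_{\tildep_t}[aa^\top]\preceq R^2I$, and for the norm bound you use the intercept-cancelling difference $r_t(ru)-r_t(-ru)=2r\inner{u,\theta_t}$ on $\conv(\calA)$ (the case $\theta_t=0$ when normalizing $u$ is trivial).
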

\begin{proof}
The mixture covariance formula gives the displayed identity and both
lower bounds. For any unit vector $u$,
$u^\top\tildeM_tu\le\E_{a\sim\tildep_t}[\inner{u,a}^2]\le R^2$,
which proves the upper bound.
Since $r_t$ is affine, the reward bound on $\calA$ also holds on $\conv(\calA)$. Thus, for every unit vector $u$, the rewards of $ru$ and $-ru$ lie in $[-1,1]$. Their difference is $2r\langle u,\theta_t\rangle$, giving $\|\theta_t\|_2\le 1/r$.
\end{proof}
 
\begin{lemma}[Paired observation and exact estimator]
\label{lem:gr-paired-observation}
Let $a_t,\bar a_t$ be independent samples from $\tildep_t$, and let
$y_t=\inner{a_t,\theta_t}+b_t$. Then
\begin{align*}
 \E_t[(a_t-\bar a_t)y_t]&=\tildeM_t\theta_t,\\
 \E_t[(a_t-\bar a_t)(a_t-\bar a_t)^\top y_t^2]
 &\preceq2\tildeM_t.
\end{align*}
Consequently, the exact estimator
$\thetahat_t^{\rm ex}=\tildeM_t^{-1}(a_t-\bar a_t)y_t$ satisfies
\begin{align*}
 \E_t[\thetahat_t^{\rm ex}]&=\theta_t,
 &\E_t[\thetahat_t^{\rm ex}(\thetahat_t^{\rm ex})^\top]
 &\preceq2\tildeM_t^{-1},\\
 \E_t[(\thetahat_t^{\rm ex})^\top M_t\thetahat_t^{\rm ex}]
 &\le\frac{2d}{1-\gamma}.
\end{align*}
\end{lemma}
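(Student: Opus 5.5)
The plan is to verify the three claims of \cref{lem:gr-paired-observation} in order: the two moment identities for the paired difference $a_t-\bar a_t$, then the consequences for $\thetahat_t^{\rm ex}$. Throughout I condition on the history before round $t$, so $\tildep_t$, $\theta_t$, $b_t$ are fixed, and $a_t,\bar a_t$ are i.i.d.\ from $\tildep_t$. Write $\tilde\mu=\mu(\tildep_t)$.

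\emph{First identity.} Expand $y_t=\inner{a_t,\theta_t}+b_t$. Then
\[
\E_t[(a_t-\bar a_t)y_t]=\E_t[(a_t-\bar a_t)a_t^\top]\theta_t+b_t\,\E_t[a_t-\bar a_t].
\]
Since $a_t$ and $\bar a_t$ are identically distributed, $\E_t[a_t-\bar a_t]=0$, so the intercept $b_t$ drops out; this is exactly why the paired form handles affine rewards. Independence gives $\E_t[\bar a_t a_t^\top]=\tilde\mu\tilde\mu^\top$, hence $\E_t[(a_t-\bar a_t)a_t^\top]=\E[a_ta_t^\top]-\tilde\mu\tilde\mu^\top=\tildeM_t$.

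\emph{Second identity.} I use $|y_t|\le1$, which holds because $|r_t(a)|\le1$ on $\calA$. This gives
\[
\E_t[(a_t-\bar a_t)(a_t-\bar a_t)^\top y_t^2]\preceq\E_t[(a_t-\bar a_t)(a_t-\bar a_t)^\top].
\]
Write $a_t-\bar a_t=(a_t-\tilde\mu)-(\bar a_t-\tilde\mu)$. The cross terms vanish by independence and centering, so the right-hand side equals $2\tildeM_t$.

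\emph{Consequences.} $\tildeM_t$ is invertible because $\tildeM_t\succeq\frac\gamma dI$ by \cref{lem:gr-covariance}. Unbiasedness then follows by multiplying the first identity by $\tildeM_t^{-1}$. The second-moment bound follows by congruence with $\tildeM_t^{-1}$:
\[
\E_t[\thetahat_t^{\rm ex}(\thetahat_t^{\rm ex})^\top]\preceq\tildeM_t^{-1}(2\tildeM_t)\tildeM_t^{-1}=2\tildeM_t^{-1}.
\]
For the last claim, take the trace against $M_t$:
\[
\E_t[(\thetahat_t^{\rm ex})^\top M_t\thetahat_t^{\rm ex}]=\tr\!\bigl(M_t\,\E_t[\thetahat_t^{\rm ex}(\thetahat_t^{\rm ex})^\top]\bigr)\le2\tr(M_t\tildeM_t^{-1}).
\]
By \cref{lem:gr-covariance}, $(1-\gamma)M_t\preceq\tildeM_t$, so $\tildeM_t^{-1/2}M_t\tildeM_t^{-1/2}\preceq\frac1{1-\gamma}I$. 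The trace of this $d\times d$ matrix is therefore at most $d/(1-\gamma)$, which gives the bound $2d/(1-\gamma)$.

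There is no real obstacle here. The only points needing care are that the trace step uses positivity of $M_t$, so that the trace is monotone in the psd order, and that the bound $y_t^2\le1$ is applied inside the expectation before reducing to the covariance of the difference.
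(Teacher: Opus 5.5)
Your proposal is correct and follows essentially the same argument as the paper. You expand $y_t$ so that the intercept cancels and independence turns the mixed moment into $\tildeM_t$, bound $y_t^2\le 1$ to get $2\tildeM_t$, and then conclude by congruence with $\tildeM_t^{-1}$ and the trace bound from $(1-\gamma)M_t\preceq\tildeM_t$ in \cref{lem:gr-covariance}; your extra remarks on invertibility via $\tildeM_t\succeq\frac{\gamma}{d}I$ and trace monotonicity against a PSD $M_t$ are details the paper leaves implicit.
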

\begin{proof}
Write $\widetilde{\mu}_t=\mu(\tildep_t)$. Independence gives
\begin{align*}
 \E_t[(a_t-\bar a_t)y_t]
 &=\bigl(\E_{a\sim\tildep_t}[aa^\top]
       -\widetilde{\mu}_t\widetilde{\mu}_t^\top\bigr)\theta_t
       +b_t(\widetilde{\mu}_t-\widetilde{\mu}_t)\\
 &=\tildeM_t\theta_t.
\end{align*}
Thus pairing removes the intercept without requiring the reward of
$\bar a_t$. For the second moment, $|y_t|\le1$ implies
\begin{align*}
 \E_t[(a_t-\bar a_t)(a_t-\bar a_t)^\top y_t^2]
 \preceq\E_t[(a_t-\bar a_t)(a_t-\bar a_t)^\top]=2\tildeM_t.
\end{align*}
Multiplying by $\tildeM_t^{-1}$ proves the exact estimator's mean and
matrix second-moment bounds. Finally, \cref{lem:gr-covariance} gives
\begin{align*}
 \E_t[(\thetahat_t^{\rm ex})^\top M_t\thetahat_t^{\rm ex}]
 &=\tr\bigl(M_t\E_t[\thetahat_t^{\rm ex}(\thetahat_t^{\rm ex})^\top]\bigr)\\
 &\le2\tr(M_t\tildeM_t^{-1})\le\frac{2d}{1-\gamma}.
\end{align*}
\end{proof}
 
The last bound explains why we retain $M_t$ in the stability analysis:
the action covariance cancels the inverse covariance in the estimator's
second moment.
 
\subsection{The finite random inverse}
Fix $0<\beta\le1/(2R^2)$ and an integer $N\ge0$. Draw $N$ independent
pairs $u_i,v_i\sim\tildep_t$, independently of $a_t,\bar a_t$.
Define
\begin{align*}
 z_i&=(u_i-v_i)/\sqrt2,&
 Q_i&=I-\beta z_iz_i^\top,\\
 H_t&=\beta\sum_{k=0}^N Q_1\cdots Q_k,&
 \thetahat_t&=H_t(a_t-\bar a_t)y_t.\tag{GR}\label{eq:gr}
\end{align*}
The empty product is $I$. The construction is motivated by the matrix
geometric series
\begin{align*}
 \tildeM_t^{-1}=\beta\sum_{k=0}^\infty(I-\beta\tildeM_t)^k.
\end{align*}
An independent pair gives
$\E_t[z_iz_i^\top]=\tildeM_t$, so each random factor has mean
$I-\beta\tildeM_t$. Independence makes the mean of its ordered product
the corresponding matrix power. Truncating at $N$ gives a finite number
of oracle calls, at the price of a bias.
 
Fix a round $t$ and set $C=I-\beta\tildeM_t$.
 
\begin{lemma}[Mean and range of the random inverse]
\label{lem:gr-inverse-mean-range}
For the construction in \cref{eq:gr},
\begin{align*}
 \E_t[H_t]&=\tildeM_t^{-1}(I-C^{N+1}),
 &\norm{H_t}_{\op}&\le\beta(N+1).
\end{align*}
\end{lemma}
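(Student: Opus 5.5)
For the mean, I will condition on the history before round $t$ and use independence of the $N$ pairs $(u_i,v_i)$. Each pair gives $\E_t[z_iz_i^\top]=\frac12\E_t[(u_i-v_i)(u_i-v_i)^\top]=\tildeM_t$, since $u_i,v_i$ are i.i.d.\ from $\tildep_t$ (the same computation as in \cref{lem:gr-paired-observation}). Hence $\E_t[Q_i]=I-\beta\tildeM_t=C$. Because the factors $Q_1,\dots,Q_k$ are mutually independent, the expectation of the ordered product factorizes: $\E_t[Q_1\cdots Q_k]=C^k$. Summing gives
$$\E_t[H_t]=\beta\sum_{k=0}^N C^k.$$
Next I use the telescoping identity $(I-C)\sum_{k=0}^N C^k=I-C^{N+1}$ together with $I-C=\beta\tildeM_t$. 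The matrix $\tildeM_t$ is invertible by \cref{lem:gr-covariance}, since $\tildeM_t\succeq(\gamma/d)I\succ0$. Therefore
$$\beta\sum_{k=0}^N C^k=\tildeM_t^{-1}(I-C^{N+1}).$$
All the matrices involved are polynomials in $\tildeM_t$, so they commute and the order of multiplication does not matter.

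For the range bound, I will show $\norm{Q_i}_{\op}\le1$ almost surely. The matrix $Q_i=I-\beta z_iz_i^\top$ is symmetric, with eigenvalue $1$ on $z_i^\perp$ and eigenvalue $1-\beta\norm{z_i}_2^2$ along $z_i$. Since $u_i,v_i\in\calA\subseteq RB_2^d$, we have $\norm{z_i}_2\le 2R/\sqrt2=\sqrt2R$, so $\beta\norm{z_i}_2^2\le 2\beta R^2\le1$ when $\beta\le1/(2R^2)$. This places that eigenvalue in $[0,1]$. Submultiplicativity then gives $\norm{Q_1\cdots Q_k}_{\op}\le1$ for each $k$, and the triangle inequality over the $N+1$ terms (including the empty product $I$) yields $\norm{H_t}_{\op}\le\beta(N+1)$.

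The only step needing care is justifying that the expectation of the ordered product of the $Q_i$ factorizes. This follows from mutual independence of the pairs by iterated conditioning, and the noncommutativity of the $Q_i$ is harmless because we never reorder the factors, only take expectations one at a time. Everything else is routine.
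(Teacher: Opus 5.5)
Your proposal is correct and follows essentially the same argument as the paper. Independence of the resampling pairs gives $\E_t[Q_1\cdots Q_k]=C^k$, the geometric-sum identity turns this into $\tildeM_t^{-1}(I-C^{N+1})$, and $\beta\norm{z_i}_2^2\le 2\beta R^2\le 1$ makes each $Q_i$ a positive semidefinite contraction, so summing the $N+1$ ordered products gives $\norm{H_t}_{\op}\le\beta(N+1)$. Your explicit justification that $\tildeM_t$ is invertible and that the polynomials in $\tildeM_t$ commute fills in details the paper leaves implicit.
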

\begin{proof}
Since the two actions in each resampling pair are independent,
\begin{align*}
 \E_t[z_i z_i^\top]=\tildeM_t,\qquad
 \norm{z_i}_2^2\le2R^2,\qquad
 \beta\norm{z_i}_2^2\le1.
\end{align*}
Thus each factor $Q_i=I-\beta z_iz_i^\top$ is a positive semidefinite
contraction and has mean $C$. Independence between factors gives
\begin{align*}
 \E_t[H_t]=\beta\sum_{k=0}^NC^k=\tildeM_t^{-1}(I-C^{N+1}).
\end{align*}
Every ordered product of the $Q_i$ has operator norm at most one, even
though the product itself need not be symmetric. Summing the $N+1$
products therefore gives $\norm{H_t}_{\op}\le\beta(N+1)$.
\end{proof}
 
\begin{lemma}[Second moment of the random inverse]
\label{lem:gr-inverse-second-moment}
For the construction in \cref{eq:gr},
\begin{align*}
 \E_t\bigl[\tr(\tildeM_tH_t\tildeM_tH_t^\top)\bigr]\le2d.
\end{align*}
\end{lemma}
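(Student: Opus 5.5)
The plan is to expand the trace into a double sum over pairs of products and collapse one index by conditional expectation. I would then bound the remaining single sum by a telescoping argument built on the contraction property of the factors $Q_i$. Write $P_k=Q_1\cdots Q_k$ (with $P_0=I$) and $C=I-\beta\tildeM_t$. From the proof of \cref{lem:gr-inverse-mean-range}, each $Q_i$ is a PSD contraction with $\E_t[Q_i]=C$. Also $0\preceq C\preceq I$, since $\beta\tildeM_t\preceq\beta R^2 I\preceq I/2$ by \cref{lem:gr-covariance}. Then
\begin{align*}
 \tr(\tildeM_tH_t\tildeM_tH_t^\top)=\beta^2\sum_{j,k=0}^N\tr(\tildeM_tP_j\tildeM_tP_k^\top).
\end{align*}
The terms with $j>k$ are the transposes, hence have the same trace as, the terms with $(k,j)$. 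So it suffices to show that the sum over $j\le k$ has expectation at most $d$. Doubling this bound then covers both the $j<k$ and $j>k$ terms, and the diagonal only helps.

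For $j\le k$, write $P_k=P_jR_{j,k}$ with $R_{j,k}=Q_{j+1}\cdots Q_k$. The factor $R_{j,k}$ is independent of $P_j$ and has mean $C^{k-j}$. Conditioning on $Q_1,\dots,Q_j$ and using cyclicity gives
\begin{align*}
 \E_t\tr(\tildeM_tP_j\tildeM_tP_k^\top)=\E_t\tr\bigl(A_j\,\tildeM_tC^{k-j}\bigr),\qquad A_j=P_j^\top\tildeM_tP_j\succeq0.
\end{align*}
Summing over $k\ge j$ with one factor of $\beta$ gives the symmetric matrix $\beta\tildeM_t\sum_{m=0}^{N-j}C^m$. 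This matrix is PSD because $\tildeM_t$ and $C$ commute. It is dominated by $\beta\tildeM_t\sum_{m\ge0}C^m=I$. Since $\tr(AB)\le\tr(A)$ for $A\succeq0$ and $0\preceq B\preceq I$, the $j\le k$ part is at most $\beta\sum_{j=0}^N\E_t\tr(P_j^\top\tildeM_tP_j)$.

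It remains to show that $\beta\sum_{j\ge0}\E_t\tr(P_j^\top\tildeM_tP_j)\le d$, and this is the step needing care. The natural telescoping runs the wrong way for the ordering $P_{j+1}=P_jQ_{j+1}$. I would fix this by exchangeability: the $Q_i$ are i.i.d., so $P_j$ has the same law as the reversed product $P_j'=Q_j\cdots Q_1$. Since $P'_{j+1}=Q_{j+1}P'_j$, we get $P'^\top_{j+1}P'_{j+1}=P'^\top_jQ_{j+1}^2P'_j$. The bound $\beta\norm{z}_2^2\le1$ gives $Q^2=I-2\beta zz^\top+\beta^2\norm z_2^2zz^\top\preceq I-\beta zz^\top$, so $\E[Q^2]\preceq I-\beta\tildeM_t$. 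Taking expectations, conditioning on $P'_j$, yields
\begin{align*}
 \beta\,\E_t\tr(P'^\top_j\tildeM_tP'_j)\le\E_t\tr(P'^\top_jP'_j)-\E_t\tr(P'^\top_{j+1}P'_{j+1}).
\end{align*}
Summing telescopes to at most $\tr(I)=d$. Combined with the doubling from the first paragraph, this gives the claimed bound $2d$.
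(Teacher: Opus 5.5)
Your proof is correct and follows essentially the same route as the paper. You expand $H_t$, collapse the $j\le k$ terms by conditioning on the prefix and using $\beta\tildeM_t\sum_{m=0}^{N-j}C^m\preceq I$ to reduce to $2\beta\sum_{j}\E_t\tr(\tildeM_t P_jP_j^\top)$, and then telescope Frobenius norms via $\E_t[Q_i^2]\preceq I-\beta\tildeM_t$. The only difference is cosmetic: to put the new factor on the left, the paper writes $P_{j+1}=Q_1(Q_2\cdots Q_{j+1})$ and uses that the suffix has the same law as $P_j$, whereas you reverse the whole product. Both versions rely only on the $Q_i$ being i.i.d., and both need an auxiliary independent factor $Q_{N+1}$ for the last telescoping step, which you should state explicitly.
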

\begin{proof}
Write
\begin{align*}
 V_0=I,\qquad V_i=Q_1\cdots Q_i,\qquad
 H_t=\beta\sum_{i=0}^N V_i,\qquad C=I-\beta\tildeM_t.
\end{align*}
For the analysis, extend the sequence by one independent factor $Q_{N+1}$.
First, for $j\ge i$, conditioning on $Q_1,\ldots,Q_i$ gives
\begin{align*}
 \E_t\bigl[\tr(\tildeM_t V_i\tildeM_t V_j^\top)\bigr]
 =\E_t\bigl[\tr(\tildeM_t V_i C^{j-i}\tildeM_t V_i^\top)\bigr].
\end{align*}
Because
\begin{align*}
 \sum_{r=0}^{N-i}C^r\tildeM_t
 =\frac{I-C^{N-i+1}}{\beta}\preceq\frac{I}{\beta},
\end{align*}
expanding the square and pairing its off-diagonal terms yields
\begin{align}
 \E_t\bigl[\tr(\tildeM_t H_t\tildeM_t H_t^\top)\bigr]
 &\le2\beta^2\sum_{i=0}^N\sum_{j=i}^N
       \E_t\bigl[\tr(\tildeM_t V_i\tildeM_t V_j^\top)\bigr]\notag\\
 &\le2\beta\sum_{i=0}^N
       \E_t\bigl[\tr(\tildeM_t V_iV_i^\top)\bigr].
 \label{eq:gr-inverse-product-sum}
\end{align}
The first inequality counts the nonnegative diagonal terms twice.
 
Next, since $0\preceq Q_i\preceq I$,
\begin{align*}
 \E_t[Q_i^2]\preceq\E_t[Q_i]=I-\beta\tildeM_t.
\end{align*}
Writing $V_{i+1}=Q_1(Q_2\cdots Q_{i+1})$, the suffix is independent of
$Q_1$ and has the same distribution as $V_i$. Therefore, with
$\norm{\cdot}_F$ denoting the Frobenius norm,
\begin{align}
 \E_t\norm{V_{i+1}}_F^2
 &=\tr\bigl(\E_t[Q_1^2]\,\E_t[V_iV_i^\top]\bigr)\notag\\
 &\le\E_t\norm{V_i}_F^2
       -\beta\E_t\bigl[\tr(\tildeM_t V_iV_i^\top)\bigr].
 \label{eq:gr-inverse-product-decay}
\end{align}
Substituting \cref{eq:gr-inverse-product-decay} into
\cref{eq:gr-inverse-product-sum} telescopes:
\begin{align*}
 \E_t\bigl[\tr(\tildeM_t H_t\tildeM_t H_t^\top)\bigr]
 &\le2\sum_{i=0}^N\bigl(
       \E_t\norm{V_i}_F^2-\E_t\norm{V_{i+1}}_F^2\bigr)\\
 &=2\bigl(d-\E_t\norm{V_{N+1}}_F^2\bigr)\le2d.
\end{align*}
\end{proof}
 
\subsection{Proofs of the estimator guarantees}
\begin{proof}[Proof of \cref{thm:estimator-bounds}]
Fix a round $t$. The random inverse $H_t$ is conditionally independent
of $(a_t,\bar a_t,y_t)$. Thus
\cref{lem:gr-paired-observation,lem:gr-inverse-mean-range} give
\begin{align*}
 \E_t[\thetahat_t]
 &=\E_t[H_t]\,\E_t[(a_t-\bar a_t)y_t]\\
 &=\tildeM_t^{-1}(I-C^{N+1})\tildeM_t\theta_t
 =(I-C^{N+1})\theta_t,
\end{align*}
where $C$ commutes with $\tildeM_t$. By \cref{lem:gr-covariance},
$0\preceq C\preceq(1-\beta\gamma/d)I$ and $\norm{\theta_t}_2\le1/r$.
Therefore
\begin{align*}
 \norm{\E_t[\thetahat_t]-\theta_t}_2
 \le(1-\beta\gamma/d)^{N+1}\norm{\theta_t}_2
 \le e^{-\beta\gamma(N+1)/d}/r.
\end{align*}
The range bound in \cref{lem:gr-inverse-mean-range}, together with
$\norm{a_t-\bar a_t}_2\le2R$ and $|y_t|\le1$, gives
\begin{align*}
 \norm{\thetahat_t}_2
 \le\norm{H_t}_{\op}\norm{a_t-\bar a_t}_2|y_t|
 \le2R\beta(N+1).
\end{align*}
 
For the second moment, independence and
\cref{lem:gr-paired-observation} imply
\begin{align*}
 \E_t[\thetahat_t\thetahat_t^\top]
 &=\E_t\bigl[H_t\E_t[(a_t-\bar a_t)(a_t-\bar a_t)^\top y_t^2]H_t^\top\bigr]\\
 &\preceq2\E_t[H_t\tildeM_tH_t^\top].
\end{align*}
Taking the trace against $\tildeM_t$ and using
\cref{lem:gr-inverse-second-moment}, we obtain
\begin{align*}
 \E_t[\thetahat_t^\top\tildeM_t\thetahat_t]
 \le2\E_t\bigl[\tr(\tildeM_tH_t\tildeM_tH_t^\top)\bigr]\le4d.
\end{align*}
Since $M_t\preceq\tildeM_t/(1-\gamma)$ by
\cref{lem:gr-covariance}, this also gives
\begin{align*}
 \E_t[\thetahat_t^\top M_t\thetahat_t]\le\frac{4d}{1-\gamma}.
\end{align*}
\end{proof}

\section{Proof of the Final Regret Bound}\label[appendix]{app:final-proof}

\begin{lemma}[Initial potential]\label{lem:initial-potential}
Suppose $\calA\subseteq RB_2^d$ and $\lambda>0$. Let
$\rho_1=P_{\lambda I}$ be any of the Gaussian, radial-Laplace, or
Student-$t_2$ laws in
\cref{tab:convolution-initializers,tab:elliptical-updates}. Then
\begin{align*}
 \Psi_1(0)\le2R\sqrt{d\lambda}.
\end{align*}
\end{lemma}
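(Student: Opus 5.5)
Since $h(0+Z)=\max_{a\in\calA}\inner{a,Z}\le R\norm{Z}_2$ by Cauchy--Schwarz and $\calA\subseteq RB_2^d$, the plan is to bound
\[
\Psi_1(0)=\E_{Z\sim\rho_1}[h(Z)]\le R\,\E_{Z\sim\rho_1}\norm Z_2 .
\]
This reduces the lemma to showing $\E\norm Z_2\le 2\sqrt{d\lambda}$ for each of the three laws with shape $\lambda I$. By \cref{lem:elliptical-radial-representation} we may write $Z=\sqrt\lambda\,SU$, where $U$ is uniform on the sphere, so that $\norm Z_2=\sqrt\lambda\,S$. It therefore suffices to show $\E[S]\le 2\sqrt d$ in each case, which we verify distribution by distribution.

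\textbf{Gaussian.} By Jensen, $\E[S]\le\sqrt{\E[S^2]}=\sqrt d$, because $S^2$ is $\chi^2_d$.

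\textbf{Radial Laplace.} The proof of \cref{lem:laplace-shifted-information} already computes $\E[S]=m_d/m_{d-1}=\sqrt d$.

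\textbf{Student-$t_2$.} The second moment is infinite, so Jensen is unavailable and we bound the first moment directly, which is the only case needing a real estimate. Using the normalizer \cref{eq:student-two-radial-normalizer} and the radial integration formula \cref{eq:common-radial-integration},
\[
\E[S]=d\int_0^\infty r^d(1+r^2)^{-(d+2)/2}\,\d r .
\]
We split the integral at $r=\sqrt d$.
\begin{itemize}
\item[(a)] On $[0,\sqrt d]$, factor $r\le\sqrt d$ out of the integrand $r\cdot r^{d-1}(1+r^2)^{-(d+2)/2}$ and then extend the remaining integral to all of $[0,\infty)$. By the normalizer this gives at most $d\cdot\sqrt d\cdot\frac1d=\sqrt d$.
\item[(b)] On $[\sqrt d,\infty)$, apply the tail bound of \cref{lem:student-two-radial-tail} with $\sfL=\sqrt d\ge1$. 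This gives $\E[S\mathbf 1\{S>\sqrt d\}]\le d/\sqrt{1+d}\le\sqrt d$.
\end{itemize}
Summing (a) and (b) yields $\E[S]\le 2\sqrt d$.

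Hence $\E\norm Z_2\le2\sqrt{d\lambda}$ in all three cases, and $\Psi_1(0)\le 2R\sqrt{d\lambda}$.
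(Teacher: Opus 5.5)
Your proof is correct and follows the paper's argument essentially step for step. You reduce to $\E\norm{Z}_2\le 2\sqrt{d\lambda}$ via $h(z)\le R\norm z_2$, use Cauchy--Schwarz (Jensen) for the Gaussian, use the exact moment $\E[S]=\sqrt d$ for the radial Laplace, and for Student-$t_2$ split at $\sqrt d$ and apply \cref{lem:student-two-radial-tail} to the tail; your part (a) is just a more explicit version of the paper's bound $S\mathbf 1\{S\le\sqrt d\}\le\sqrt d$.
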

\begin{proof}
Since $h(z)=\max_{a\in\calA}\inner{a,z}\le R\norm z_2$, it suffices
to show
\begin{equation}\label{eq:initializer-first-moments}
 \E_{Z\sim\rho_1}\left[\norm Z_2\right]\le2\sqrt{d\lambda}.
\end{equation}
For Gaussian,
$\E[ZZ^\top]=\lambda I$, so Cauchy--Schwarz gives
\begin{align*}
 \E\left[\norm Z_2\right]\le\sqrt{\E\left[\norm Z_2^2\right]}=\sqrt{d\lambda}.
\end{align*}
For radial Laplace, let $S=\norm Z_2/\sqrt\lambda$.
The radial moment calculation in the proof of
\cref{lem:laplace-shifted-information} gives $\E[S]=\sqrt d$, so
\begin{align*}
 \E\left[\norm Z_2\right]=\sqrt\lambda\,\E[S]=\sqrt{d\lambda}.
\end{align*}
For Student-$t_2$, let $S=\norm Z_2/\sqrt\lambda$.
Splitting the expectation at $\sqrt d$ and applying
\cref{lem:student-two-radial-tail} gives
\begin{align*}
 \E [S]
 &\le\sqrt d+\E\left[S\mathbf1_{\{S>\sqrt d\}}\right]
 \le\sqrt d+\frac d{\sqrt{1+d}}
 \le2\sqrt d.
\end{align*}
Multiplying by $\sqrt\lambda$ proves \cref{eq:initializer-first-moments}
for this family as well, and hence proves the initial-potential bound.
\end{proof}

\begin{proof}[Proof of \cref{cor:final-multiplier-bounds}]
If the proposed $\gamma>1/2$, bounded rewards give
\begin{align*}
 \Reg_T(a)\le2T<16R(L+1)d^{3/4}\sqrt T,
\end{align*}
which satisfies \cref{eq:final-regret-bound}. Henceforth assume
$\gamma\le1/2$. In either construction, write
$\Sigma_t=\lambda I+\sum_{\tau<t}\thetahat_\tau\thetahat_\tau^\top$.
The scale choice ensures
$\norm{\thetahat_t}_{\Sigma_t^{-1}}^2\le G^2/\lambda\le1/64$.
The elliptical potential lemma gives
\begin{align*}
 \sum_{t=1}^T\norm{\thetahat_t}_{\Sigma_t^{-1}}^2
 &\le \calO\left(\log\frac{\det\Sigma_{T+1}}{\det\Sigma_1}\right) \le \calO\left(d\log\left(1+\frac{TG^2}{d\lambda}\right)\right)
 =\widetilde{\calO}(d).
\end{align*}
The scale in \cref{eq:final-scale-choices} satisfies the hypotheses of
\cref{cor:A-multiplier,cor:B-multiplier,cor:A-elliptical,cor:B-elliptical}.
Applying those corollaries and summing their $T^{-1}$ remainders yields
\begin{align*}
 \E\left[\sum_{t=1}^T(A_t+B_t)\right]
 &\lesssim \sqrt{\log(dRGT)}
 \sqrt{\E\left[\sum_{t=1}^T\thetahat_t^\top M_t\thetahat_t\right]}
 \sqrt{\E\left[\sum_{t=1}^T\thetahat_t^\top\Sigma_t^{-1}\thetahat_t\right]}+1\\
 &\le\widetilde{\calO}(\sqrt{dT}\sqrt d)
 =\widetilde{\calO}(d\sqrt T).
\end{align*}
 
By \cref{lem:initial-potential}, $\Psi_1(0)\le2R\sqrt{d\lambda}$.
Combining \cref{eq:general-regret} with the stability and bias bounds gives
\begin{align*}
 \Reg_T(a)
 &\le2R\sqrt{d\lambda}
       +\widetilde{\calO}(d\sqrt T)+\tfrac12+2\gamma T \le\widetilde{\calO}\left(
       d\sqrt T+\frac{R^2d^{3/2}}{\gamma}+\gamma T\right).
\end{align*}
Substituting \cref{eq:parameters} proves \cref{eq:final-regret-bound}.
\end{proof}

\section{Computational Complexity Comparison}\label[appendix]{app:computational-comparison}
 
We compare the computational complexity of existing oracle-efficient adversarial linear bandit algorithms. This includes our \cref{alg:ftpl}, GeometricHedge with volumetric-spanner
exploration \citep{hazan2016volumetric}, and the algorithm
of \citet{ito2019oracle}.

In the \cref{tab:computational-comparison} below, we present the number of oracle calls of $\calA$ and arithmetic operations besides these oracle calls. These numbers are the total complexity over $T$ rounds. 
Both \citet{hazan2016volumetric} and \citet{ito2019oracle} involve sampling from a log-concave distribution, and we equip them with the state-of-the-art log-concave sampling procedure by \cite{liu2026near}. Hence, the computational complexity reported below is actually better than reported in their paper. 
The detailed calculation of the table entries are provided in the following subsections. 

The comparison in \cref{tab:computational-comparison} concerns convex action sets $\calA$. For nonconvex action sets, our FTPL approach works without modification since a linear optimization oracle for $\calA$ is equivalent to a linear optimization oracle for $\conv(\calA)$. In contrast, the approaches of \citet{hazan2016volumetric} and \citet{ito2019oracle} involve two additional computational procedures:  first, converting a linear optimization oracle for $\calA$ into a membership oracle or a separation oracle for $\conv(\calA)$; second, decomposing points in $\conv(\calA)$ into convex combinations of feasible actions in $\calA$. Both introduce overhead.

 
 \begin{table}[ht]
\centering
\small
\setlength{\tabcolsep}{3pt}
\renewcommand{\arraystretch}{1.5}
\resizebox{\linewidth}{!}{%
\begin{tabular}{|l|c|c|c|c|c|}
\hline
\multirow{2}{*}{Method}
  & \multicolumn{3}{c|}{Oracle calls to $\calA$}
  & \multirow{2}{*}{\makecell{Arithmetic\\outside oracles}}
  & \multirow{2}{*}{Regret}\\
\cline{2-4}
  & \makecell{Linear\\optimization}
  & Membership
  & Separation
  & & \\
\hline
Our elliptical FTPL
  & $d^2+d^{3/4}T^{3/2}$
  & $0$
  & $0$
  & $d^4+d^{11/4}T^{3/2}$
  & $d^{5/4}\sqrt{T}$\\
\hline
\makecell[l]{GeometricHedge with\\
volumetric spanner\\
\citep{hazan2016volumetric}\\
+ \citet{liu2026near}'s sampling}
  & $d^2$
  & $d^2T^{5/2}$
  & $0$
  & $d^4T^{5/2}$
  & $d\sqrt{T}$\\
\hline
\makecell[l]{\cite{ito2019oracle}\\
+ \citet{liu2026near}'s sampling}
  & $0$
  & $0$
  & $dT$
  & $d^5T^3+d^4T^4$
  & $d^{3/2}\sqrt{T}$\\
\hline
\end{tabular}%
}
\caption{Comparison on the oracle complexity and the computational complexity beyond oracle calls for three linear bandit algorithms over convex action set $\calA$. The complexities of \cite{hazan2016volumetric} and \cite{ito2019oracle} are improved from the original work through the use of the state-of-the-art log-concave sampling procedure \citep{liu2026near}. For example, the membership oracle complexity of \cite{hazan2016volumetric} is of order $d^3T^{7}$ in their work, but improved to $d^2 T^{5/2}$ in our table. }
\label{tab:computational-comparison}
\end{table}
 
 
\subsection{Our algorithm}
 
Each round of \cref{alg:ftpl} calls \texttt{SampleAction} $2N+2$
times, each requiring one linear optimization oracle call. Generating an elliptical
perturbation and updating its Cholesky factor each cost $\calO(d^2)$.
The resampling calculation also costs $\calO(d^2)$ per sample because
it is a rank-one update:
\[
 V_i=V_{i-1}-\beta(V_{i-1}z_i)z_i^\top,
 \qquad H_t=\beta\sum_{i=0}^N V_i.
\]
Thus the online costs are $\calO(T(N+1))$ linear optimization oracle calls and
$\calO(d^2T(N+1))$ arithmetic operations. By \cref{eq:parameters},
\[
 N+1=\left\lceil\frac{L}{2(L+1)}R d^{1/4}\sqrt T\right\rceil
     =\calO(d^{3/4}\sqrt T)
 \qquad\text{when }R=4\sqrt d.
\]
The barycentric-spanner preprocessing in
\cref{thm:spanner-normalization} adds
$\widetilde{\calO}(d^2)$ linear optimization oracle calls and
$\widetilde{\calO}(d^4)$ arithmetic operations. These give the first
row of the table.

\subsection{Estimating mean and covariance of a log-concave distribution}
 
Both \cite{hazan2016volumetric} and \cite{ito2019oracle} rely on sampling from log-concave distributions. We first specify what must be estimated, then
count the samples and the cost of generating them.
For a target distribution $p(a)\propto \exp(\theta^\top a)\mathbf{1}_{\{a\in\calA\}}$, we aim to produce approximation of the mean and covariance such that 
\begin{align*}
    \|\widehat{\mu} - \mu(p)\|_{\mathrm{Cov}(p)^{-1}} \leq \varepsilon, \qquad (1-\varepsilon)\mathrm{Cov}(p) \preceq \widehat{M} \preceq (1+\varepsilon)\mathrm{Cov}(p). \numberthis \label{eq: estimation error 1}
\end{align*}

By Theorem 4.1 of \citet{liu2026near}, a log-concave distribution can be transformed into near-isotropic position using $\widetilde{\calO}(d^3)$ potential evaluations. In these coordinates, Theorem C.25 provides samples at a cost of $\widetilde{\mathcal O}(d^2)$ unnormalized density evaluations (i.e., evaluate $\exp(\theta^\top a)\mathbf{1}_{\{a\in\calA\}}$) per sample, with initialization handled by Proposition A.9. The concentration argument in Lemma A.10 (also used in Lemma 4.3), with accuracy parameter $\varepsilon$, shows that $\widetilde{\mathcal O}(d\varepsilon^{-2})$ samples suffice to estimate the mean and covariance to the accuracy required in \cref{eq: estimation error 1}. Thus, since each unnormalized density evaluation for $p_\theta$ requires one membership-oracle call to $\mathcal A$, these results imply a total complexity of $\widetilde{\mathcal O}(d^3\varepsilon^{-2})$ membership-oracle calls.

\subsection{GeometricHedge with volumetric spanner and \cite{liu2026near}'s sampling procedure}
 
The algorithm of \cite{hazan2016volumetric} samples from $p_t(a) \propto \exp(\eta \Thetahat_t^\top a)\mathbf{1}_{\{a\in\calA\}}$ with probability $1-\gamma$ and from a volumetric spanner with probability $\gamma$ in every round. 
By \citet[Theorem~5]{hazan2016volumetric},
$\calO(d+\log^2 T)$ independent samples from $p_t$ suffice to construct the volumetric spanner in each round. 
 
Let $\varepsilon$ be the error specified in \cref{eq: estimation error 1}. 
Approximate inversion of the covariance matrix introduces a bias in $\widehat\theta_t$.
The calculation below bounds its effect on any action's expected
estimated loss by
$\calO(\varepsilon d/\sqrt{\gamma})$ per round. We choose $\varepsilon$ so
that this error is at most order $\gamma$, the exploration cost
already present in the regret bound. Thus take
\begin{equation}\label{eq:complexity-hk-alpha}
 \varepsilon=c\frac{\gamma^{3/2}}{d},
 \qquad \eta\asymp\sqrt{\frac{\log(2T)}{T}},
 \qquad \gamma=d\eta,
\end{equation}
with a sufficiently small universal constant $c$. Substitution gives
\[
 \varepsilon^{-2}
 =\calO\!\left(\frac{d^2}{\gamma^3}\right)
 =\widetilde{\calO}(T^{3/2}/d). 
\]
Thus, the number of membership oracle calls per round is
$\widetilde{\calO}(d^3\varepsilon^{-2})
=\widetilde{\calO}(d^2T^{3/2})$. Multiplying by $T$ rounds gives
$\widetilde{\calO}(d^2T^{5/2})$ membership queries. Multiplying
by the $\calO(d^2)$ arithmetic cost per evaluation gives
$\widetilde{\calO}(d^4T^{5/2})$ arithmetic operations.
 
Before the first round, an affine barycentric spanner costs
$\widetilde{\calO}(d^2)$ linear optimization oracle calls and
$\widetilde{\calO}(d^4)$ arithmetic operations. It supplies an
interior ball and an enclosing radius with ratio $\calO(d^2)$
in suitable coordinates. Adding up these costs give the second row of the table. 
 
\paragraph{Regret overhead due to estimation error.}
For $\varepsilon\le1/4$, the relative error in $\widehat M_t$ implies
\[
 (1-\varepsilon)\widetilde{M}_t\preceq\widehat M_t\preceq(1+\varepsilon)\widetilde{M}_t,
 \qquad
 \|\widetilde{M}_t^{1/2}\widehat M_t^{-1}\widetilde{M}_t^{1/2}-I\|_{\rm op}
 \le2\varepsilon, 
\]
where $\widetilde{M}_t$ is the covariance matrix under the mixture of $p_t$ and volumetric spanner with weights $1-\gamma$ and $\gamma$, respectively. 
Conditioning on the history, we have 
$\E_{a_t}[\thetahat_t]=\widehat M_t^{-1}\widetilde{M}_t\theta_t$.
The loss bound gives $\|\theta_t\|_{\widetilde{M}_t}\le1$, and the fixed spanner
gives $\max_{a\in\calA}\|a\|_{\widetilde{M}_t^{-1}}
=\calO(d/\sqrt{\gamma})$. Therefore
\begin{equation}\label{eq:complexity-hk-bias}
 \max_{a\in\calA}|a^\top(\E_{a_t}[\thetahat_t]-\theta_t)|
 \le2\varepsilon\max_{a\in\calA}\|a\|_{\widetilde{M}_t^{-1}}\|\theta_t\|_{\widetilde{M}_t}
 =\calO(\varepsilon d/\sqrt{\gamma}).
\end{equation}
 
 
\subsection{\cite{ito2019oracle}'s algorithm}
 
The algorithm of \citet{ito2019oracle} maintains an outer
approximation $D_{t,j}\supseteq \calA$ in stage $j$ of round $t$, and refine $D_{t,j}$ over time and stages by adding cuts. At the stage $j$ of round $t$, it estimates the
mean and covariance of the log-concave distribution $p_{t,j}(a)\propto \exp(\eta\Thetahat_t a) \mathbf{1}_{\{a\in D_{t,j}\}}$. 
Using these estimates, it constructs $2d$ candidates near the
estimated mean and tests them with the separation oracle of $\calA$.
If a candidate lies outside $\calA$, a separating inequality is added
to $D_{t,j}$ and the mean and covariance are recomputed with the updated $D_{t,j+1}$. 
 
Let $s_t$ be the number of cuts added in round $t$ and
$S_T=\sum_{t=1}^T s_t$. There are $T+S_T$ moment-estimation stages
and at most $2d$ separation queries per stage, so the number of calls to the separation oracles of $\calA$ is bounded by $2d(T+S_T)$. 
Section~5.2 of \citet{ito2019oracle} establishes $S_T=\calO(T)$, so the number of separation oracle calls is of order $\calO(dT)$.
 
By \citet[Theorem~5]{ito2019oracle}, the required accuracy for \cref{eq: estimation error 1} is
$\varepsilon=1/(12eT)$. The moment
estimation procedure described above therefore uses
$\widetilde{\calO}(d^3T^2)$ unnormalized density evaluations per
stage, including rounding under the stated geometric assumptions. Recall that this unnormalized density estimation is to evaluate $\exp(\eta \Thetahat_t a)\mathbf{1}_{\{a\in D_{t,j}\}}$, where the indicator $\mathbf{1}_{\{a\in D_{t,j}\}}$ involves the membership query to $D_{t,j}$ but not the original action set $\calA$. Hence, this unnormalized density estimation is counted as arithmetic operation but not oracle calls. At state $j\in[s_t]$ in round $t$, an evaluation of $\mathbf{1}_{\{a\in D_{t,j}\}}$ requires testing the
$S_{t-1}+j$ stored inequalities, which
costs $\calO(d^2+d(S_{t-1}+j))$ arithmetic operations. Summing over all rounds and all stages gives
\begin{equation}\label{eq:complexity-ito-arithmetic-retained-cuts}
\begin{aligned}
\widetilde{\calO}\!\left(
  d^3T^2\sum_{t=1}^T\sum_{j=0}^{s_t}
  [d^2+d(S_{t-1}+j)]\right)=\widetilde{\calO}\!\left(
 d^4(d+S_T)(T+S_T)T^2\right).
\end{aligned}
\end{equation}
Hence, for $S_T=\calO(T)$, the total number of arithmetic operations is $\widetilde{\calO}(d^5T^3 + d^4 T^4)$. 
\end{document}